\documentclass{jair}
\usepackage{algorithm}
\usepackage{algorithmicx}
\usepackage{algpseudocode}
\usepackage{stmaryrd}
\usepackage{graphicx}
\usepackage{subcaption}
\usepackage{xspace}
\usepackage{tikz}
\usepackage{booktabs}
\theoremstyle{definition}
\newtheorem{definition}{Definition}[section]
\newtheorem{lemma}{Lemma}
\newtheorem{remark}{Remark}
\usepackage{wrapfig}
\usepackage{eucal}
\usepackage{array}
\usepackage{standalone}
\usepackage{hyperref}
\usepackage{xcolor}
\usepackage{enumitem}
\newcommand{\vitax}{\textsc{ViTaX}\xspace}

\setcopyright{cc}
\copyrightyear{2026}
\acmYear{2026}
\acmDOI{10.1613/jair.1.xxxxx}

\JAIRAE{Eleonora Giunchiglia}
\JAIRTrack{Integration of Logical Constraints in Deep Learning}
\acmVolume{4}
\acmArticle{111}
\acmMonth{5}
\acmYear{2026}

\begin{document}

\title{Towards Verified and Targeted Explanations through Formal Methods}

\author{Hanchen David Wang}
\email{hanchen.wang.1@Vanderbilt.Edu}
\orcid{0000-0001-5990-5865}
\affiliation{%
  \institution{Vanderbilt University}
  \city{Nashville}
  \state{Tennessee}
  \country{USA}
}

\author{Diego Manzanas Lopez}
\email{diego.manzanas.lopez@vanderbilt.edu}
\affiliation{%
  \institution{Vanderbilt University}
  \city{Nashville}
  \state{Tennessee}
  \country{USA}
}

\author{Preston K. Robinette}
\email{preston.k.robinette@vanderbilt.edu}
\affiliation{%
  \institution{Vanderbilt University}
  \city{Nashville}
  \state{Tennessee}
  \country{USA}
}

\author{Ipek Oguz}
\email{ipek.oguz@Vanderbilt.Edu}
\affiliation{%
  \institution{Vanderbilt University}
  \city{Nashville}
  \state{Tennessee}
  \country{USA}
}

\author{Taylor T. Johnson}
\email{taylor.johnson@vanderbilt.edu}
\affiliation{%
  \institution{Vanderbilt University}
  \city{Nashville}
  \state{Tennessee}
  \country{USA}
}

\author{Meiyi Ma}
\email{meiyi.ma@Vanderbilt.Edu}
\affiliation{%
  \institution{Vanderbilt University}
  \city{Nashville}
  \state{Tennessee}
  \country{USA}
}
\renewcommand{\shortauthors}{Wang et al.}

\begin{abstract}
As deep neural networks are deployed in safety-critical domains such as autonomous driving and medical diagnosis, stakeholders need explanations of model behavior that are not only interpretable but also trustworthy with formal guarantees. Existing XAI methods fall short of this requirement: heuristic attribution techniques (e.g., LIME, Integrated Gradients) highlight influential features for individual predictions but offer no mathematical guarantees about decision boundaries, while formal explanation methods verify robustness properties yet remain untargeted, analyzing the nearest boundary regardless of whether it represents a critical risk. In safety-critical systems, however, not all misclassifications carry equal consequences; confusing a ``Stop'' sign for a ``60\,kph'' sign is far more dangerous than confusing it with a ``No Passing'' sign. Practitioners therefore lack a principled way to answer a fundamental safety question: \emph{how resilient is a model's classification against a specific, high-risk alternative?}
We introduce \textbf{\vitax{}} (Verified and Targeted Explanations), a formal XAI framework that addresses this gap by generating targeted semifactual explanations with mathematical guarantees. For a given input (class~$y$) and a user-specified critical alternative (class~$t$), \vitax{} performs two key steps: \textbf{(1)}~it identifies the minimal feature subset most sensitive to the $y \to t$ transition using class-specific sensitivity heuristics, and \textbf{(2)}~it applies formal reachability analysis to guarantee that perturbing these features by~$\epsilon$ is insufficient to flip the classification to~$t$. This guarantee constitutes a verified semifactual: ``even if these critical features change by~$\epsilon$, classification~$y$ persists against~$t$.'' We formalize this reasoning through \textit{Targeted $\epsilon$-Robustness}, a formal property that certifies whether an identified feature subset remains robust under perturbation toward a specific target class.
By unifying semifactual explanations, class-specific targeting, and formal verification, \vitax{} is the first method to provide formally guaranteed explanations of a model's resilience against specific, user-identified alternatives. Our evaluations on image classification (MNIST, GTSRB, EMNIST) and regression (TaxiNet) demonstrate that \vitax{} achieves significantly higher fidelity (e.g., over 30\% improvement) and minimal explanation cardinality compared to existing methods. These results establish \vitax{} as a scalable and trustworthy foundation for verifiable, targeted XAI\footnote{Our code is publicly available at \url{https://github.com/AICPS-Lab/formal-xai}.}.
\end{abstract} 



\received{31 May 2025}
\received[revised]{02 November 2025}
\received[accepted]{03 March 2026}

\maketitle

\section{Introduction}\label{sec:intro}

\definecolor{RiskOrange}{HTML}{FF9B33}
\definecolor{RiskRed}{HTML}{FF3233}

The deployment of Deep Neural Networks in safety-critical domains, such as autonomous driving and medical diagnosis, demands more than high accuracy; it requires trustworthiness. When failures occur, the consequences can be severe, necessitating rigorous methods to understand and verify model behavior. Formal methods in eXplainable AI (Formal XAI) have emerged to address this need, moving beyond heuristic estimations (e.g., LIME \cite{ribeiro2016should}, Integrated Gradients \cite{sundararajan_axiomatic_2017}) to provide mathematical guarantees about model decisions \cite{marques-silva_delivering_2022,jiang_formalising_2023}. More broadly, formal specifications such as temporal logic have been integrated into neural networks to enforce properties in sequential prediction~\cite{ma2020stlnet}, federated settings~\cite{an2024formal}, and predictive monitoring of cyber-physical systems~\cite{ma2021predictive}.

However, existing Formal XAI approaches face a critical limitation: they often ignore the asymmetric nature of real-world risk. In safety-critical systems, not all misclassifications are equal. For instance, confusing one speed limit sign for another might be minor, but confusing a `Stop' sign for a `60 kph' sign is catastrophic. Furthermore, formal verification is computationally expensive. It is therefore crucial to allocate these limited resources effectively, prioritizing the decision boundaries that matter most.

Many current formal explanation methods are inherently constrained by the \emph{nearest} decision boundary in the model's feature space, regardless of whether that boundary represents a critical risk. 
This limitation is illustrated in Figure~\ref{fig:motivation}(A). A traffic sign recognition model correctly identifies a `Stop' sign (Class 14). The nearest boundary is `No Passing' (Class 10), a low risk. The critical boundary is `60 kph' (Class 5), a high risk. 
When applying existing formal methods, the analysis often defaults to the nearest boundary. Methods providing instance-level explanations, such as robust semifactuals and counterfactuals for Human-Neural Multi-Agent Systems (HNMAS) (Figure~\ref{fig:motivation}(B)) \cite{leofante2023robust}, generate modified inputs but lack fine-grained feature insights and drift towards the nearest boundary (Class 10). Untargeted formal explanation methods, such as VeriX (Figure~\ref{fig:motivation}(C)) \cite{wu_verix_2023}, aim to verify robustness against \emph{any} alternative class. Constrained by Class 10, VeriX produces a broad explanation, offering no insight into the resilience against the catastrophic Class 5. These methods fail to answer the crucial question for safety-critical deployment: ``How resilient is the model's decision against a specific, high-risk alternative?''

To address this issue, we must distinguish between explanations of \emph{fragility} and explanations of \emph{resilience}. Counterfactual Explanations (CFEs) explain fragility by answering ``if-only'' questions: ``If only these features changed, the decision would flip'' \cite{guidotti2024counterfactual}. We focus instead on \textbf{\textit{semifactual}} explanations, which explain resilience by answering ``even-if'' questions: ``Even if these features change, the decision persists'' \cite{aryal2023even, kenny2023utility}.

We introduce \vitax{} (Verified Targeted Explanations), a novel framework that provides formally verified, targeted semifactual explanations. \vitax{} addresses the limitations of existing methods by prioritizing verification resources on user-specified, critical decision boundaries. For a given input (class $y$) and a user-specified critical alternative (class $t$), \vitax{} performs two key steps: \textbf{(1)} It identifies the minimal feature subset $A$ that is \emph{most sensitive} to the specific $y \to t$ transition. \textbf{(2)} It uses formal reachability analysis to \emph{guarantee} that perturbing these sensitive features by a magnitude $\epsilon$ is insufficient to flip the classification to $t$.

As shown in Figure~\ref{fig:motivation}(D), \vitax{} ignores the nearest boundary (Class 10) and focuses analysis on the critical boundary (Class 5). The resulting explanation is precise, highlighting only the features relevant to the Stop $\to$ 60 kph transition. The perturbed logits confirm that the analysis is focused on Class 5. This provides a formal guarantee of resilience (``even-if'') at the specific boundary that matters most.

\begin{figure*}[t]
    \centering
    \includegraphics[width=0.95\linewidth]{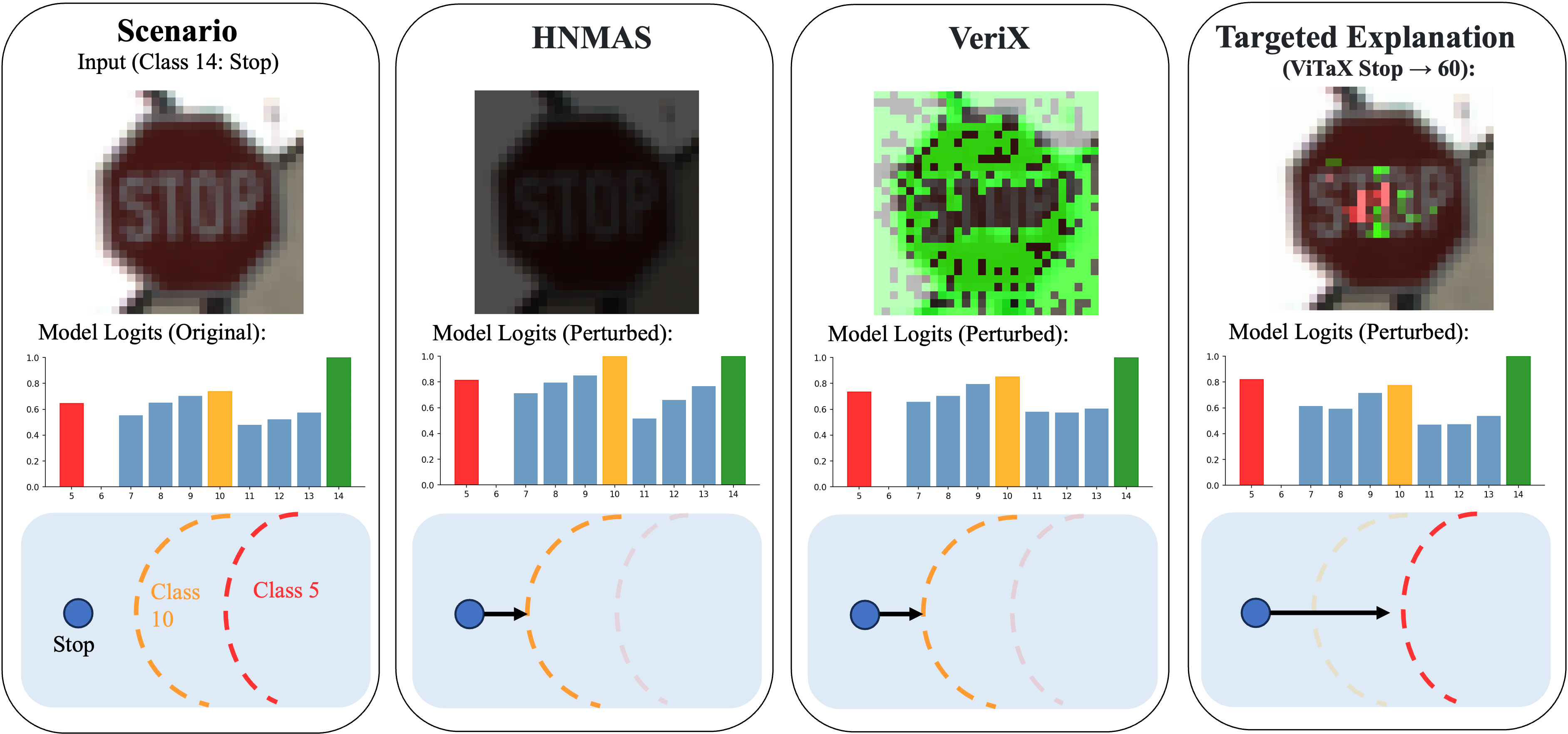}

    \caption{\textbf{The Necessity of Targeted Verification: Prioritizing Critical Boundaries over Nearest Boundaries.} All methods use the same base model (GTSRB CNN). Formal verification is expensive and must be deployed strategically.
    \textbf{(A) Scenario:} A model correctly classifies 'Stop' (Class 14). The decision space has asymmetric risks: Class 10 (`No Pass') is the \textit{nearest} boundary (low risk), while Class 5 (`60 kph') is the \textit{critical} boundary (high risk). Risk levels are defined by the application domain (e.g., traffic safety regulations) and provided by the user; \vitax{} then directs verification resources toward the user-specified critical boundary.
    \textbf{(B) HNMAS} (Formal Instance-level) \cite{leofante2023robust}: Provides instance-level semifactual explanations via robustness for multi-agent systems. The analysis is constrained by local geometry, and the evaluation drifts (arrow) towards the nearest boundary (Class 10).
    \textbf{(C) VeriX} (Untargeted Formal) \cite{wu_verix_2023}: Provides feature-level explanations of general resilience. The analysis is constrained by the nearest boundary, resulting in a broad explanation and evaluation drift (arrow) towards Class 10.
    \textbf{(D) \vitax{}} (Targeted Formal): Provides feature-level explanations of targeted resilience. \vitax{} focuses resources (arrow) on the critical boundary, resulting in a precise explanation and evaluation drift towards Class 5. In safety-critical deployment, understanding resilience against high-risk alternatives (Class 5, `60 kph') is essential, even when other boundaries (Class 10, `No Pass') are closer but less dangerous. \vitax{} provides guarantees where they matter most.
    }
    \label{fig:motivation}
\end{figure*}

Our main contributions are:
\begin{itemize}[nosep]
    \item We introduce \vitax{}, the first framework to combine semifactual explanations (``even-if'' robustness) with class-specific targeting ($y \to t$ transitions) and formal verification. This addresses a critical gap where existing methods either lack formal guarantees or ignore asymmetric risks in safety-critical decision boundaries.
    \item We define \textit{Targeted $\epsilon$-Robustness}, a novel formal property for characterizing the robustness of explanations. This property captures the resilience of a model against a specific alternative class under feature perturbations, enabling precise, verified characterization of class-specific decision boundaries.
    \item We propose a novel integration of sensitivity-driven heuristics and formal verification, utilizing an efficient search strategy ($\mathcal{O}(\log N)$ oracle calls) that makes targeted formal explanations computationally tractable, providing a practical alternative to intractable exact methods.
    \item We demonstrate on diverse tasks (image classification and regression) that \vitax{} significantly outperforms existing heuristic and formal methods (including VeriX, HNMAS, LIME, Anchors) by providing precise, high-fidelity insights into critical decision boundaries, achieving over 30\% improvement in explanation fidelity while maintaining minimal cardinality.
\end{itemize}

\subsection*{Paper Organization}
The remainder of this paper is structured as follows. Section~\ref{sec:reachability_solver_verification} lays the formal groundwork, introducing key concepts like standard $\epsilon$-robustness and defining our core notion of \textit{Targeted $\epsilon$-Robustness}. Section~\ref{sec:method} then details the \vitax{} framework, including its heuristic ranking, formal feature search algorithm for identifying explanations that satisfy Targeted $\epsilon$-Robustness, and the soundness of these explanations. Section~\ref{sec:experiment} presents a comprehensive empirical evaluation of \vitax{} across various datasets and tasks, comparing it against several baseline methods using metrics such as fidelity, cardinality, and robustness. We then situate our work within the broader XAI landscape in Section~\ref{sec:related_work} by reviewing related literature on attribution methods, formal XAI, causality, and explainability metrics. Section~\ref{sec:discussion} further discusses the unique nature and value of \vitax{}'s guarantees, including their connection to semifactual explanations, and thoroughly compares \vitax{} with contrastive explanation paradigms and research in adversarial robustness. Finally, Section~\ref{sec:conclusion} concludes the paper by summarizing our contributions, reiterating the impact of \vitax{}, and outlining promising directions for future research. Appendices provide supplementary details including solver methods, proofs, additional experimental results, and model specifications.

\section{From Standard to Targeted \texorpdfstring{$\epsilon$}{ε}-Robust Explanations}\label{sec:reachability_solver_verification}

In this section, we introduce the foundational concepts and formal problem formulation essential for understanding \vitax{}. Specifically, we formalize the concept of \textit{resilience} from our introduction, providing a mathematical basis for quantifying a model's ability to maintain its decisions under targeted perturbations.
These preliminaries establish the basis for our proposed method, which leverages formal reachability analysis to offer specific robustness guarantees pertinent to the identified explanations of neural network behavior. We present these concepts assuming the neural network input is an image with width $w$, height $h$, and $c$ color channels, in a classification task with $m$ classes.

Consider a deep learning model $f: \mathbb{R}^{w\times h\times c} \rightarrow \mathbb{R}^m$, where the function $f$ maps an $(w\times h\times c)$-dimensional input image to an $m$-dimensional output vector (representing min-max normalized logits).
The inputs and outputs of $f$ define the feature space and the prediction space, respectively. To formally assess the model's output stability under specific input perturbations, we employ a reachability solver, denoted as $\mathcal{V}(f, X_\epsilon, \Phi)$.
Here, $X_\epsilon \subseteq \mathbb{R}^{w\times h\times c}$ represents a continuous, infinite set of input images derived from an original input by applying perturbations up to a magnitude of $\epsilon$ (detailed in Definition~\ref{def:standard_robustness_def}). The term $\Phi$ represents the formal specification, a property we aim to verify, which in our context is a form of decision resilience.
The solver evaluates the model $f$ over the entire input set $X_\epsilon$ to compute the corresponding reach set of outputs $Y_\epsilon \subseteq \mathbb{R}^m$. That is, $f(X_\epsilon) = Y_\epsilon$, where $Y_\epsilon$ encapsulates all possible logit vectors for each class given the perturbed inputs.
This output reach set $Y_\epsilon$ is then projected into a one-dimensional representation $y_\epsilon$. This projection consists of a set of intervals, one for each class, indicating the lower ($l$) and upper ($u$) bounds of the predicted logits for that class.
For instance, $y_\epsilon = [y_1, y_2, ..., y_m]$ where $y_1 = [l_1, u_1], y_2 = [l_2, u_2], ..., y_{m} = [l_{m}, u_{m}]$. This projection is illustrated in the rightmost part of Figure~\ref{fig:overview}, where one example might depict a resilient sample (clear separation between the true class interval and others) and another a non-robust sample (overlapping intervals).
The reachability solver then formally evaluates whether the specification $\Phi$ is satisfied by this projected output $y_\epsilon$. In this work, $\Phi$ is primarily concerned with a notion of $\epsilon$\textit{-Robustness}, which we first define in its standard form before introducing our specific refinement for targeted explanations.

\begin{definition}[\textbf{Standard $\epsilon$\textit{-Robust Explanation}}]\label{def:standard_robustness_def}
Consider an input $x \in \mathbb{R}^{w\times h\times c}$ with true class $y \in \{1,2,..., m\}$. For a perturbation size $\epsilon > 0$ and a norm $p \in \{1, 2, \infty\}$, we define the input perturbation set $X_\epsilon = \{x' \in \mathbb{R}^{w\times h\times c} : \|x' - x\|_p \leq \epsilon\}$, where $\|\cdot\|_p$ denotes the $\mathcal{L}_{p}$ norm.
A model is \textbf{$\epsilon$-locally robust} at input $x$ if, for all perturbed inputs in $X_\epsilon$, the predicted class remains $y$. In terms of the projected reach set $Y_\epsilon$, this means the interval corresponding to the true class $\mathbf{y}$ does not overlap with the interval of any other class $\mathbf{k}$ \cite{tran2019star}. More formally, the lower bound of the logits for the true class, $l_\mathbf{y}$, must be strictly greater than the upper bound of the logits for any other class $u_\mathbf{k}$, as described in Equation~\ref{eq:robustness}. Here, $\mathbf{y}$ is the true class, and $\mathbf{k}$ represents any other class. This standard definition provides a baseline measure of the model's general resilience, quantifying its ability to maintain a decision against untargeted disturbances.
\end{definition}%
\begin{equation}
    \forall \mathbf{k} \in \{1,2,\ldots, m\} \wedge \mathbf{k} \neq \mathbf{y}, l_\mathbf{y} > u_\mathbf{k}
    \label{eq:robustness}
\end{equation}%

\begin{definition}[\textbf{Targeted $\epsilon$\textit{-Robust Explanation}}]\label{def:targeted_robustness_def}
Building upon this general notion of resilience, we now define \textbf{Targeted $\epsilon$-Robustness}, the formal property central to \vitax{}. An explanation satisfying this property is defined as a \textit{Targeted $\epsilon$-Robust Explanation}. Consider an input $x$ (true class $\mathbf{y}$) and a specific \emph{target class} $\mathbf{t} \neq \mathbf{y}$. This property verifies the model's semifactual robustness: whether features can be perturbed without flipping the classification. The ``targeting'' refers to feature selection: subset $A$ is chosen (via the heuristic described in Section \ref{sec:computingvitax}) because its features are most sensitive to potential transitions towards class $\mathbf{t}$.

The model is considered \textbf{targeted $\epsilon$-robust} concerning class $\mathbf{t}$ at input $x$ with respect to a feature subset $A \subseteq \{1,2,\ldots, n\}$ (where $n = w \times h \times c$ is the total number of features), if, when only these features in $A$ are perturbed within $X_{\epsilon, A}$ (i.e., $\| x'_A - x_A \|_p \leq \epsilon$ for features in $A$, and $x'_F = x_F$ for features $F \notin A$), the model's prediction for the original class $\mathbf{y}$ remains distinguishable from the target class $\mathbf{t}$.
Specifically, the lower bound of the projected logits for the true class $\mathbf{y}$ (considering perturbations only on $A$), denoted $l_{\mathbf{y}, A}$, must be strictly greater than the upper bound of the projected logits for the targeted class $\mathbf{t}$, $u_{\mathbf{t}, A}$. This property, defined in Equation~\ref{eq:targeted_robustness}, signifies that perturbing the feature set $A$ alone (up to $\epsilon$) in any direction within the $\epsilon$-ball is formally verified to be \emph{insufficient} to make the target class $\mathbf{t}$ more likely than or indistinguishable from the true class $\mathbf{y}$. This formal guarantee constitutes a verified semifactual statement: ``Even if features in $A$ are perturbed by $\epsilon$ (in any direction), the classification $\mathbf{y}$ persists and remains distinguishable from $\mathbf{t}$.''

\begin{equation}
\begin{aligned}
&\exists \, A \subseteq \{1, 2, \ldots, n\},  \mathbf{t} \neq \mathbf{y}, \quad \text{s.t.} \quad \| x'_A - x_A \|_p \leq \epsilon
\wedge  l_{\mathbf{y}, A} > u_{\mathbf{t}, A}.
\end{aligned}
\label{eq:targeted_robustness}
\end{equation}

The \textit{targeted} refers to how $A$ is selected (using sensitivity toward $\mathbf{t}$ as a heuristic, Eq. 3), not to the direction of perturbations in the verification. The formal guarantee holds for perturbations in any direction within the $\epsilon$-ball on features $A$. And an optional dominance constraint ($\forall k \neq y, t: u_{t,A} > u_{k,A}$) 
can ensure $\mathbf{t}$ is the most prominent alternative, though our evaluation 
uses only the core condition above (Section \ref{sec:computingvitax} and Appendix \ref{appendix:not_robust_to_others}).

\end{definition}

\begin{figure*}[t!]
    \centering
    \includegraphics[trim={0 0.55cm 0 0.4cm}, clip,width=0.85\textwidth]{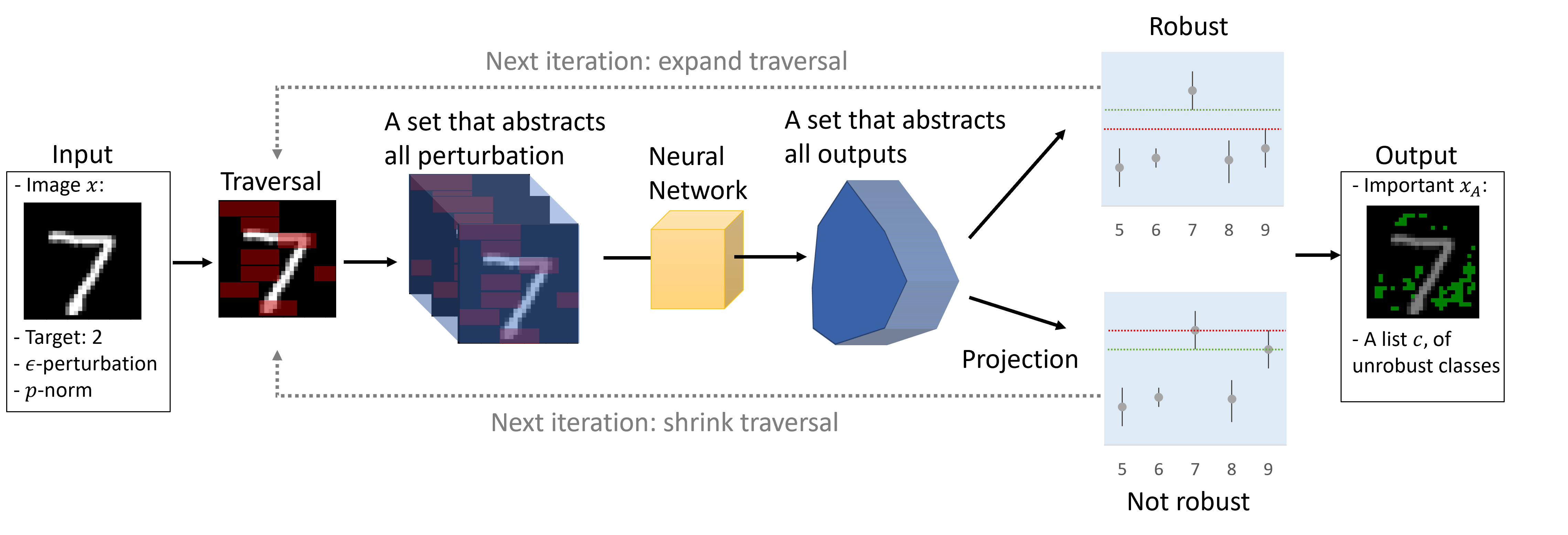}
    \caption{\vitax{} takes an initial input image of `7', a target class \( \mathbf{t} \) of `2', a perturbation \(\epsilon\), and a norm \( p \). Using a heuristic ranking, \vitax{} identifies a subset of input pixels to be $\epsilon$-perturbed, generating a subset of important input features.}
    \label{fig:overview}
\end{figure*}
\section{\vitax{}}\label{sec:method}

To quantify a model's decision resilience efficiently, we present a novel framework called \vitax{} that builds upon $\epsilon$-perturbation-based approaches, as shown in Figure~\ref{fig:overview}. Similar to solving a cardinality minimal problem \cite{ignatiev2019abduction,gainer2017minimal,ignatiev2021sat}, we incorporate a heuristic ranking function to perform a binary search based on sensitivity, solving the problem in \(\log_2(n)\) time. Our approach differs in three key aspects: 
(1) Instead of evaluating features individually and iteratively, we focus on understanding the correlations between input features;
(2) we use $\epsilon$-ball perturbations and the sensitivity of the output with respect to the input to find targeted explanations;
(3) we emphasize using a star-based reachability solver to generate explanations that are most \textit{sensitive} to the model. The \vitax{} framework is described in more detail below.

\newcommand{\createnodes}[3]{
    \foreach \row in {1,...,#2} {
        \foreach \col in {1,...,#3} {
            \pgfmathtruncatemacro{\num}{(\row - 1) * #3 + \col}
            \node at (\col*0.5 - 0.25, #2*0.5 + 0.25 - \row*0.5) {\( #1_{\scriptstyle \num} \)};
        }
    }
}
\definecolor{lightgreen}{rgb}{0.56, 0.93, 0.56}
\definecolor{lightgrey}{rgb}{0.83, 0.83, 0.83}

\begin{wrapfigure}{R}{.5\textwidth} 
    \centering
    \tiny
    \begin{tabular}{ccc}
        \parbox{2.2cm}{\centering \textbf{Step 1:} Test $\pi[1..16]$ \\ (Not Robust)
            \begin{tikzpicture}[scale=.90]
                \fill[lightgreen] (0,0) rectangle (2,2);
                \draw[step=0.5cm,color=gray] (0,0) grid (2,2);
                \createnodes{x}{4}{4}
            \end{tikzpicture}
        } &
        \parbox{2.2cm}{\centering \textbf{Step 2:} Test $\pi[1..8]$ \\ (Not Robust)
            \begin{tikzpicture}[scale=.90]
                \fill[lightgreen] (0,1) rectangle (2,2);
                \draw[step=0.5cm,color=gray] (0,0) grid (2,2);
                \createnodes{x}{4}{4}
            \end{tikzpicture}
        } &
        \parbox{2.2cm}{\centering \textbf{Step 3:} Test $\pi[1..4]$ \\ (Robust)
            \begin{tikzpicture}[scale=.90]
                \fill[lightgreen] (0,1.5) rectangle (2, 2);
                \fill[lightgrey] (0,0) rectangle (2, 1);
                \draw[step=0.5cm,color=gray] (0,0) grid (2,2);
                \createnodes{x}{4}{4}
            \end{tikzpicture}
        } \\
        \\ 
        \parbox{2.2cm}{\centering \textbf{Step 4:} Test $\pi[1..6]$ \\ (Not Robust)
            \begin{tikzpicture}[scale=.90]
                \fill[lightgreen] (0,1) rectangle (1, 1.5);
                \fill[lightgreen] (0,1.5) rectangle (2, 2);
                \fill[lightgrey] (0,0) rectangle (2, 1);
                \draw[step=0.5cm,color=gray] (0,0) grid (2,2);
                \createnodes{x}{4}{4}
            \end{tikzpicture}
        } &
        \parbox{2.2cm}{\centering \textbf{Step 5:} Test $\pi[1..5]$ \\ (Robust)
            \begin{tikzpicture}[scale=.90]
                \fill[lightgreen] (0,1) rectangle (.5, 1.5);
                \fill[lightgreen] (0,1.5) rectangle (2, 2);
                \fill[lightgrey] (1,1) rectangle (2, 1.5);
                \fill[lightgrey] (0,0) rectangle (2, 1);
                \draw[step=0.5cm,color=gray] (0,0) grid (2,2);
                \createnodes{x}{4}{4}
            \end{tikzpicture}
        } &
        \parbox{2.2cm}{\centering \textbf{Step 6:} Final Result \\ ($A=\pi[1..5]$)
            \begin{tikzpicture}[scale=.90]
                \fill[lightgreen] (0,1) rectangle (.5, 1.5);
                \fill[lightgreen] (0,1.5) rectangle (2, 2);
                \fill[lightgrey] (.5,1) rectangle (2, 1.5);
                \fill[lightgrey] (0,0) rectangle (2, 1);
                \draw[step=0.5cm,color=gray] (0,0) grid (2,2);
                \createnodes{x}{4}{4}
            \end{tikzpicture}
        }
    \end{tabular}
    \caption{Computing the explanation towards one particular class using reachability solver for a simple input $x = [1, .. 16]$. Green is the current set of features that solver is evaluating. Gray denotes the irrelevant feature and white is undetermined features.  }
    \label{fig:grid_walkthrough}
\end{wrapfigure}
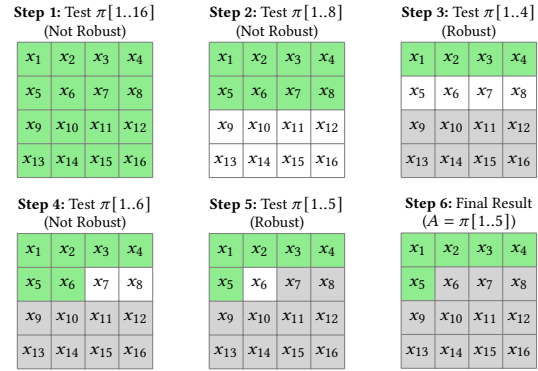

\subsection{A Running Example}\label{sec:running_example}

Before presenting the algorithm in detail, we first illustrate it via an example. Suppose $x$ is the 16-features input $[x_1, .. x_{16}]$ as shown in Figure~\ref{fig:grid_walkthrough}, having a classification model $f$, a perturbation magnitude $\epsilon$, and $p = \infty$. For simplicity, in this example, the features are ordered $x_1$ to $x_{16}$ from most to least sensitive. Initially, we might evaluate the entire set of features for \textit{robustness} with respect to a \textit{target class} $\mathbf{t}$ under $\epsilon$-perturbation. (In practice, Algorithm \ref{alg:method} uses a binary search and doesn't necessarily start with the whole set if a smaller set is already assumed non-robust under large $\epsilon$, or robust if $\epsilon$ is very small). Concretely, when a candidate subset of features $d$ (e.g., $x_1$ to $x_{16}$) is evaluated, \vitax{} composes a pre-condition $X_{\epsilon, d}$ representing the $\epsilon$-perturbation applied only to these features. This $X_{\epsilon, d}$ is passed to a reachability solver to compute the output reach set $Y_\epsilon$. A verifier then checks if this reach set implies the post-condition of \textit{Targeted $\epsilon$-Robustness} (Definition \ref{def:targeted_robustness_def}), as per Lines \ref{alg:method:spec2} - \ref{alg:method:spec2.2} of Algorithm \ref{alg:method}.

Assume the verifier returns that the full set ($x_1$ to $x_{16}$) is not robust. Following the binary search logic, \vitax{} would then test a smaller prefix, say the first half: $x_1$ to $x_8$. The features $x_9$ to $x_{16}$ are temporarily held as outside this current test prefix (as shown on the top middle of Figure \ref{fig:grid_walkthrough}).

In the third grid of Figure \ref{fig:grid_walkthrough}, let's assume the solver then returns that this prefix of eight features ($x_1$ to $x_8$) is also not robust. This indicates that if a robust prefix exists among the most sensitive features, it must be shorter than eight features. Therefore, the binary search adjusts its upper bound, effectively excluding features $x_9$ to $x_{16}$ (now colored grey) from consideration as part of the maximal robust prefix because the current search focuses on prefixes of length less than eight. The algorithm would then proceed to evaluate an even smaller prefix, for instance, $x_1$ to $x_4$, while features $x_5$ to $x_8$ are held as unspecified for this iteration.

If, subsequently, the subset $x_1$ to $x_4$ is found to be robust (resilience holds), the algorithm would then attempt to find a larger robust prefix by exploring features between $x_4$ and $x_8$ (e.g., testing $x_1$ to $x_6$, then $x_1$ to $x_5$). This iterative process continues.

Eventually, \vitax{} identifies a subset $A = [x_1, x_2, x_3, x_4, x_5]$ as the largest prefix of the ranked features for which Targeted $\epsilon$-Robustness holds (bottom right grid of Figure \ref{fig:grid_walkthrough}). This means that for this set $A$, perturbing its features by $\epsilon$ (in the direction of sensitivity towards $\mathbf{t}$) formally guarantees that the original class remains distinguishable from the target class $\mathbf{t}$ (as per Definition \ref{def:targeted_robustness_def}), and any minimal expansion of this set $A$ (by including the next most sensitive feature, $x_6$ in this example) would cause this guarantee to fail.

\subsection{Computing \vitax{} Explanations}\label{sec:computingvitax}

\vitax{} consists of three main stages: 1) \textbf{heuristic ranking}, 2) \textbf{feature search}, and 3) \textbf{final assessment} (as illustrated in Section~\ref{sec:running_example}). We describe each of these components in more detail below, followed by Algorithm \ref{alg:method}.

\usetikzlibrary{decorations.pathreplacing,calc}
\newcommand*{\AddNote}[4]{%
    \begin{tikzpicture}[overlay, remember picture]
        \draw [decoration={brace,amplitude=0.5em},decorate,thin, black]
            ($(#3)!(#1.north)!($(#3)-(0,3)$)+(+3.5,0)$) --  
            ($(#3)!(#2.south)!($(#3)-(0,3)$)+(+3.5,0)$)
                node [align=center, text width=2.5cm, pos=0.5, anchor=west, xshift=-0.2cm] {#4};
    \end{tikzpicture}
}%
\newcommand{\tikzmark}[1]{\tikz[overlay,remember picture] \node (#1) {};}

\textbf{(1) Heuristic Ranking (Saliency):} To effectively analyze a subset of features, we first introduce a heuristic ranking method based on feature sensitivity to produce a traversal order for the reachability solver, inspired by the occlusion and saliency methods \cite{zeiler_visualizing_2013,simonyan_deep_2014,wu_verix_2023,ghorbani2019interpretation}. Sensitivity (Saliency) is quantified by the partial derivatives of the network's output with respect to each input feature for a target class. For a neural network \(f\) and input vector \(x = [x_1,x_2 \ldots, x_n]\), the sensitivity for the \(i\)-th feature \(x_i\) is defined as:
\begin{equation}
    \Call{HeuristicRanking}{x, \mathbf{t}} = \Call{S}{x, \mathbf{t}} = \nabla_x f_{\mathbf{t}}(x)
\end{equation}

\begin{wrapfigure}{R}{0.5\textwidth}
\begin{minipage}{0.5\textwidth}
\begin{algorithm}[H]
\caption{\vitax{} (Reachability eXplanation)}
\label{alg:method}
\begin{algorithmic}[1]

\Statex\hspace{-1.5em}\textbf{Input:} neural network $f$, input $x$
\Statex \hspace{-1.5em}\textbf{Parameters:} target class $\mathbf{t}$, norm $p$, perturbation $\epsilon$
\Statex \hspace{-1.5em}\textbf{Output:} important feature $x_A$, unsatisfied class $c$

\Function{ViTaX}{} \tikzmark{right}
    \tikzmark{top1}
    \State $\pi \mapsfrom \Call{HeuristicRanking}{x, \mathbf{t}}$  \label{alg:method:3}  \tikzmark{bot1}
    \State $I \mapsfrom 0$ \label{alg:method:1} \tikzmark{top2a}
    \State $J \mapsfrom n$ \label{alg:method:2}
    \State $A, A_{candidate}  \mapsfrom \emptyset$ \label{alg:method:4} \tikzmark{bottom2a}
    \While{$I \leq J$} 
        \State $u \mapsfrom (I + J) / 2$ \label{alg:method:7} \tikzmark{top2b}
        \State $d \mapsfrom \pi[0:u]$ \label{alg:method:8} 
        \State $X^p_{\epsilon,d} \mapsfrom \| x' - x \|^d_p \leq \epsilon $  \label{alg:method:spec1} \tikzmark{bottom2b}
        \State $\Phi \mapsfrom  l_{\mathbf{y}, d} > u_{\mathbf{t}, d}$ \label{alg:method:spec2}  \tikzmark{top2c}
        \For{$\mathbf{k} \in \{1, \dots, m\} \setminus \{\mathbf{y}, \mathbf{t}\}$} \label{alg:method:spec2.1}
        \State $ \Phi \mapsfrom \Phi \wedge u_{\mathbf{t}, d} > u_{\mathbf{k}, d}$ \label{alg:method:spec2.2} 
      \EndFor                                                                                  
        \State $(\text{FLAG}, c) \mapsfrom$ \Call{$\mathcal{V}$}{$f$, $X_{\epsilon,d}, \Phi$}  \tikzmark{bottom2c}
        \If{\text{FLAG}} \tikzmark{top2d} 
            \State $I \mapsfrom u +1$ \label{alg:method:igetsu}
                \State $A_{candidate} \mapsfrom \pi[0: u]$
        \Else
            \State $J \mapsfrom u - 1$ \label{alg:method:jgetsu}
        \EndIf  \tikzmark{bottom2d}
    \EndWhile 
    \State $A \mapsfrom A_{candidate} $ \label{alg:method:atoPi} \tikzmark{top3}
    \State \Return $x_A, c$ \tikzmark{bottom3}
\EndFunction
\end{algorithmic}
\AddNote{top1}{bot1}{right}{\textbf{(1)}}
\AddNote{top2a}{bottom2a}{right}{\textbf{(\textit{2a})}}
\AddNote{top2b}{bottom2b}{right}{\textbf{(\textit{2b})}}
\AddNote{top2c}{bottom2c}{right}{\textbf{(\textit{2c})}}
\AddNote{top2d}{bottom2d}{right}{\textbf{(\textit{2d})}}
\AddNote{top3}{bottom3}{right}{\textbf{(3)}}
\end{algorithm}
\end{minipage}
\end{wrapfigure}

The gradient of the output with respect to each feature measures the influence of each input component on the model's decision. A high magnitude indicates a significant impact, while the sign indicates the change direction. We utilize the sensitivity as a ranking $\pi \in \mathbb{R}^n$ of all features based on the absolute magnitude of their sensitivities. We then rank the feature indices from most sensitive to least sensitive. In the running example (Section~\ref{sec:running_example}), this corresponds to the assumed ordering $x_1$ to $x_{16}$ from most to least sensitive. For instance, consider an input $x = [x_1, x_2, x_3]$. We evaluate the sensitivity of each feature to create the heuristic ranking $\pi = [x_3, x_1, x_2]$ s.t. $|S(x_3)| \geq |S(x_1)| \geq |S(x_2)|$. This heuristic is motivated by the intuition that features exhibiting a high gradient magnitude towards the target class $\mathbf{t}$ are most likely to influence a potential transition towards $\mathbf{t}$. Prioritizing such features provides a relevant ordering for efficiently identifying a minimal critical set $A$ that satisfies our Targeted $\epsilon$-Robustness condition.

\textbf{(2) Feature Search:} To identify the minimal subset of the most sensitive features (as ranked by $\pi$) that are critical for the transition towards target $\mathbf{t}$, we aim to determine the largest prefix $d$ of $\pi$ (i.e., $\pi[0:u]$) for which Targeted $\epsilon$-Robustness (Definition~\ref{def:targeted_robustness_def}) still holds. 
Consider a subset of features $x_d$ corresponding to indices $d \subset \{1,2, \ldots, n\}$. An $\epsilon$-perturbation applied only to this subset is represented as $X_{\epsilon, d} = \{x' : \|x' - x\|^d_p \leq \epsilon \text{ and } x'_k=x_k \text{ for } k \notin d\}$. Our objective is to find the largest set $d$ (largest $u$) such that $\mathcal{V}(f, X_{\epsilon,d}, \Phi)$ returns true for the specification $\Phi$ derived from Definition~\ref{def:targeted_robustness_def}.

To efficiently determine the largest prefix $d$ satisfying our formal condition, we employ a binary search strategy over the heuristically ranked features $\pi$. This approach is motivated by the need to minimize queries to the computationally intensive formal solver $\mathcal{V}$ (requiring only a logarithmic number of calls) while still guaranteeing the identification of the maximal prefix that meets the specification $\Phi$. The ``loss of robustness'' occurs when attempting to include an additional feature from $\pi$ causes this condition to fail. The search process, detailed as steps (2a)-(2d) in Algorithm~\ref{alg:method}, systematically finds this boundary. Each Step in Figure~\ref{fig:grid_walkthrough} corresponds to one full iteration of this loop (2b$\to$2c$\to$2d): Step~1 tests $\pi[1..16]$, Step~2 tests $\pi[1..8]$, Step~3 tests $\pi[1..4]$, Step~4 tests $\pi[1..6]$, Step~5 tests $\pi[1..5]$, and Step~6 shows the final result.

\textit{\textbf{(2a)} Initialization (Lines \ref{alg:method:1}-\ref{alg:method:4}):} Set search pointers $I=0$ (start of $\pi$) and $J=n$ (end of $\pi$). $A$ will store the resulting feature subset.

\textit{\textbf{(2b)} Feature Selection (Lines \ref{alg:method:7}-\ref{alg:method:spec1}):}  In each iteration of the binary search, compute the midpoint $u$ of the current interval $[I, J]$. Select the top $u$ features from $\pi$ as the current candidate subset $d = \pi[0:u]$. Define the perturbation set $X^p_{\epsilon,d}$ for these features.

\textit{\textbf{(2c)} Verification (Lines \ref{alg:method:spec2}-\ref{alg:method:spec2.2} and following solver call):}
Construct the formal specification $\Phi$. The primary condition, $\Phi_{target}$, checks if $l_{\mathbf{y}, d} > u_{\mathbf{t}, d}$ (from Equation~\ref{eq:targeted_robustness}), ensuring the original class $\mathbf{y}$ remains distinct from the target class $\mathbf{t}$ when only features in $d$ are perturbed. Moreover, an optional dominance constraint $\Phi_{others}$ verifies that $u_{t,d} > u_{k,d}$ for all other classes $k \neq y, t$. This ensures the explanation unambiguously targets $\mathbf{t}$ rather than another alternative $\mathbf{k}$. However, this constraint can be restrictive when multiple classes are closely clustered in logit space. Our evaluation uses only $\Phi_{target}$, providing valid semifactual explanations for robustness against $\mathbf{t}$ without requiring $\mathbf{t}$ to dominate all alternatives. Cases where both constraints apply are analyzed in Appendix~\ref{appendix:not_robust_to_others}. The specification $\Phi$ is then verified using the solver $\mathcal{V}(f,X_{\epsilon,d},\Phi)$.

\textit{\textbf{(2d)} Update (Following \textbf{If} statement):}
If FLAG is false (meaning $\Phi$ is not satisfied for subset $d$), or specifically if $\Phi_{target}$ is not met, it implies that perturbing the current subset $d$ breaks the model's resilience against $\mathbf{t}$ (or another class $\mathbf{k}$ became more prominent than $\mathbf{t}$). Thus, $d$ is too large or contains features that cause this loss of targeted robustness. The search space is narrowed by setting $J \mapsfrom u - 1$.
If FLAG is true, the subset $d$ satisfies the Targeted $\epsilon$-Robustness (and the conditions for other classes $\mathbf{k}$).  We then try to find an even larger robust subset by searching in the right half: $I \mapsfrom u + 1$. In Figure~\ref{fig:grid_walkthrough}, Step~3 (robust at $\pi[1..4]$) triggers expansion to Step~4, and Step~5 (robust at $\pi[1..5]$) confirms the final boundary.

\textbf{(3) Final Assessment (Line \ref{alg:method:atoPi} and Return):} The binary search continues until $I > J$. The set $A_{candidate}$ from the last successful verification (largest $u$ for which $\Phi_{target}$ held) is the desired set of important features $A$. This corresponds to Step~6 of Figure~\ref{fig:grid_walkthrough}, where $A = \pi[1..5]$ is the final result. If no such $A_{candidate}$ was found (e.g., even perturbing the single most sensitive feature causes loss of Targeted $\epsilon$-Robustness), $A$ would be empty. The algorithm returns $x_A$ (the features corresponding to indices in $A$) and $c_{candidate}$ (classes $\mathbf{k}$ that were not robustly less likely than $\mathbf{t}$ for the final $A$). The heatmap for explanation is generated from $x_A$, showing features whose $\epsilon$-perturbation (in the direction of sensitivity) maintains the condition in Definition~\ref{def:targeted_robustness_def}. If the model is inherently non-robust or highly sensitive near input $x$, \vitax{} will correctly return a very small or empty feature set $A$. Conversely, if the model is robust at input $x$ for the given target class $\mathbf{t}$, the algorithm will find a large set $A$, indicating that many features can be simultaneously perturbed by $\epsilon$ without flipping the classification to $\mathbf{t}$. Therefore, an empty $A$ arises exclusively from non-robustness and serves as a diagnostic signal that the model's decision boundary is fragile at that input. In such cases, reducing $\epsilon$ may yield a non-empty explanation, as a smaller perturbation magnitude can better find the robustness guarantee for the most sensitive features. Because the heuristic concentrates the most critical features at the top of the ranking, a smaller maximal robust prefix (lower cardinality) indicates that the vulnerability is tightly isolated, yielding a highly focused, minimal explanation of the decision boundary.

\subsection{Formal Guarantees of \vitax{} Explanations}
\label{sec:Soundness}
In formal XAI, \textit{soundness} implies that an explanation accurately reflects the model's reasoning, while \textit{completeness} means it captures all necessary aspects of that reasoning \cite{haufe2024explainable, key2025sat, marques-silva_delivering_2022, paul2024formal}. 
For \vitax{}, the explanation $x_A$ is sound if it genuinely represents a feature set satisfying our definition of Targeted $\epsilon$-Robustness.
We include the definitions, rigorous proofs, and empirical evaluations for Lemma~\ref{lemma:feature_inclusive_property} and Remark~\ref{theorem:subset_feature_inclusive_property} and Theorem~\ref{theorem:soundness} in Appendix \ref{appendix:lemma:feature_inclusive_property}, \ref{appendix:remark:subset_feature_inclusive_property}, \ref{appendix:theorem:soundness}, respectively. To simplify notation in these proofs, the input is treated as a flattened vector of length $n = w \times h \times c$.

\begin{lemma}[Feature Inclusion Property] \label{lemma:feature_inclusive_property}
Given a model \( f: \mathbb{R}^n \rightarrow \mathbb{R}^m \), for \( y = f(x) \) and \( i \in \{1,2, \ldots, n\} \), define the set \( X_{\epsilon, i} \) as:
\begin{equation}
    X_{\epsilon, i} = \{x' \in \mathbb{R}^n : \|x' - x\|^i \leq \epsilon\} \text{ (perturbation on } i^{th} \text{ feature only)}
\end{equation}
Then, the reachability property follows:
\begin{equation}
Y_\epsilon = f(X_\epsilon) \supseteq \bigcup_{i=1}^n f(X_{\epsilon, i})
\end{equation}
The output range resulting from perturbing all features simultaneously up to $\epsilon$ is a superset of the union of the output ranges resulting from perturbing each feature $i$ individually.
\end{lemma}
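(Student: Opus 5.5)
The plan is to reduce the statement to a containment of input sets and then use monotonicity of images. If $S \subseteq T$, then $f(S) \subseteq f(T)$ for any function $f$. Images also commute with unions: $f\bigl(\bigcup_i S_i\bigr) = \bigcup_i f(S_i)$. So it suffices to show $X_{\epsilon,i} \subseteq X_\epsilon$ for every $i \in \{1,\dots,n\}$. Then $\bigcup_{i=1}^n X_{\epsilon,i} \subseteq X_\epsilon$, and applying $f$ gives
$$\bigcup_{i=1}^n f(X_{\epsilon,i}) = f\Bigl(\bigcup_{i=1}^n X_{\epsilon,i}\Bigr) \subseteq f(X_\epsilon) = Y_\epsilon.$$
No property of $f$, such as continuity or piecewise linearity, is needed. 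The lemma is purely set-theoretic once the input sets are pinned down.

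The key step is to fix the meaning of $X_{\epsilon,i}$. I read it, as in the definition of $X_{\epsilon,d}$ in Section~\ref{sec:computingvitax} with $d=\{i\}$, as
$$X_{\epsilon,i} = \{x' : |x'_i - x_i| \le \epsilon,\ x'_k = x_k \text{ for } k \ne i\}.$$
For the single-coordinate restriction, every $\mathcal{L}_p$ norm reduces to the absolute value. Take any $x' \in X_{\epsilon,i}$. The difference $x'-x$ has exactly one possibly nonzero coordinate, the $i$-th, so $\|x'-x\|_p = |x'_i - x_i| \le \epsilon$ for every $p \in \{1,2,\infty\}$. Hence $x' \in X_\epsilon$ as given in Definition~\ref{def:standard_robustness_def}. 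I would state this norm computation explicitly for each admissible $p$, since it is the only place the choice of norm enters.

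The main obstacle is this interpretive point, not any calculation. Read literally, the displayed definition of $X_{\epsilon,i}$ bounds only the $i$-th coordinate and leaves the others free. Under that reading, $X_{\epsilon,i}$ is not contained in $X_\epsilon$, and the claim could fail. I will therefore state at the outset that the remaining coordinates are held fixed at $x$, consistent with the parenthetical ``perturbation on $i^{th}$ feature only'' and with $X_{\epsilon,d}$. I would then remark that the same argument gives the more general containment $f(X_{\epsilon,d}) \subseteq f(X_{\epsilon,d'})$ whenever $d \subseteq d'$, because $\|(x'-x)_{d}\|_p \le \|(x'-x)_{d'}\|_p$ when the coordinates outside $d$ are zero.

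Finally, I would note that the containment is generally strict. Simultaneous perturbations such as $x + \epsilon' (e_1 + e_2)$ with small $\epsilon'$ lie in $X_\epsilon$ but in no single $X_{\epsilon,i}$. Their images need not be reachable by single-feature perturbations. This justifies writing $\supseteq$ rather than equality.
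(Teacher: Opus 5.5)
Your argument is correct and is essentially the paper's: both of you read $X_{\epsilon,i}$ as fixing every coordinate other than $i$ at $x$, show $X_{\epsilon,i}\subseteq X_\epsilon$, and push this through $f$. The paper does it as an element chase on $y_{out}=f(x^*)$ and checks only the $\mathcal{L}_\infty$ ball, whereas you use monotonicity of images and cover every $p\in\{1,2,\infty\}$.

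One small slip, in your side remark on $d\subseteq d'$: the containment needs $\|(x'-x)_{d'}\|_p\le\epsilon$. That follows because the coordinates in $d'\setminus d$ vanish, so $\|(x'-x)_{d'}\|_p=\|(x'-x)_{d}\|_p$. The inequality $\|(x'-x)_{d}\|_p\le\|(x'-x)_{d'}\|_p$ that you wrote runs in the unhelpful direction and does not by itself give the containment.
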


\begin{remark}[Monotonicity of Reach Sets under Feature Perturbation] \label{theorem:subset_feature_inclusive_property}
    Given a model \( f: \mathbb{R}^n \rightarrow \mathbb{R}^m \) and the sets defined as \( X_{\epsilon} = \{ x' \in \mathbb{R}^n : \| x' - x \|_p \leq \epsilon \} \) and \( X_{\epsilon, i} = \{ x' \in \mathbb{R}^n : \|x'_i - x_i\|_p  \leq \epsilon \} \) for \( i \in \{1,2,  \ldots, n\} \). Suppose we have two subsets of feature indices \( d \) and \( l \) from \( \{1,2,  \ldots, n\} \) such that \( |d| < |l| \) and \( d \subset l \). Define the output reach sets when perturbing only features in $d$ and $l$ respectively:
\begin{equation}
Y_{\epsilon, d} = f(X_{\epsilon, d}) \text{  and  } Y_{\epsilon, l} = f(X_{\epsilon, l}).
\end{equation}
Then, the following holds:
\begin{equation}
Y_{\epsilon, d} \subseteq Y_{\epsilon, l} 
\end{equation}
\end{remark}

\begin{theorem}[Soundness of \vitax{} Explanation] \label{theorem:soundness}
If the reachability solver, $\mathcal{V}$, used as a sub-routine in Algorithm~\ref{alg:method} is sound and complete (i.e., it correctly determines if the property $\Phi(d)$ holds for the input set $X_{\epsilon,d}$ corresponding to feature subset $d$), then the feature subset $A$ identified by \vitax{} (and its corresponding $x_A$) satisfies the following properties:

\textbf{(1) Targeted $\epsilon$-Robustness:} The set $A$ satisfies Definition~\ref{def:targeted_robustness_def} ($\Phi(A)$ holds, meaning $l_{\mathbf{y}, A} > u_{\mathbf{t}, A}$ and $u_{\mathbf{t}, A} > u_{\mathbf{k}, A}$ for other classes $\mathbf{k}$).

\textbf{(2) Maximal Robust Set:} $A$ is the largest prefix of the heuristic ranking $\pi$ for which $\Phi$ holds. Formally, if $A = \pi[0:u]$, then $\Phi(\pi[0:u]) = \text{True}$ and $\Phi(\pi[0:u+1]) = \text{False}$ (or $u = |\pi|$).

\textbf{(3) Heuristic Dependence:} The size $|A|$ depends on the quality of $\pi$. A different ranking $\pi'$ may yield a smaller or larger robust set $A'$.

\vitax{} provides an $\mathcal{O}(\log_2(N))$ solution to the problem of finding a feature set satisfying $\Phi$, where the quality of the solution (minimality of $|A|$) depends on how well $\pi$ captures feature importance for the $\mathbf{y} \to \mathbf{t}$ transition.
\end{theorem}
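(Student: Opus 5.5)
The plan is to treat Algorithm~\ref{alg:method} as a standard binary search over the prefix length $u\in\{0,\dots,n\}$, using the predicate $P(u) := \Phi(\pi[0:u])$. First I will establish that $P$ is monotone: if $P(u)$ holds and $u'<u$, then $P(u')$ holds. For this I invoke Remark~\ref{theorem:subset_feature_inclusive_property} with $d=\pi[0:u']\subset l=\pi[0:u]$, which gives $Y_{\epsilon,d}\subseteq Y_{\epsilon,l}$. Projecting onto coordinates yields
\[
l_{\mathbf{y},d}\ge l_{\mathbf{y},l}, \qquad u_{\mathbf{t},d}\le u_{\mathbf{t},l},
\]
so $l_{\mathbf{y},l}>u_{\mathbf{t},l}$ implies $l_{\mathbf{y},d}>u_{\mathbf{t},d}$. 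This settles monotonicity for the core condition $\Phi_{target}$, which is the one the evaluation uses. The optional dominance clause $u_{\mathbf{t},d}>u_{\mathbf{k},d}$ is not monotone in general, since both upper bounds shrink. I would therefore state the maximality claim for $\Phi_{target}$, and for the full $\Phi$ either assume monotonicity or weaken (2) to the local statement that the algorithm certifies.

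Claim (1) then follows from an invariant. $A_{candidate}$ is only ever assigned $\pi[0:u]$ right after the solver $\mathcal{V}$ returns FLAG $=$ true on $X_{\epsilon,\pi[0:u]}$. Soundness of $\mathcal{V}$ gives $\Phi(\pi[0:u])$, and hence $\Phi(A)$ holds, matching Definition~\ref{def:targeted_robustness_def} with $A$ as the witness in Equation~\ref{eq:targeted_robustness}. The degenerate case $A=\emptyset$ needs separate handling: $\Phi(\emptyset)$ is the unperturbed condition $f_{\mathbf{y}}(x)>f_{\mathbf{t}}(x)$, which holds because $x$ is classified as $\mathbf{y}$.

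For claim (2), I will prove the loop invariant that $P(u)$ holds for all $u<I$ and fails for all $u>J$. The two updates preserve it. When FLAG is true we set $I\gets u+1$, and monotonicity says every $u'\le u$ satisfies $P$. When FLAG is false we set $J\gets u-1$, and completeness of $\mathcal{V}$ plus monotonicity says every $u'\ge u$ fails. Each iteration shrinks $J-I$, so the loop terminates with $I=J+1$. At that point $A_{candidate}=\pi[0:I-1]$, since $I-1$ is the last $u$ accepted; if no $u$ was accepted, $I-1=0$. The invariant then gives $P(I-1)$ true and $P(I)$ false, or else $I-1=n$, which is exactly (2). Halving the interval also gives at most $\lceil\log_2(n+1)\rceil$ solver calls, which is the $\mathcal{O}(\log_2 N)$ claim.

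Claim (3) is essentially an existence remark, and I will settle it with a small example. Take $n=2$ and a linear $f$ in which only feature $2$ can raise $f_{\mathbf{t}}$ beyond the margin. Then the ranking $\pi=[1,2]$ yields $|A|=1$, while $\pi'=[2,1]$ yields $|A'|=0$.

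The main obstacle is the mismatch between the full $\Phi$, whose dominance clause breaks monotonicity, and the binary search's correctness, which needs monotonicity. A secondary point is that the over-approximate reach sets from star solvers must be treated as exact, which the hypothesis that $\mathcal{V}$ is sound and complete grants. I would make both caveats explicit in the proof.
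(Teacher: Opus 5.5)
Your argument for (1) matches the paper's: $A$ is the last prefix on which $\mathcal{V}$ returned FLAG $=$ true, so soundness gives $\Phi(A)$. For (2) you take a different and more rigorous route. The paper's appendix argues maximality by appeal to ``the search logic'': a larger robust prefix ``would have'' been found. It also simply asserts that $\pi[0:u_{final}+1]$ was tested and rejected. It never invokes Remark~\ref{theorem:subset_feature_inclusive_property}, and so tacitly assumes that $\Phi$ is monotone along prefixes. Your invariant ($P$ holds below $I$, fails above $J$), with monotonicity derived from the Remark, is what actually justifies the ``largest prefix'' reading. It also shows that monotonicity holds for $\Phi_{target}$ but not for the dominance clause, a point the paper's proof does not address. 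For (3), your two-feature linear example is a genuine witness, whereas the paper only cites ``the algorithm's structure''. You also obtain an explicit bound on solver calls.

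There are three refinements.

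First, the \emph{formal} version of (2), namely $\Phi(\pi[0:u])$ true and $\Phi(\pi[0:u+1])$ false (or $u=n$), needs no monotonicity. $I$ changes only on acceptance (to $u+1$), $J$ changes only on rejection (to $u-1$), and the loop exits with $I=J+1$. Hence the last accepted index $I-1$ and the last rejected index $J+1$ are adjacent. Soundness and completeness then give the local claim for the full $\Phi$ as well, provided at least one $u$ is accepted. So you need not weaken (2) for the full $\Phi$; only its informal ``largest prefix'' gloss requires monotonicity.

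Second, your treatment of $A=\emptyset$ covers only $\Phi_{target}$. Under the full $\Phi$, $\Phi(\emptyset)$ also demands $f_{\mathbf{t}}(x)>f_{\mathbf{k}}(x)$ for every $\mathbf{k}\neq\mathbf{y},\mathbf{t}$. That fails whenever $\mathbf{t}$ is not the runner-up class at $x$. The search may then accept nothing and return the initialized $A=\emptyset$ with $\Phi(A)$ false, so (1) is actually false as stated for the full $\Phi$. You should restrict (1) to $\Phi_{target}$, or add the runner-up hypothesis, under which the adjacency argument forces at least one acceptance. Relatedly, for $\Phi_{target}$ the ``no $u$ accepted'' case is vacuous: since $P(0)$ holds, your invariant keeps $J\ge 0$, so $I\ge 1$ at exit. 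Your ``$I-1=0$'' there should read $-1$.

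Third, binary search over the $n+1$ values $\{0,\dots,n\}$ can use $\lfloor\log_2(n+1)\rfloor+1$ solver calls (three when $n=3$), not $\lceil\log_2(n+1)\rceil$. The $\mathcal{O}(\log_2 N)$ conclusion is unaffected.
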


\begin{proof}[Proof Sketch]
Let $A_{final}$ be the feature subset returned by Algorithm~\ref{alg:method}. The algorithm identifies $A_{final}$ as the largest prefix $\pi[0:u]$ of the heuristically ranked features $\pi$ for which the solver $\mathcal{V}$ returned `FLAG = True' when verifying the property $\Phi(A_{final})$. The property $\Phi(d)$ is defined as $\left( l_{\mathbf{y}, d} > u_{\mathbf{t}, d} \right) \wedge \left( \forall \mathbf{k} \neq \mathbf{y}, \mathbf{k} \neq \mathbf{t} \implies u_{\mathbf{t}, d} > u_{\mathbf{k}, d} \right)$, which directly embodies Definition~\ref{def:targeted_robustness_def} along with the condition for other classes.

Since $\mathcal{V}$ returned `FLAG = True' for $A_{final}$ and $\mathcal{V}$ is assumed to be sound (if $\mathcal{V}$ says $\Phi(d)$ holds, it does), it directly implies that $\Phi(A_{final})$ holds. Thus, $A_{final}$ satisfies Targeted $\epsilon$-Robustness (Property 1).

Furthermore, the binary search procedure is designed to find the maximum $u$ (and thus the largest prefix $A_{final} = \pi[0:u]$) for which $\Phi$ holds. If a larger prefix $\pi[0:u']$ (with $u' > u$) existed that also satisfied $\Phi$, the search logic (specifically, updating the lower bound $I \leftarrow u+1$ and storing $A_{candidate}$) would have aimed to find it. The algorithm terminates when this process converges, ensuring $A_{final}$ is the largest such prefix for which $\mathcal{V}$ (assumed complete: if $\Phi(d)$ holds, $\mathcal{V}$ returns `FLAG = True') confirms $\Phi$ (Property 2).

Property 3 follows from the algorithm's structure: the binary search operates on the ranking $\pi$, and a different ranking would lead to testing different feature subsets in different orders, potentially yielding a different final set $A'$.

A detailed proof is available in Appendix~\ref{appendix:theorem:soundness}.
\end{proof}

\section{Evaluation}\label{sec:experiment}

This section presents a comprehensive empirical validation of \vitax{}. Our evaluation is structured around key research questions designed to assess its performance, scalability, and the unique nature of its formally verified semifactual explanations.

\begin{itemize}[nosep]
    \item \textbf{RQ1 (Performance):} How does \vitax{} compare quantitatively against state-of-the-art heuristic, formal, and alternative semifactual methods in terms of fidelity, cardinality, and robustness?
    \item \textbf{RQ2 (Scalability \& Applicability):} How does \vitax{} perform on larger architectures, and how does it generalize to other tasks (e.g., regression)?
    \item \textbf{RQ3 (Qualitative Insight):} What is the qualitative nature of \vitax{}'s targeted semifactual explanations, and how do key parameters and components affect the results?
\end{itemize}

\subsection{Experimental Setup}\label{subsec:experimental_setup}

\paragraph{Datasets and Architectures}
We utilize several datasets: MNIST \cite{lecun2009mnist}, GTSRB \cite{stallkamp2012man}, EMNIST Letters \cite{cohen2017emnist}, and the TaxiNet regression dataset \cite{julian2020validation}. 
To address concerns regarding scalability, we analyze performance across models of varying complexity (MLP, CNN, ResNet, Inception; details in Appendix~\ref{appendix:models}) and discuss the computational trade-offs inherent in formal verification. For primary evaluations, we use 100 correctly predicted test samples.

\paragraph{Baselines}
We compare \vitax{} against a diverse set of methods. Recognizing that standard implementations of LIME and Anchors methods are not inherently designed for targeted semifactual explanations, we specifically adapted these baselines to align their objectives with the semifactual goal of explaining resilience (Details in Appendix \ref{appendix:baseline_adaptation}). The comparison set includes: 
\begin{itemize}[nosep]
    \item \textbf{Probabilistic Semifactual (Adapted LIME \& Anchors):} We adjusted LIME~\cite{ribeiro_why_2016} and Anchors~\cite{ribeiro_anchors_2018} to generate probabilistic semifactual explanations by identifying minimal feature sets that ensure prediction persistence with high probability. For LIME, we implemented a two-stage approach (local linear model + greedy search for stability); for Anchors, we formalized its greedy optimization to meet a high precision threshold.
    
    \item \textbf{Heuristic Semifactual (TSA \& Prototype):} TSA (Targeted Semifactual Adversarial) uses iterative optimization (e.g., PGD) to find a perturbation $\delta$ that maximizes the target logit while maintaining the original classification via a margin penalty \cite{kenny2023utility, aryal2023even, kenny2021generating, chowdhury2025looking}. Prototype identifies the closest existing data sample that shares the original class but is nearest the target class in logit space, using the pixel-wise difference as the explanation \cite{aryal2023even, fernandez2022explanation}.
    
    \item \textbf{Formal (Untargeted):} VeriX~\cite{wu_verix_2023}, which identifies features sufficient for the original classification against any alternative.

    \item \textbf{Formal (Instance-level):} HNMAS~\cite{leofante2023robust}, which generates robust counterfactual instances for multi-agent systems. As it provides instance-level explanations (entire input modifications) rather than feature subsets, standard feature-level metrics like cardinality and fidelity (Eq.~\ref{eq:fidelity_revised}) are not directly applicable; we include it for qualitative comparison (Figure~\ref{fig:motivation}).

    \item \textbf{Algorithmic Baseline:} Brute-Force (linear search using the \vitax{} solver).
\end{itemize}

\paragraph{Implementation Details}
\vitax{} utilizes the NNV tool \cite{tran2019star,lopez2023nnv} as the subroutine solver $\mathcal{V}$. We utilize both the `Approx-Star' method and the `CP-Star' method \cite{hashemi2025probabilistic}. 
Experiments are conducted on a workstation equipped with an AMD Ryzen Threadripper PRO 7975WX 32-Core CPU and an NVIDIA RTX 6000 Ada Generation GPU.

\paragraph{Metrics}
\textbf{Fidelity} (Eq.~\ref{eq:fidelity_revised}) measures how effectively the explanation influences the model's prediction towards the target class $\mathbf{t}$, while penalizing influence towards other classes $\mathbf{k}$.

\begin{equation}
\text{Fidelity} = \frac{f(x')_\mathbf{t} - f(x)_\mathbf{t}}{f(x)_\mathbf{y}} - \frac{\sum_{\mathbf{k} \neq \mathbf{y}, \mathbf{k} \neq \mathbf{t}} \max\left[0, f(x')_\mathbf{k} - f(x')_\mathbf{t}\right]}{f(x)_\mathbf{y}}
\label{eq:fidelity_revised}
\end{equation}
Here, $x'$ is generated by applying the worst-case adversarial perturbation of magnitude $\epsilon$ to the features in $A$ towards the target class $\mathbf{t}$, as determined by the reachability solver.

\textbf{Cardinality} measures the average number of features in $A$. \textbf{Time} measures the average computational time in seconds.

\textbf{Robustness (Robustness against Noisy Execution - NE \cite{jiang_formalising_2023, jiang2024robust})}. We adapt the procedure from Anchors \cite{ribeiro_anchors_2018} to evaluate the sufficiency of $A$. We randomly perturb the \emph{non-explanatory} regions (features outside $A$) by replacing them with features from target class $t$ samples (500 trials), and measure the fraction of trials for which the model's prediction \textit{does not flip} to $t$. A high robustness score indicates that $A$ alone is sufficient to maintain the decision, demonstrating stability despite adversarial perturbation of irrelevant features. Note the intentional duality here: while our formal guarantee (Definition~\ref{def:targeted_robustness_def}) certifies safety when perturbing $A$, the empirical NE metric tests stability by freezing $A$. Because $A$ captures the features most sensitive to the $\mathbf{y} \to \mathbf{t}$ transition, freezing $A$ anchors the prediction, protecting it against adversarial noise in the background.

\subsection{Comparative Analysis (RQ1)}\label{subsec:RQ1}

To answer RQ1, we evaluate \vitax{} against all baselines on MNIST and GTSRB across fidelity, robustness (NE), cardinality, and computational cost. The quantitative results are summarized in Table~\ref{tab:comprehensive}, and Figure~\ref{fig:comparison_to_table1} provides a qualitative visualization of the explanations generated by each method.

\begin{figure}[t]
    \centering
    \begin{subfigure}[t]{0.13\textwidth} 
        \centering
        \includegraphics[width=\textwidth]{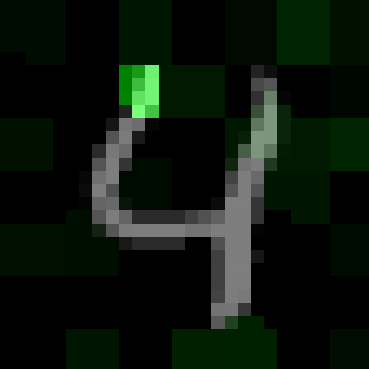}
        \subcaption*{LIME \newline}
    \end{subfigure}
    \begin{subfigure}[t]{0.13\textwidth}
        \centering
        \includegraphics[width=\textwidth]{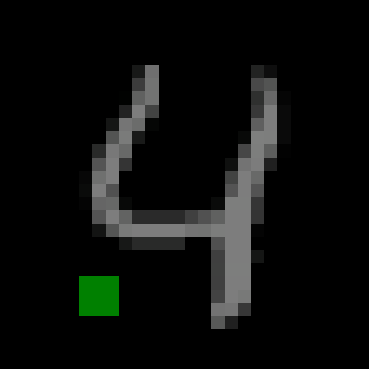}
        \subcaption*{Anchors \newline}
    \end{subfigure}
    \begin{subfigure}[t]{0.13\textwidth}
        \centering
        \includegraphics[width=\textwidth]{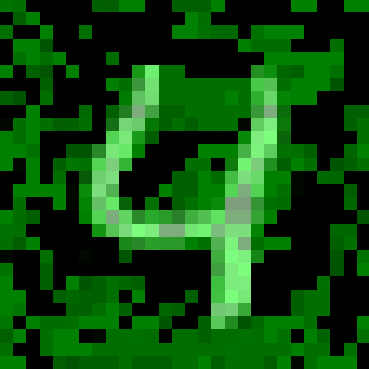}
        \caption*{TSA}
        \label{fig:tsa}
    \end{subfigure}
    \begin{subfigure}[t]{0.13\textwidth}
        \centering
        \includegraphics[width=\textwidth]{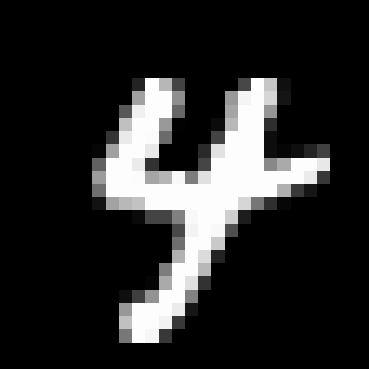}
        \caption*{Prototype}
        \label{fig:prototype}
    \end{subfigure}
    \begin{subfigure}[t]{0.13\textwidth}
        \centering
        \includegraphics[width=\textwidth]{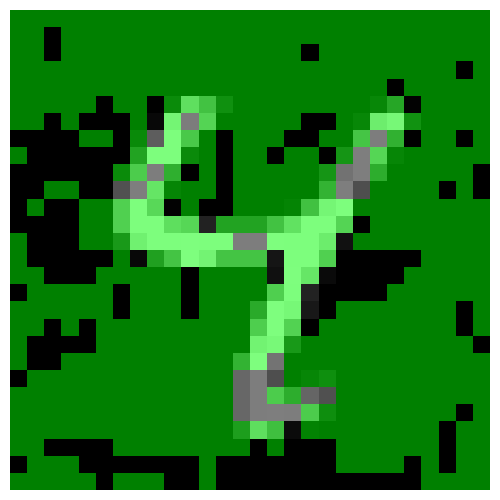}
        \caption*{VeriX \newline}
    \end{subfigure}
    \begin{subfigure}[t]{0.13\textwidth}
        \centering
        \includegraphics[width=\textwidth]{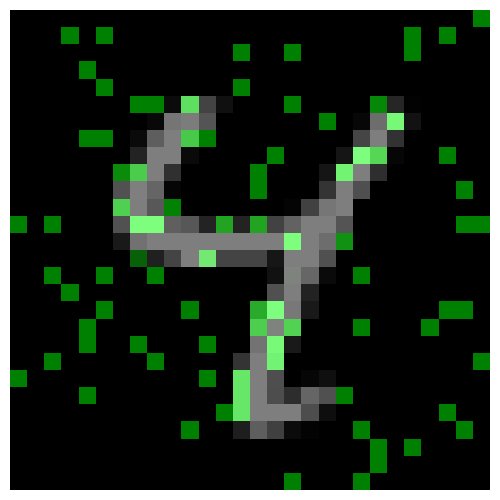}
        \caption*{Brute Force}
        \label{fig:brutal}
    \end{subfigure}
    \begin{subfigure}[t]{0.13\textwidth}
        \centering
        \includegraphics[width=\textwidth]{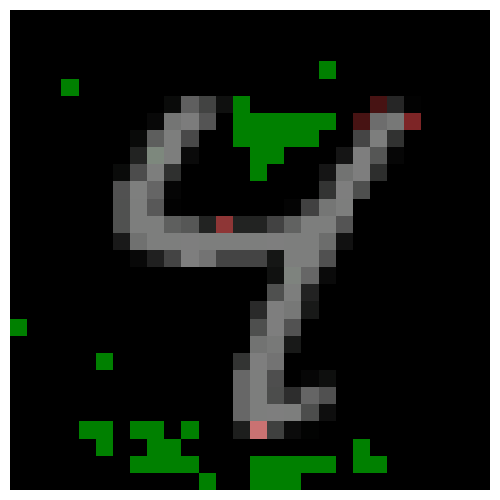}
        \caption*{\textbf{\vitax{} (4 to 9)} \newline}
    \end{subfigure}
    \caption{Qualitative comparison of explanations for an MNIST sample (original `4', target `9' for \vitax{}). All methods use the same base model (MNIST MLP). \vitax{} highlights the critical feature set $A$. Note that this visualization shows the sensitive features (a semifactual insight), not a counterfactual instance that successfully flips the class.} 
    \label{fig:comparison_to_table1}
\end{figure}

\begin{table}[!t]
\caption{Comprehensive Evaluation Results. $\uparrow$ indicates higher is better; $\downarrow$ indicates lower is better. Best results among heuristic/probabilistic methods are underlined; best results among formal methods are bolded.}
\label{tab:comprehensive}
\centering
\small
\begin{tabular}{lcccc|cccc}
\toprule
& \multicolumn{4}{c}{\textbf{MNIST (MLP)}} & \multicolumn{4}{c}{\textbf{GTSRB (CNN)}} \\
\cmidrule(lr){2-5} \cmidrule(lr){6-9}
\textbf{Method} & \textbf{Fid. $\uparrow$} & \textbf{Rob. (NE) $\uparrow$} & \textbf{Time (s) $\downarrow$} & \textbf{Card. $\downarrow$} & \textbf{Fid. $\uparrow$} & \textbf{Rob. (NE) $\uparrow$} & \textbf{Time (s) $\downarrow$} & \textbf{Card. $\downarrow$} \\
\midrule
\multicolumn{9}{l}{\textit{Heuristic/Probabilistic Methods}} \\
LIME (Adapted) & 0.09 & 0.93 & 3.42 & 44.27 & 0.32 & 0.60 & 4.65 & \underline{47.11} \\
Anchors (Adapted) & 0.11 & 0.94 & 4.06 & \underline{41.59} & \underline{0.57} & 0.56 & 8.73 & 75.11 \\
TSA & \underline{0.52} & \underline{0.97} & 0.01 & 514.37 & 0.10 & 0.92 & 0.02 & 621.47 \\
Prototype & 0.15 & 0.93 & \underline{0.00} & 214.44 & 0.33 & \underline{1.00} & \underline{0.00} & 782.59 \\
\midrule
\multicolumn{9}{l}{\textit{Formal Methods}} \\
VeriX & 0.16 & \textbf{0.99} & 699.91 & 472.30 & 0.10 & \textbf{0.93} & 1536.34 & 764.97 \\
\midrule
\textbf{\vitax{} (Ours)} & \textbf{0.56} & 0.97 & \textbf{11.89} & \textbf{121.47} & \textbf{0.78} & 0.70 & \textbf{47.23} & \textbf{30.57} \\
\bottomrule
\end{tabular}
\end{table}

\paragraph{Performance Overview}
\vitax{} demonstrates superior balance of explanation quality and efficiency among formal methods. It achieves the highest fidelity across both datasets (0.56 on MNIST, 0.78 on GTSRB) while simultaneously maintaining the lowest cardinality among formal methods (121.47 on MNIST, 30.57 on GTSRB). This combination is crucial: high fidelity demonstrates that the identified features are highly relevant to the targeted class transition, while low cardinality ensures the explanation is concise and interpretable. \vitax{}'s exceptionally low cardinality on GTSRB (30.57 features) indicates highly focused explanations, demonstrating the effectiveness of binary search with sensitivity-based ranking for identifying minimal critical feature sets.

\paragraph{Comparison with Semifactual Methods}
We compare \vitax{} against several methods adapted or designed for semifactual insights: the probabilistic approaches (LIME, Anchors) and the heuristic search methods (TSA, Prototype). \vitax{} outperforms all these baselines in achieving the optimal balance of fidelity and cardinality.

\vitax{} significantly outperforms the adapted probabilistic methods (LIME, Anchors) in fidelity. On MNIST, \vitax{} achieves 0.56 compared to 0.09 (LIME) and 0.11 (Anchors)—representing over 6× and 5× improvements respectively. On GTSRB, \vitax{} achieves 0.78 versus 0.32 (LIME) and 0.57 (Anchors)—a 37\% improvement over the best probabilistic baseline. This integration of formal verification with targeted heuristic search is far more effective at isolating critical features than methods relying on local surrogate models and sampling.

The heuristic baselines reveal significant challenges in generating effective semifactual explanations without formal guarantees. The optimization-based TSA method achieves competitive fidelity on MNIST (0.52, closest to \vitax{} among heuristics) but fails dramatically on GTSRB (0.10), while producing extremely high cardinality (514-621 features). This architecture-dependent performance suggests that iterative optimization struggles to maintain the semifactual constraint across different model geometries and decision boundaries.

The Prototype method is exceptionally fast but shows inconsistent performance: low fidelity on MNIST (0.15) despite high robustness (0.93), and moderate fidelity on GTSRB (0.33) with perfect robustness (1.00). Its cardinality remains consistently high (214-782 features). This highlights a fundamental limitation: relying on existing data samples depends heavily on dataset density and distribution, and cannot systematically minimize explanations for targeted transitions.

The critical distinction between \vitax{} and all these heuristic/probabilistic methods lies in the nature of the guarantee (summarized conceptually in Table~\ref{tab:sfe_conceptual}). While LIME/Anchors provide probabilistic estimates, and TSA/Prototype rely on localized optimization or distance metrics, \vitax{} provides a formal Targeted $\epsilon$-Robustness guarantee. This formal underpinning ensures trustworthiness by verifying the entire continuous perturbation space, offering mathematical certainty where other methods provide estimations or localized examples.

\begin{table}[h!]
\centering
\caption{Conceptual Comparison of Semifactual Explanation Generation Methods, illustrated with HELOC dataset \cite{fico2019challenge}. \vitax{} uniquely combines targeted insights with formal guarantees and scalability.}
\small
\label{tab:sfe_conceptual}
\resizebox{\textwidth}{!}{%
\begin{tabular}{@{}>{\raggedright\arraybackslash}p{0.12\linewidth}p{0.30\linewidth}>{\raggedright\arraybackslash}p{0.38\linewidth}p{0.15\linewidth}@{}}
\toprule
\textbf{Method} & \textbf{Mechanism}  &\textbf{Example (HELOC)} & \textbf{Nature of Guarantee} \\
\midrule
LIME/Anchors (Adapted) &
Local Surrogate + Probabilistic Sampling. Identifies features that ensure prediction persistence with high probability.  &``Maintaining the current `ExternalRiskEstimate' and `AverageMInFile' is sufficient to keep the `Good Risk' status with 95\% probability.'' &
Probabilistic \& Empirical. \\
\midrule
Prototype / TSA &
Example-based search or iterative optimization. Finds specific instances or perturbations that do not cross the boundary.  &``Even when `NumInqLast6M' was increased by 2, the status remained `Good Risk'.'' &
Heuristic / Best Effort. \\
\midrule
VeriX &
Formal Verification (MILP/SMT). Finds features that guarantee the prediction regardless of other feature values (Sufficiency).  &``It's guaranteed that as long as `ExternalRiskEstimate' < 65, the loan is always `Good Risk'.'' &
Deterministic \& Formal (Untargeted). \\
\bottomrule
 \textbf{\vitax{}} & Formal Reachability + Efficient Search. Proves perturbation of critical features (A) by $\epsilon$ is insufficient to cross the boundary towards a specific target (t).  &For a `Good Risk' applicant, it's guaranteed that if you change `ExternalRiskEstimate' and `AverageMInFile' by up to 10\% in any combination, the result will \textbf{not} change to `Bad Risk'.&Deterministic \& Formal (Targeted). \\
\end{tabular}%
}
\end{table}

\paragraph{Analysis of Trade-offs and Formal Methods}
When considering empirical robustness (NE), VeriX achieves the highest scores (0.99 on MNIST, 0.93 on GTSRB). This is expected, as VeriX aims for general sufficiency—identifying features that maintain the original classification against \emph{any} alternative class, often operating further from specific decision boundaries. In contrast, \vitax{} operates precisely at the \emph{targeted} boundary between the original and the target class.

\vitax{} achieves competitive robustness on MNIST (0.97, matching TSA and nearly matching VeriX's 0.99) while delivering dramatically superior fidelity (0.56 vs 0.16 for VeriX, a 3.5× improvement; 0.78 vs 0.10 for VeriX on GTSRB, a 7.8× improvement). This indicates substantially higher relevance to the targeted transition. The lower robustness on GTSRB (0.70) reflects the inherent trade-off: \vitax{} operates at the targeted boundary to maximize fidelity, accepting reduced general stability for targeted precision. Crucially, \vitax{} achieves this with dramatically faster runtimes (11.89s vs 699.91s on MNIST; 47.23s vs 1536.34s on GTSRB—representing 59× and 33× speedups respectively) and much lower cardinality. This highlights \vitax{}'s effective balance between formal rigor, targeted explanatory power, and computational feasibility.

\paragraph{Qualitative Analysis}
The quantitative advantages are complemented by qualitative differences. For the transition from `4' to `9', \vitax{} precisely highlights the upper loop closure necessary for the transition—the specific region where adding or modifying strokes would create the characteristic upper circle of a `9' (Figure~\ref{fig:comparison_to_table1}). In contrast, baselines like LIME and Anchors identify broader, less focused regions that overlap with general features of `4', while VeriX focuses on features sufficient for maintaining `4' classification generally, rather than those specific to the boundary with `9'. This demonstrates \vitax{}'s core advantage: providing targeted, boundary-specific insights rather than general attribution.

\subsection{Scalability and Applicability (RQ2)}\label{subsec:RQ2}
We analyze the scalability of \vitax{} regarding model architecture complexity and input dimensionality, as well as its applicability beyond classification. A key challenge in formal XAI is the computational cost of the underlying verification solvers when applied to deep architectures or high-dimensional inputs. We evaluate the feasibility of \vitax{} under these conditions and discuss the computational trade-offs.

\paragraph{Scalability to Complex Architectures}
To address concerns about the applicability of \vitax{} to larger models, we extended our evaluation to five complex architectures on the MNIST dataset: a standard CNN, ResNet-Tiny, ResNet-Small, ResNet-Large, and an Inception model. We evaluated performance across two distinct perturbation budgets: a smaller budget ($\epsilon = 25/255 \approx 0.10$) and a significantly larger budget ($\epsilon = 55/255 \approx 0.22$).

Traditional deterministic reachability analysis solvers (like NNV Approx-Star) often struggle with the scale and complexity (e.g., residual connections, depth) of these architectures. The \vitax{} framework is agnostic to the underlying solver $\mathcal{V}$. To manage this complexity, we utilized the CP-Star verification backend, which is better suited for these complex network structures. This demonstrates \vitax{}'s capability to integrate different verification strategies to achieve scalability.

\begin{table}[!t]
    \caption{\vitax{} Performance on Complex Architectures (MNIST) for two perturbation budgets, $\epsilon \approx 0.10$ (25/255) and $\epsilon \approx 0.22$ (55/255), using the CP-Star Verification Backend. (Averages over 10 samples).}
    \label{tab:complex_architectures_cpstar}
    \centering
    \small
    \resizebox{\textwidth}{!}{
    \begin{tabular}{lcrcc cc cc cc}
        \toprule
        \textbf{Model}& \textbf{\# Params} & \textbf{Acc (\%)} & \multicolumn{2}{c}{\textbf{Fid. $\uparrow$}} & \multicolumn{2}{c}{\textbf{Rob. (NE) $\uparrow$}} & \multicolumn{2}{c}{\textbf{Time (s) $\downarrow$}} & \multicolumn{2}{c}{\textbf{Card. $\downarrow$}} \\
        \cmidrule(lr){4-5} \cmidrule(lr){6-7} \cmidrule(lr){8-9} \cmidrule(lr){10-11}
         & & & \textbf{$\epsilon \approx 0.10$} & \textbf{$\epsilon \approx 0.22$} & \textbf{$\epsilon \approx 0.10$} & \textbf{$\epsilon \approx 0.22$} & \textbf{$\epsilon \approx 0.10$} & \textbf{$\epsilon \approx 0.22$} & \textbf{$\epsilon \approx 0.10$} & \textbf{$\epsilon \approx 0.22$} \\
        \midrule
        CNN           & 15,634   & 99.0 & 0.39 & 0.55 & 0.99 & 0.97 & 56.89 & 55.24 & 476.4 & 237.5 \\
        ResNet-Tiny   & 10,738   & 97.1 & 0.30 & 0.37 & 0.77 & 0.82 & 58.52 & 57.65 & 270.3 & 68.6 \\
        ResNet-Small  & 173,850  & 99.4 & 0.27 & 0.39 & 0.97 & 0.92 & 60.80 & 58.73 & 463.3 & 177.2 \\
        ResNet-Large  & 693,802  & 99.1 & 0.29 & 0.34 & 0.94 & 0.86 & 63.74 & 61.83 & 421.1 & 90.5 \\
        Inception     & 490,954  & 84.1 & 0.16 & 0.16 & 0.93 & 0.92 & 66.71 & 66.40 & 124.8 & 55.2 \\
        \bottomrule
    \end{tabular}
    }
\end{table}
The results demonstrate that \vitax{} successfully generates formally grounded explanations for these complex architectures, including deep ResNets and Inception models, within practical timeframes (under 70 seconds), regardless of the $\epsilon$ budget (Table~\ref{tab:complex_architectures_cpstar}). This is a significant demonstration of scalability, highlighting the benefit of \vitax{}'s solver-agnostic design.

\textit{Impact of Model Complexity:} We observe that Fidelity is generally lower across these models (0.16-0.55) compared to the simpler MLP (0.56 in Table~\ref{tab:comprehensive}). This is expected behavior. Deeper architectures often rely on distributed, hierarchical feature representations and tend to have more robust decision boundaries. Consequently, perturbing a feature subset ($A$) at the input level results in a smaller relative shift in logits (lower fidelity). This highlights a known limitation of input-space attribution methods: their explanatory impact inherently diminishes on very deep or highly robust models.

\textit{Impact of Perturbation Budget ($\epsilon$):} The comparison between $\epsilon \approx 0.10$ and $\epsilon \approx 0.22$ confirms the trends observed in RQ3 (Section~\ref{subsec:epsilon_effect}). Increasing $\epsilon$ consistently leads to significantly lower Cardinality (e.g., 476.4 down to 237.5 for CNN), as the explanation must become more focused to maintain the guarantee under larger perturbations. Concurrently, the larger perturbation often leads to higher Fidelity (e.g., 0.39 up to 0.55 for CNN), as the features in A are perturbed more significantly towards the target class.

The Robustness (NE) shows some variation with the larger $\epsilon$ (e.g., 0.99 down to 0.97 for CNN; 0.77 up to 0.82 for ResNet-Tiny). The pattern is less consistent than in the simpler MLP case, reflecting the complex interplay between model architecture, decision geometry, and perturbation budget. For some models, larger $\epsilon$ brings explanations closer to boundaries (reducing NE), while for others, the more focused feature sets (lower cardinality) at higher $\epsilon$ may improve stability. This highlights a vital distinction between \vitax{}'s formal guarantee (Targeted $\epsilon$-Robustness) and empirical stability against untargeted noise. The formal guarantee ensures that perturbing $A$ by $\epsilon$ does not cross the targeted boundary, but proximity to that boundary affects how robust the prediction is to noise in \emph{other} features (as tested by NE). This contrasts with methods seeking general sufficiency (like VeriX, which achieves 0.99 NE on MNIST MLP in Table~\ref{tab:comprehensive}), which operate further from boundaries.

\paragraph{Scalability to Input Dimensions}
To further evaluate scalability regarding input size, we analyzed the impact of increasing image dimensions on computation time. We trained MLP models (using the same hidden layer structure as the baseline MLP) on upscaled versions of the MNIST dataset: 28x28 (784 features), 56x56 (3136 features), and 78x78 (6084 features). We measured the execution time using both the Approx-Star and CP-Star solvers (Figure~\ref{fig:combined_scalabilitiy_input_sizes}).

\begin{figure}[t]
    \centering
    \begin{subfigure}[b]{0.48\linewidth}
        \centering
        \includegraphics[width=\linewidth]{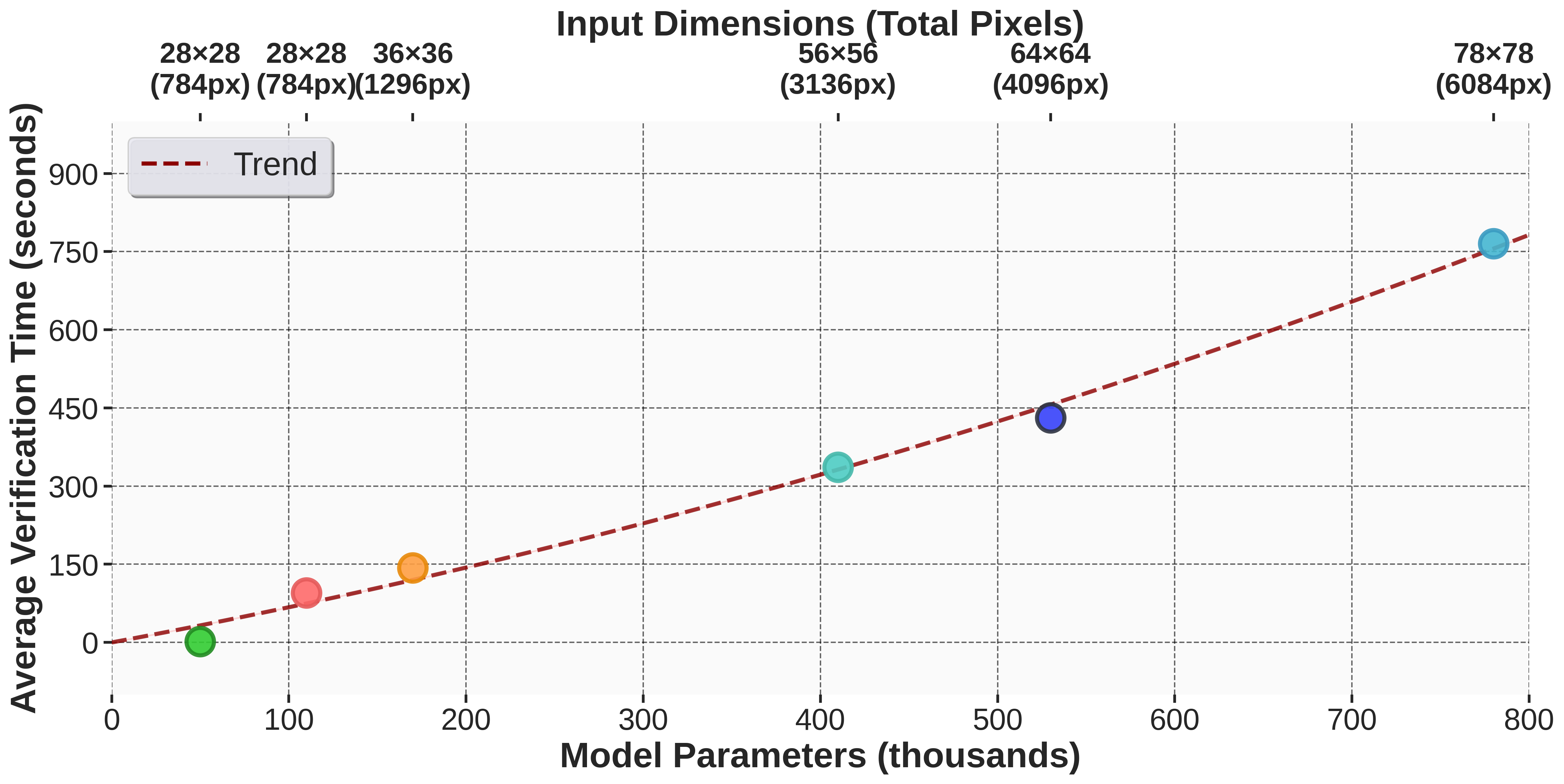}
        \caption{Approx Star}
        \label{fig:scaletoimagesize_approx_star}
    \end{subfigure}
    \hfill 
    \begin{subfigure}[b]{0.48\linewidth}
        \centering
        \includegraphics[width=\linewidth]{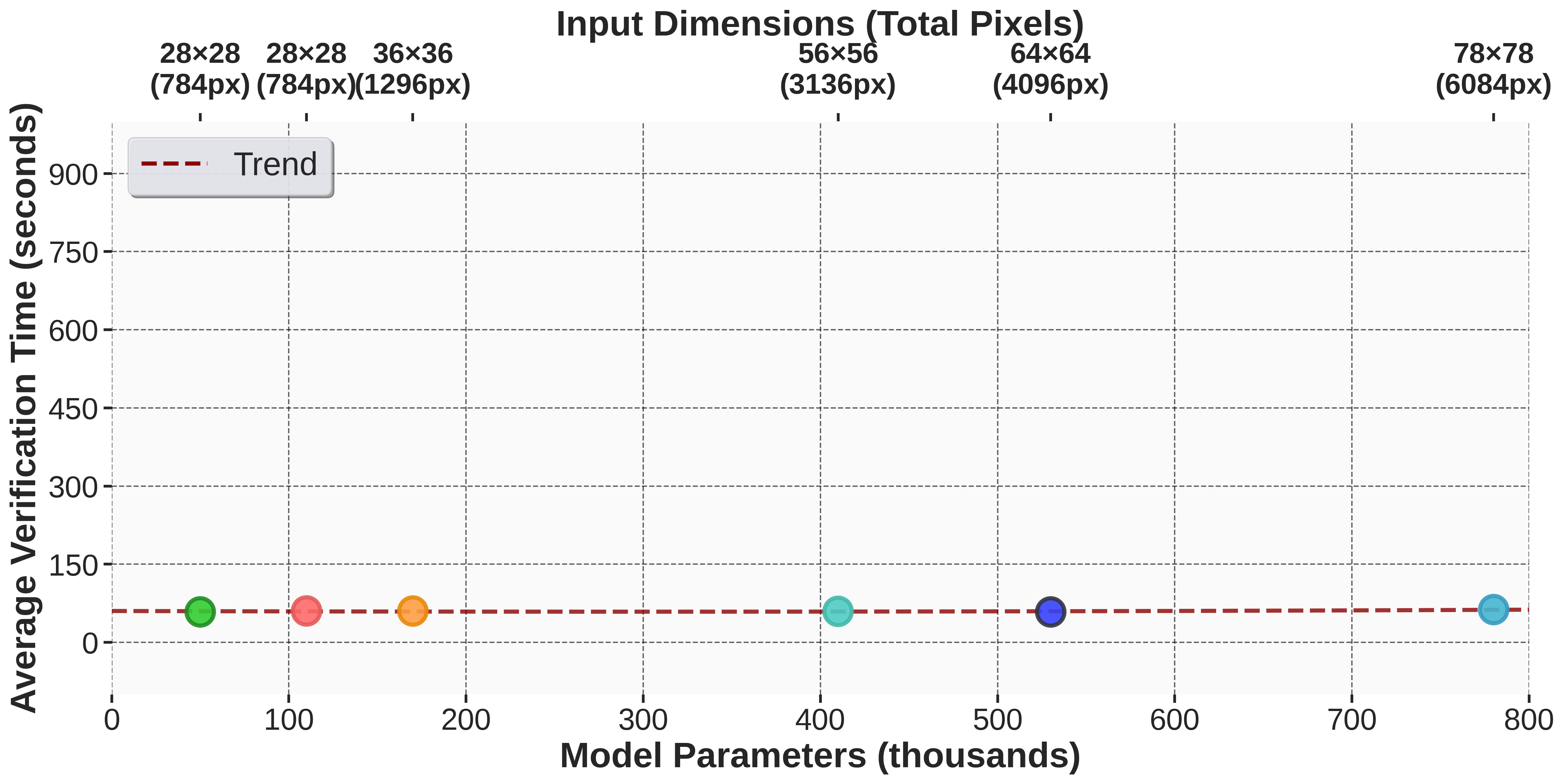} 
        \caption{CP Star}
        \label{fig:second_diagram_cp_star}
    \end{subfigure}
    \caption{Scalability analysis on MLP models with varying input image sizes (MNIST 28x28, 56x56, 78x78). Runtime increases with input dimensions, with Approx-Star showing superior efficiency over CP-Star for MLP architectures.}
    \label{fig:combined_scalabilitiy_input_sizes}
\end{figure}

As the input dimension increases, the computation time increases for both solvers, reflecting the higher complexity of the search space. Approx-Star (Figure~\ref{fig:scaletoimagesize_approx_star}) is significantly faster than CP-Star (Figure~\ref{fig:second_diagram_cp_star}) on these MLP architectures. Approx-Star scales from approx. 2s (28x28) to approx. 15s (78x78), whereas CP-Star scales from approx. 55s to 80s. This indicates that while \vitax{} can handle larger images, the choice of the underlying solver is critical and often architecture-dependent—Approx-Star is preferable for MLPs, while CP-Star may be necessary for complex CNNs/ResNets (as shown previously in Table~\ref{tab:complex_architectures_cpstar}).

This analysis confirms that the \vitax{} methodology remains effective for identifying critical, targeted feature sets even on deep networks and larger inputs. We acknowledge the computational cost associated with formal verification and position \vitax{} as a rigorous tool for precise boundary analysis, where this cost is justified by the need for formal guarantees in safety-critical domains (see Section 6). 

\paragraph{Application to TaxiNet Regression Task}\label{subsec:taxinet_eval}
To demonstrate broader applicability beyond classification, we applied \vitax{} to the TaxiNet regression task \cite{julian2020validation} (Figure~\ref{fig:taxinet}). For regression, \vitax{} identifies features critical for explaining the model's resilience around its current output value (steering angle). \vitax{} reveals how the critical feature set $A$ changes based on the targeted output shift—focusing on immediate path projection (e.g., the white lines near the center) for small adjustments, and features further ahead for larger changes. This demonstrates \vitax{}'s utility for understanding the dynamic feature reliance in continuous control models.

\begin{figure}[!t]
    \centering
    \begin{subfigure}[t]{0.19\textwidth}
        \includegraphics[width=\textwidth]{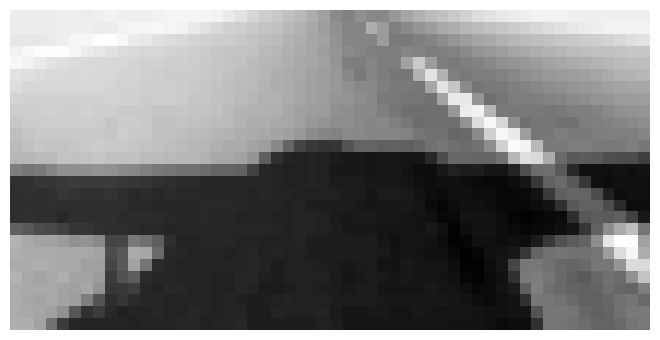}
    \end{subfigure}
    \hfill
    \begin{subfigure}[t]{0.19\textwidth}
        \includegraphics[width=\textwidth]{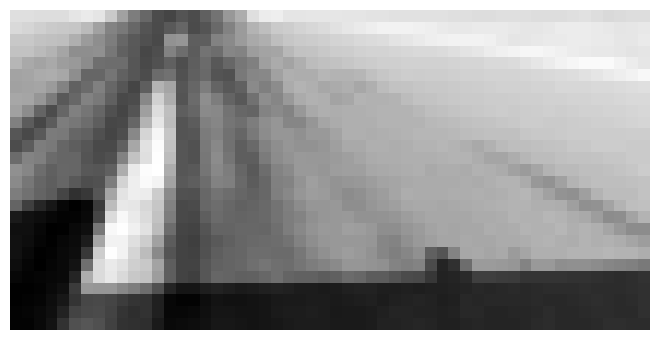}
    \end{subfigure}
    \hfill
    \begin{subfigure}[t]{0.19\textwidth}
        \includegraphics[width=\textwidth]{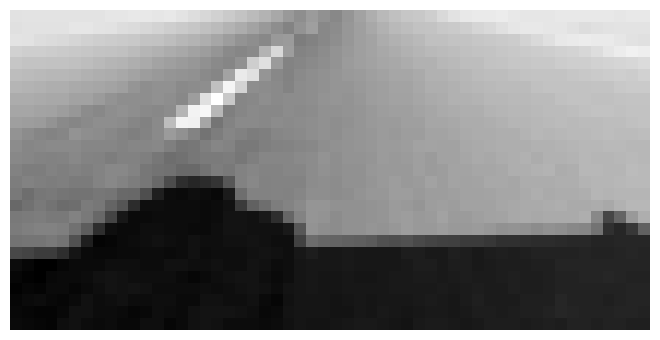}
    \end{subfigure}
    \hfill
    \begin{subfigure}[t]{0.19\textwidth}
        \includegraphics[width=\textwidth]{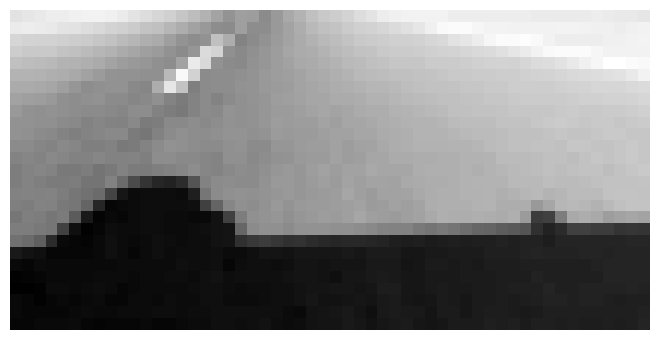}
    \end{subfigure}
    \hfill
    \begin{subfigure}[t]{0.19\textwidth}
        \includegraphics[width=\textwidth]{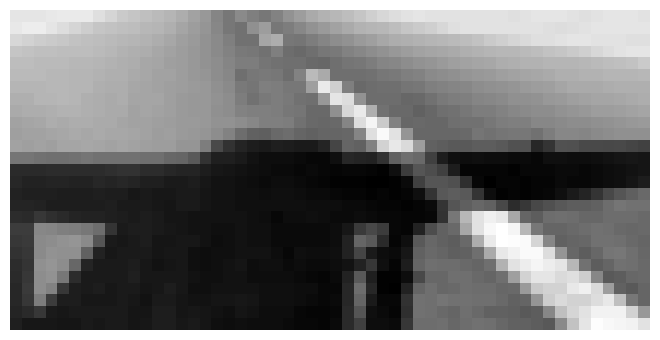}
    \end{subfigure}

    \begin{subfigure}[t]{0.19\textwidth}
        \includegraphics[width=\textwidth]{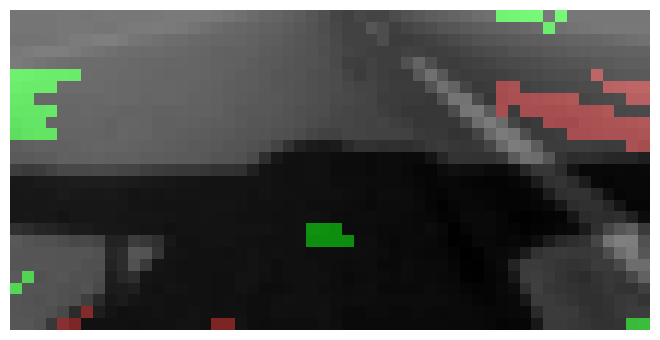}
        \caption{GT: 5.06 (Pred: 5.13)}
    \end{subfigure}
    \hfill
    \begin{subfigure}[t]{0.19\textwidth}
        \includegraphics[width=\textwidth]{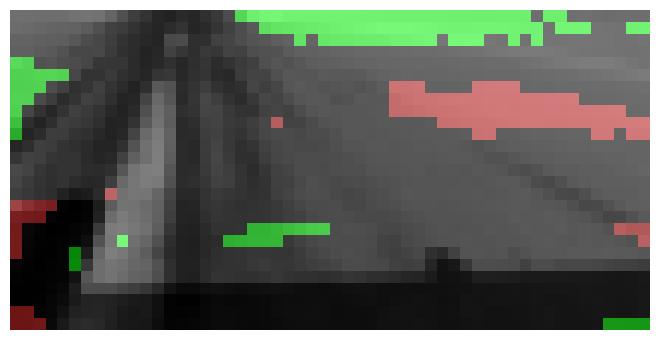}
        \caption{GT: 1.55 (Pred: 1.40)}
    \end{subfigure}
    \hfill
    \begin{subfigure}[t]{0.19\textwidth}
        \includegraphics[width=\textwidth]{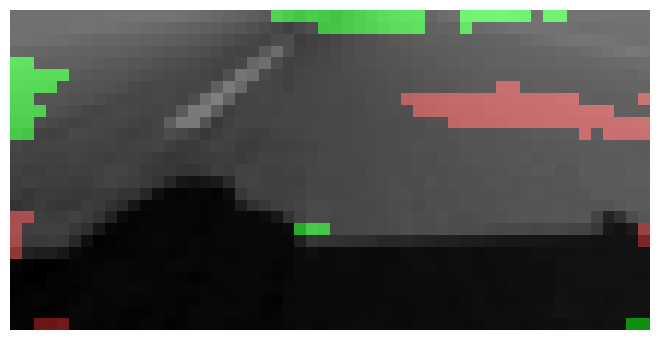}
        \caption{GT: -1.21 (Pred: -1.16)}
    \end{subfigure}
    \hfill
    \begin{subfigure}[t]{0.19\textwidth}
        \includegraphics[width=\textwidth]{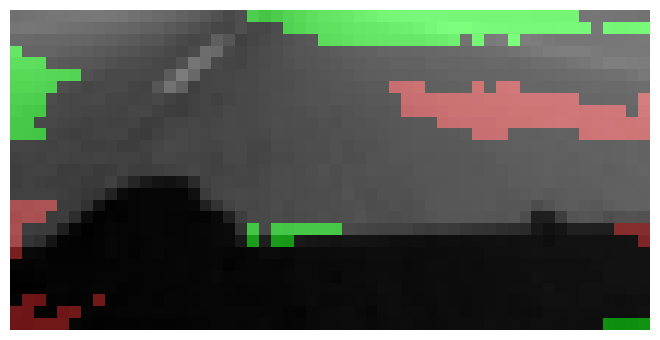}
        \caption{GT: -0.70 (Pred: -0.83)}
    \end{subfigure}
    \hfill
    \begin{subfigure}[t]{0.19\textwidth}
        \includegraphics[width=\textwidth]{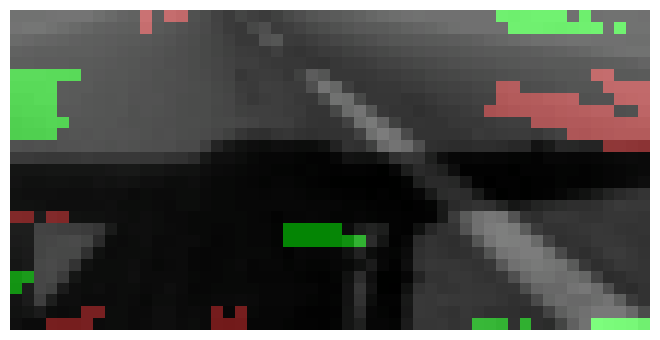}
        \caption{GT: 4.90 (Pred: 5.01)}
    \end{subfigure}
    \caption{Application of \vitax{} to the TaxiNet regression task. Top row: original input images. Bottom row: \vitax{} explanations highlighting features relevant to the predicted steering angle (values shown as Ground Truth (Model Prediction)). Green indicates positive perturbations, red indicates negative, relative to their influence on the output.}
    \label{fig:taxinet}
\end{figure}

\subsection{Qualitative Analysis and Ablation Studies (RQ3)}\label{subsec:RQ3}

We now examine the qualitative nature of \vitax{}'s explanations and conduct ablation studies on key components—the perturbation magnitude $\epsilon$, the heuristic ranking function, and the underlying solver—to understand their impact on the resulting explanations.

\paragraph{Qualitative Effectiveness}\label{subsec:qualitative_eval}

We examine \vitax{}'s application to the GTSRB (Figure~\ref{fig:GTSRB-straight}) and EMNIST (Figure~\ref{fig:emnist}) datasets to qualitatively assess the nature of its targeted semifactual explanations. These results demonstrate \vitax{}'s ability to identify minimal, critical feature subsets pertinent to understanding the resilience against specific target classes.

\begin{figure}[!t]
    \centering
    \scalebox{0.7}{%
    \begin{minipage}[t]{\textwidth}
        \centering
        \begin{subfigure}[t]{0.19\textwidth}
            \includegraphics[width=\textwidth]{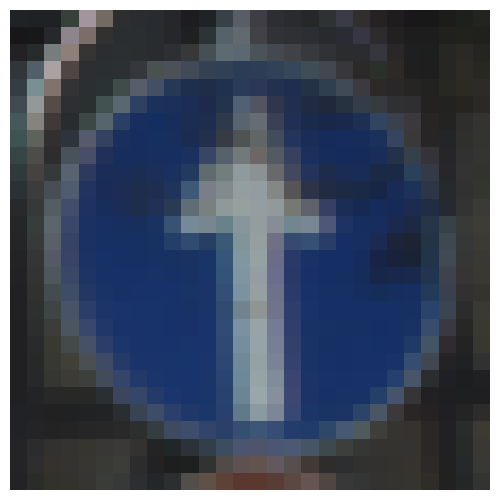}
            \subcaption{Straight (Original)}
        \end{subfigure}
        \hfill
        \begin{subfigure}[t]{0.19\textwidth}
            \includegraphics[width=\textwidth]{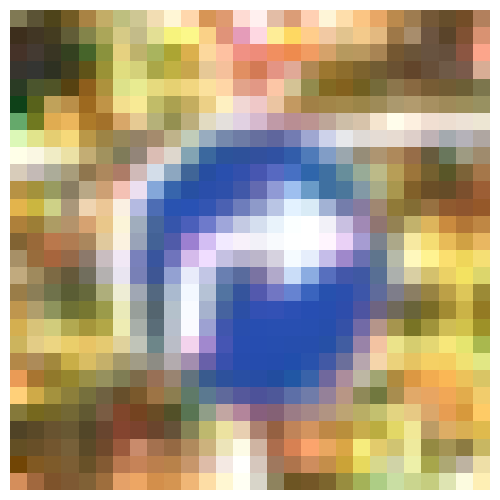}
            \subcaption{Turn Right (Target)}
        \end{subfigure}
        \hfill
        \begin{subfigure}[t]{0.19\textwidth}
            \includegraphics[width=\textwidth]{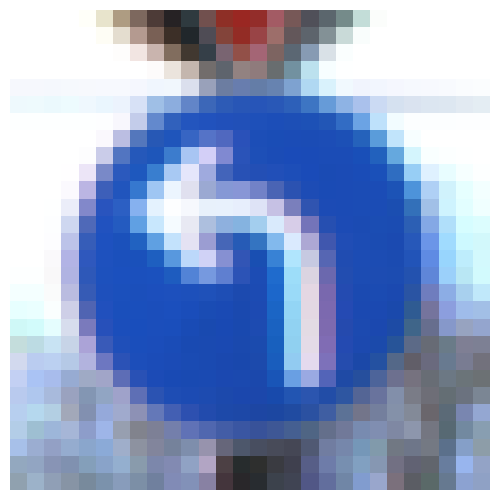}
            \subcaption{Turn Left (Target)}
        \end{subfigure}
        \hfill
        \begin{subfigure}[t]{0.19\textwidth}
            \includegraphics[width=\textwidth]{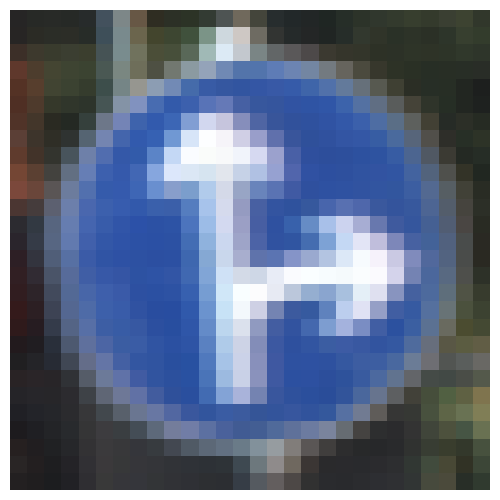}
            \subcaption{Straight or Right (Target)}
        \end{subfigure}
        \hfill
        \begin{subfigure}[t]{0.19\textwidth}
            \includegraphics[width=\textwidth]{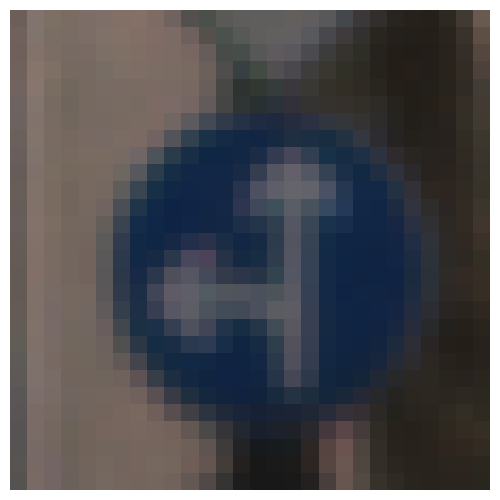}
            \subcaption{Straight or Left (Target)}
        \end{subfigure}

        \begin{subfigure}[t]{0.19\textwidth}
            \centering
            Targeted Explanation (\vitax{})
        \end{subfigure}
        \hfill
        \begin{subfigure}[t]{0.19\textwidth}
            \includegraphics[width=\textwidth]{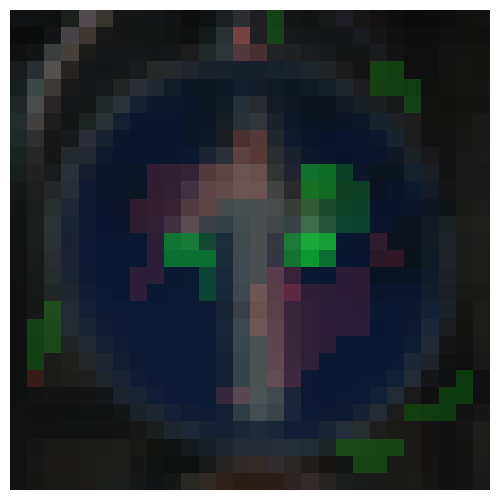}
        \end{subfigure}
        \hfill
        \begin{subfigure}[t]{0.19\textwidth}
            \includegraphics[width=\textwidth]{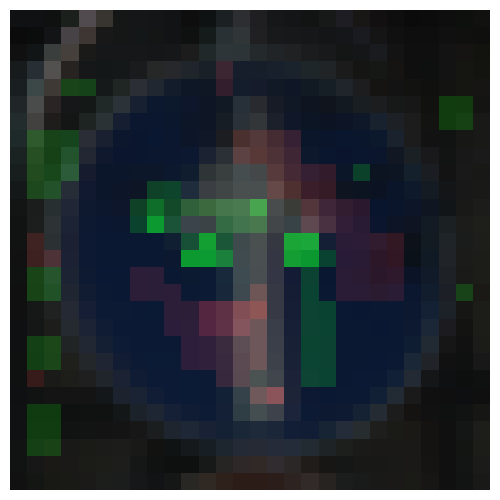}
        \end{subfigure}
        \hfill
        \begin{subfigure}[t]{0.19\textwidth}
            \includegraphics[width=\textwidth]{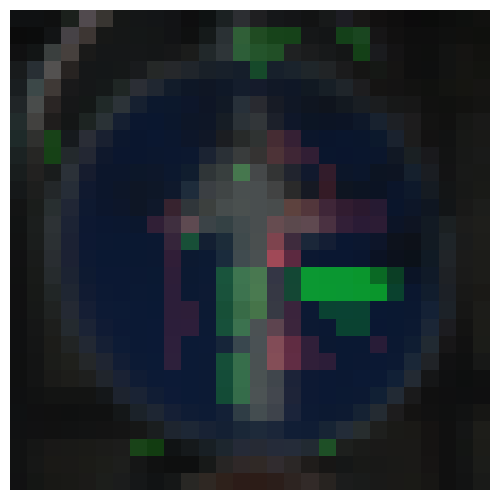}
        \end{subfigure}
        \hfill
        \begin{subfigure}[t]{0.19\textwidth}
            \includegraphics[width=\textwidth]{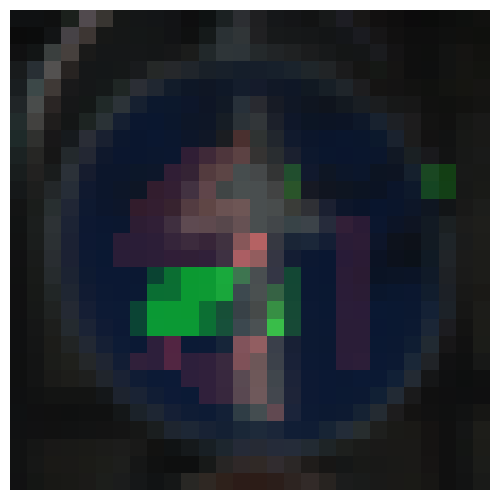}
        \end{subfigure}
          
    \end{minipage}}
    \caption{GTSRB Targeted Explanations. The first row shows the original ``Straight'' sign and various targets. The second row shows the \vitax{} semifactual explanations (Feature Set A) critical for the resilience of ``Straight'' against each specific target.}
    \label{fig:GTSRB-straight}
\end{figure}

In the GTSRB example (Figure~\ref{fig:GTSRB-straight}), we analyze the resilience of the ``Straight'' sign against different directional alternatives. When the target is ``Turn Right'' (Col 2) or ``Turn Left'' (Col 3), \vitax{} correctly identifies the specific side of the vertical arrow shaft as the critical feature set $A$. This aligns with intuition: these are the precise areas where the arrowheads for left or right turns would manifest. The explanation guarantees that the ``Straight'' classification persists even if these critical areas are perturbed, defining the boundary for this specific transition.

\begin{figure}[!t]
\centering
\begin{subfigure}{0.10\textwidth}
  \includegraphics[width=\linewidth]{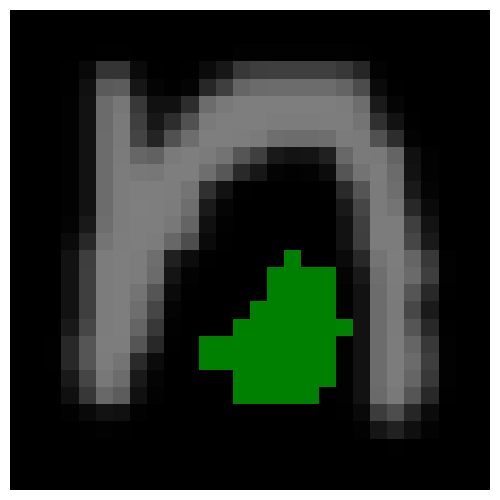}
  \caption{n to A}
\end{subfigure}
\hfill
\begin{subfigure}{0.10\textwidth}
  \includegraphics[width=\linewidth]{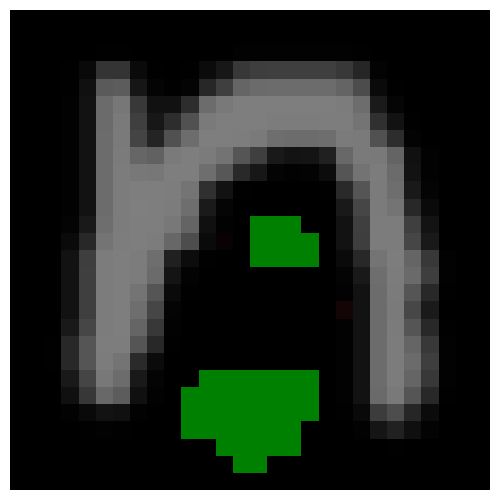}
  \caption{n to B}
\end{subfigure}
\hfill
\begin{subfigure}{0.10\textwidth}
  \includegraphics[width=\linewidth]{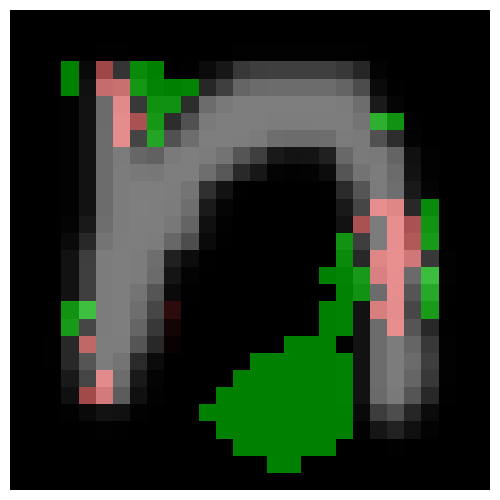}
  \caption{n to C}
\end{subfigure}
\hfill
\begin{subfigure}{0.10\textwidth}
  \includegraphics[width=\linewidth]{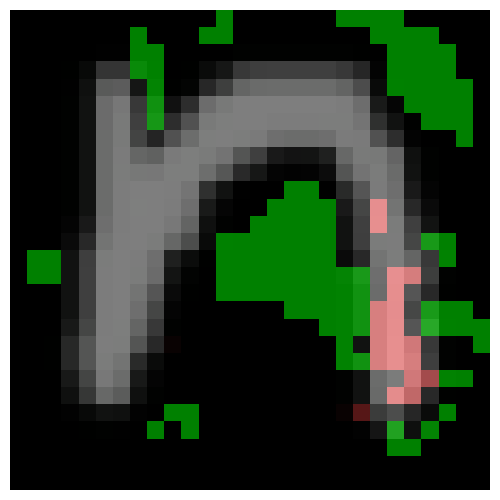}
  \caption{n to F}
\end{subfigure}
\hfill
\begin{subfigure}{0.10\textwidth}
  \includegraphics[width=\linewidth]{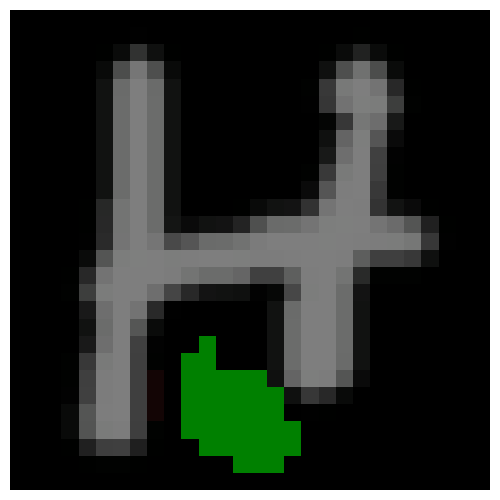}
  \caption{H to a}
\end{subfigure}
\hfill
\begin{subfigure}{0.10\textwidth}
  \includegraphics[width=\linewidth]{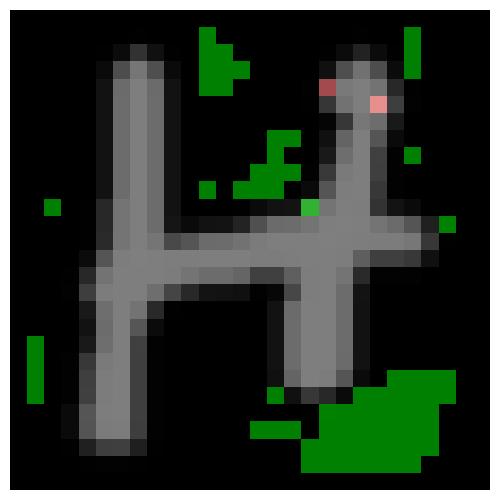}
  \caption{H to K}
\end{subfigure}
\hfill
\begin{subfigure}{0.10\textwidth}
  \includegraphics[width=\linewidth]{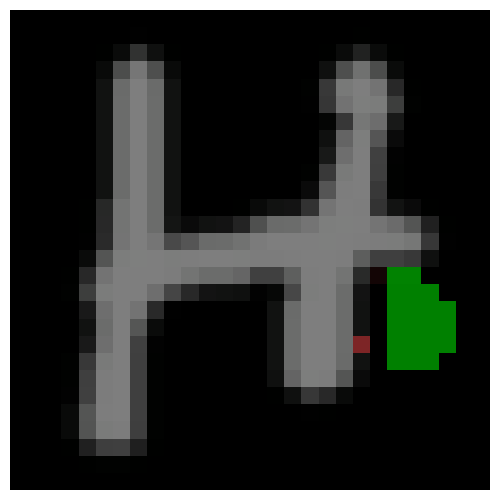}
  \caption{H to m}
\end{subfigure}
\hfill
\begin{subfigure}{0.10\textwidth}
  \includegraphics[width=\linewidth]{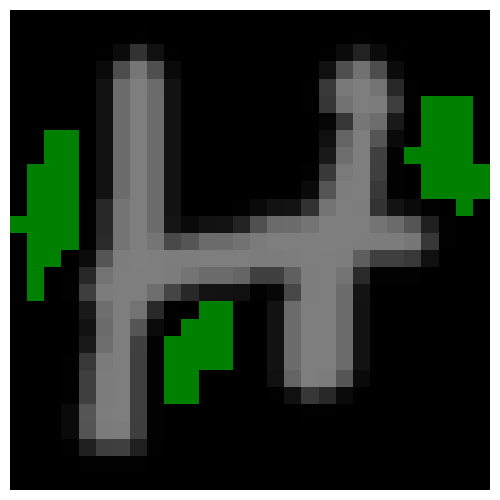}
  \caption{H to W}
\end{subfigure}
\caption{EMNIST Targeted Explanations. \vitax{} identifies the semifactual explanation relevant to specific character transitions.}
\label{fig:emnist}
\end{figure}

Similarly, on EMNIST (Figure~\ref{fig:emnist}), the explanations are highly specific to the target morphology. For the transition from `n' to `A' (a), \vitax{} highlights the center gap where a crossbar would be needed. For `H' to `m' (g), it focuses on the upper sections of the vertical lines. This demonstrates the core strength of \vitax{}: unlike general attribution methods that highlight features supporting the current class, \vitax{} isolates the features specifically relevant to the boundary with the chosen alternative.

\paragraph{Effect of Perturbation Size ($\epsilon$)}\label{subsec:epsilon_effect}

The perturbation magnitude, \( \epsilon \), is a critical parameter that controls the granularity of the explanation $A$ (Figure~\ref{fig:epsilon_compare}). It defines the magnitude of change allowed when verifying the Targeted $\epsilon$-Robustness property.

There is an inverse relationship between \( \epsilon \) and the size of the explanation set $A$. When \( \epsilon \) is small (e.g., 0.059), the allowed perturbation is minor. Consequently, a larger set of features $A$ can be included while still guaranteeing that the classification (e.g., `3') persists against the target (`8'). The explanation is broader.

As \( \epsilon \) increases (e.g., 0.216), the allowed perturbation is significant. To maintain the formal guarantee that the boundary is not crossed, the feature set $A$ must be reduced, focusing only on the most critical features prioritized by the heuristic ranking. The explanation becomes sparser and more focused on the key areas differentiating `3' from `8' (the left-side gaps). This demonstrates how $\epsilon$ acts as a ``zoom lens,'' allowing users to explore the decision boundary at varying levels of specificity.

\begin{figure}[!t]
    \centering
    \begin{subfigure}[t]{0.16\linewidth}
        \centering
        \includegraphics[width=\textwidth]{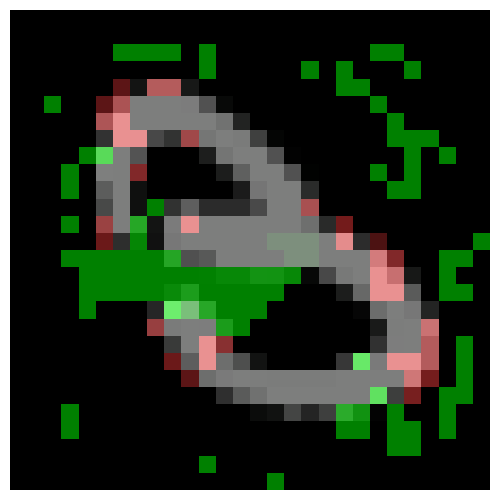}
        \caption{$\epsilon=0.059$}
    \end{subfigure}
    \begin{subfigure}[t]{0.16\linewidth}
        \centering
        \includegraphics[width=\textwidth]{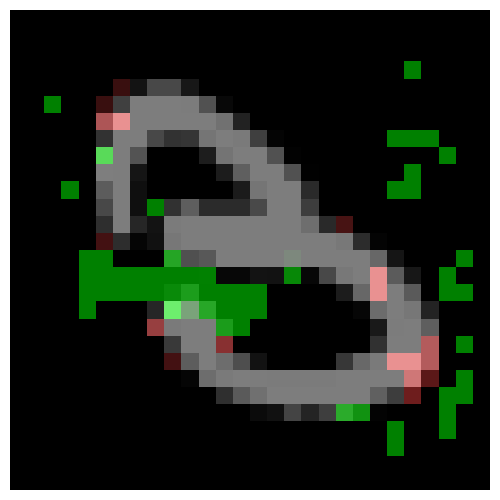}
        \caption{$\epsilon=0.098$}
    \end{subfigure}
    \begin{subfigure}[t]{0.16\linewidth}
        \centering
        \includegraphics[width=\textwidth]{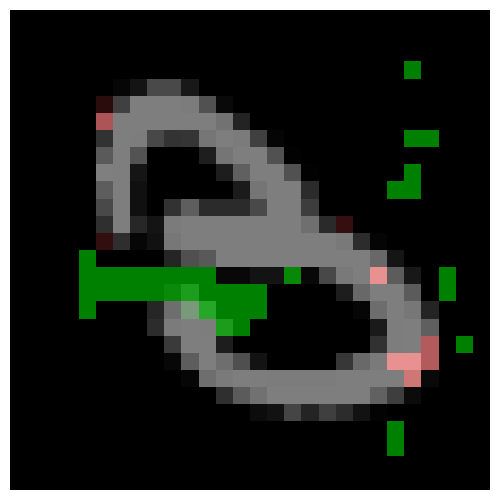}
        \caption{$\epsilon=0.137$}
    \end{subfigure}
    \begin{subfigure}[t]{0.16\linewidth}
        \centering
        \includegraphics[width=\textwidth]{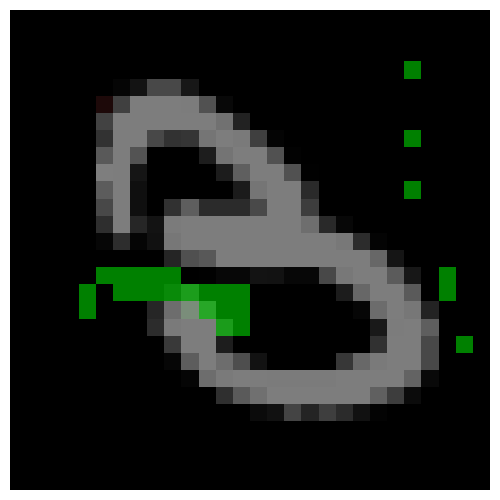}
        \caption{$\epsilon=0.216$}
    \end{subfigure}
    \caption{\vitax{}'s targeted explanation for transitioning MNIST `3' (original) towards `8' (target) at different perturbation magnitudes $\epsilon$. As $\epsilon$ increases, the minimal set $A$ satisfying Targeted $\epsilon$-Robustness tends to become smaller and more focused.}
    \label{fig:epsilon_compare}
\end{figure}

\paragraph{Evaluating Heuristic Ranking Functions}\label{subsec:heuristic_eval}

\begin{figure}[t]
    \centering
    \begin{subfigure}[b]{0.45\textwidth}
        \centering
        \includegraphics[width=\textwidth]{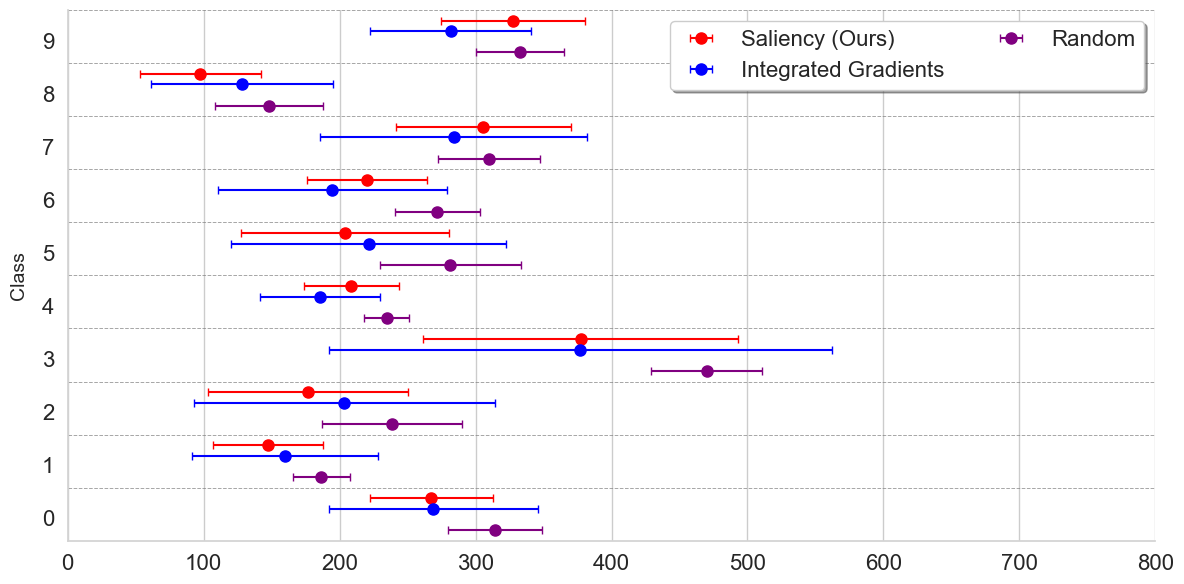}
        \caption{MNIST}
        \label{fig:sub1}
    \end{subfigure}
    \begin{subfigure}[b]{0.45\textwidth}
        \centering
        \includegraphics[width=\textwidth]{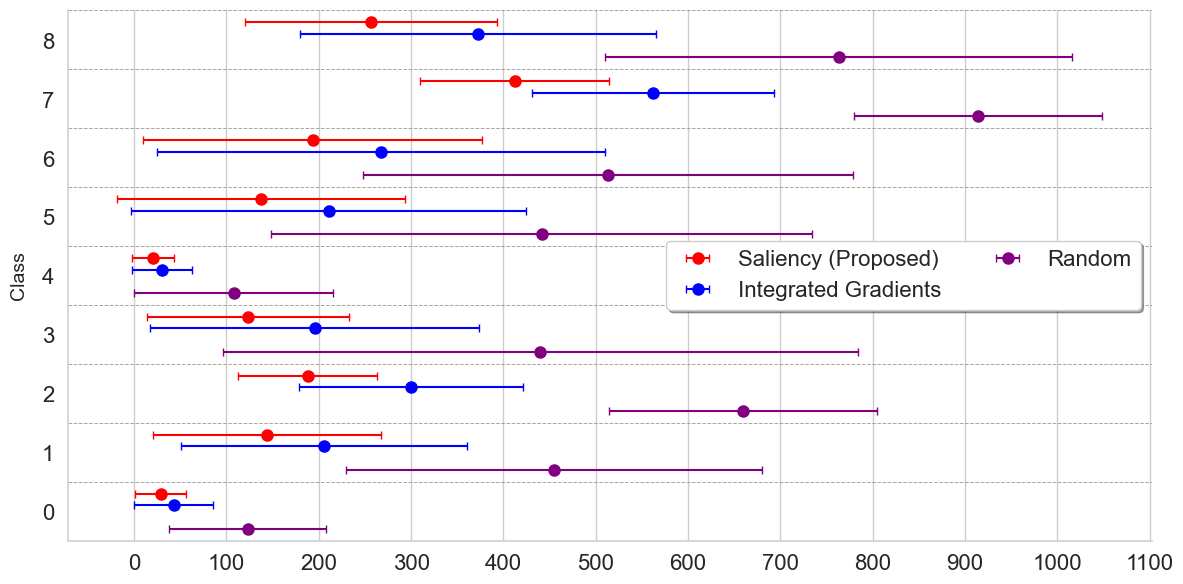}
        \caption{GTSRB}
        \label{fig:sub2}
    \end{subfigure}
    \caption{Comparison of explanation cardinality (Number of Pixels, x-axis) resulting from different heuristic ranking functions across classes (y-axis). Lower cardinality is better. Sensitivity consistently yields the most minimal explanations. (GTSRB shows the first nine classes for readability).}
    \label{fig:heuristic}
\end{figure}

The choice of the heuristic ranking function ($\pi$) is crucial for the efficiency and quality (minimality) of the \vitax{} explanation. While the formal guarantee holds regardless of the heuristic (\vitax{} guarantees $A$ is the maximal robust prefix of $\pi$), an effective heuristic guides the search towards smaller feature sets faster. We compared our proposed sensitivity-based approach (gradient saliency towards the target class) against Integrated Gradients (IG) \cite{sundararajan_axiomatic_2017} and a Random ranking baseline.

The sensitivity-based heuristic consistently leads to explanations with significantly lower cardinality across both MNIST and GTSRB datasets (Figure~\ref{fig:heuristic}). Random ranking performs the worst, often resulting in very large feature sets, as the search cannot efficiently prioritize relevant features.

The superiority of the sensitivity heuristic stems from its direct alignment with the goal of \vitax{}: by prioritizing features based on their direct impact on the target class logits, the search efficiently pinpoints the smallest prefix of features that defines the decision boundary. This analysis confirms that a well-aligned heuristic is pivotal for delivering minimal and focused explanations.

\paragraph{Impact of Solver Choice}\label{subsec:solver_scalability}

The performance of \vitax{} is dependent on the underlying reachability solver. We compared different methods provided by the NNV tool—including Exact Star, Approx-Star, Relax-Star variants, and CP-Star—on a smaller MNIST model (Table~\ref{star-methods}) to assess the trade-offs between precision and speed.

The Exact Star method provides sound and complete analysis but is computationally expensive (30.71s). Approx-Star, which is sound but incomplete (it over-approximates the reach set), offers a significant speedup (2.04s, approximately 15 times faster). CP-Star, while crucial for complex architectures (Section 4.3), was the slowest among the MLP architectures (52.18s).

Interestingly, the Approx-Star method yields explanations with lower cardinality (111.46 vs 129.66 for Exact Star) while maintaining nearly identical fidelity (0.218 vs 0.220) and robustness (0.59 vs 0.60). Because Approx-Star over-approximates the reach set, it is more conservative in certifying robustness. This may force the search algorithm to identify a smaller, more critical feature set $A$ to satisfy the Targeted $\epsilon$-Robustness property under the over-approximation. Relax-Star methods offer marginal speed improvements with slight reductions in fidelity.

This analysis, combined with the findings in Section~\ref{subsec:RQ2}, demonstrates the importance of \vitax{}'s solver-agnostic design. For simpler models such as MLPs, Approx-Star generally provides the optimal trade-off, enabling the efficient delivery of formally sound explanations at a significantly reduced computational cost (as also seen in Figure~\ref{fig:combined_scalabilitiy_input_sizes}). However, the choice of solver is architecture-dependent; for deeper and more complex networks like ResNets or Inception (Table~\ref{tab:complex_architectures_cpstar}), alternative solvers such as CP-Star may be necessary to handle the complexity of the reachability analysis.

\begin{table}[t]
\small
  \caption{Comparison of Different Reachability Solver Methods within \vitax{} (on a smaller MNIST model).}
  \label{star-methods}
  \centering
  \begin{tabular}{lcccl}
    \toprule
    Solver Method & Time (s) $\downarrow$ & Card. $\downarrow$ & Fidelity $\uparrow$  & Rob. (NE) $\uparrow$\\
    \midrule
    Exact Star     & 30.71        & 129.66       & 0.220              & 0.60\\
    Approx Star    & 2.04         & 111.46       & 0.218              & 0.59\\
    Relax Star 50\% & 2.06         & 111.46       & 0.218              & 0.59\\
    Relax Star 75\% & 2.03         & 109.91       & 0.217              & 0.59\\
    Relax Star 85\% & 2.00       & 109.91       & 0.217              & 0.59\\
    CP Star        & 52.18        & 130.28       & 0.211 & 0.59\\
   \bottomrule
  \end{tabular}
\end{table}

\section{Related Work}\label{sec:related_work}

The field of Explainable AI (XAI) aims to render the behavior of complex models transparent. We review the landscape relevant to \vitax{}, focusing on the evolution from heuristic methods to the emerging demand for formal guarantees and robustness.

\subsection{Heuristic XAI: Attribution and Surrogates}
Attribution-based methods are pivotal, highlighting influential input data portions that contribute to a model's decision. Examples include Grad-CAM, which uses gradient-based localization \cite{selvaraju_grad-cam_2020}, Integrated Gradients (IG) that assess feature significance via model gradients \cite{sundararajan_axiomatic_2017}, and DeepLIFT \cite{shrikumar_learning_2019}. Game-theoretic approaches like SHAP (SHapley Additive exPlanations) values use game theory to assign importance \cite{lundberg_unified_2017}. Other techniques include the Input X Gradient method, which multiplies the input by the output gradients \cite{shrikumar_not_2016}, and Guided Backpropagation, which enhances standard backpropagation for visualization \cite{springenberg_striving_2015}. These methods, among others \cite{linardatos2020explainable, wang2024microxercise}, collectively enhance neural network interpretability, though they often provide only correlational insights and have been shown to lack stability.

Instead of analyzing the complex model directly, local surrogate models like LIME (Local Interpretable Model-agnostic Explanations) \cite{ribeiro_why_2016} train simpler, interpretable models to approximate the black-box behavior locally. Anchors \cite{ribeiro_anchors_2018} identifies a minimal set of feature conditions sufficient to lock in a prediction. These provide plausible local explanations but lack formal assurances of their fidelity. Some approaches also utilize secondary models or delve into causal effects for classification tasks \cite{karimi_algorithmic_2020,tran_unsupervised_2022}.


\subsection{Robustness and Formal Guarantees in XAI}
The fragility of standard heuristic methods has motivated a shift towards rigorous explanations. This involves two main thrusts: ensuring the stability of explanations (Robust XAI) and providing mathematical guarantees about their content (Formal XAI).

Regarding robustness of explanations, small input perturbations should not lead to vastly different explanations. Wicker et al.~\cite{wicker2022robust} introduced constraints for robust feature attribution (e.g., saliency maps). Similarly, the robustness of CFEs is critical, ensuring that suggested changes remain valid under noise or model shifts~\cite{jiang2024robust}. Verification techniques have been employed to certify this validity. For instance, Leofante and Lomuscio \cite{leofante2023robust} introduced an approach to formally verify robust CFEs for human-neural multi-agent systems.

Formal methods in XAI aim to provide rigorous guarantees for explanations \cite{muller2022prima,zelazny2022optimizing,singh2019abstract}. Many works define an explanation as a minimal subset of input features critical for a model's decision, such that perturbations outside this subset ideally do not alter the prediction. Examples include abductive explanations \cite{ignatiev2019abduction}, sufficient reasoning \cite{darwiche_reasons_2020}, prime implicants \cite{shih_symbolic_2018}, and contrastive explanations \cite{miller2019explanation}. The concept of \emph{bounded} perturbations as explanations has been introduced, particularly in the NLP domain \cite{malfa_guaranteed_2021}. Building on these foundations, VeriX presents an iterative method to find the smallest number of features within continuous $\epsilon$-ball perturbations that constitute an explanation \cite{wu_verix_2023}. These methods often prioritize global cardinality minimality, which is generally NP-hard. Such efforts are closely related to the broader field of neural network verification, which often employs traditional software verification techniques like SMT \cite{katz2017reluplex} and MILP \cite{tjeng2017mipverify} for exact analysis, though these can face scalability challenges. Consequently, faster, albeit sometimes less precise, methods like abstract interpretation and probabilistic assessments have been developed. Significant research investigates how automated reasoning, through abstraction \cite{muller2022prima,zelazny2022optimizing,wu2022scalable,singh2019abstract,zhang2018efficient,wang2018efficient,wang2018formal,Gehr2018ai2,anderson2019optimization,tran2020verification,wei2023convex,zhang2024optimizing} and search techniques \cite{ehlers2017formal,katz2017reluplex,katz2019marabou,huang2017safety,ferrari2022complete,geng2023towards}, can verify network specifications \cite{liu2021algorithms,huang2020survey}. Formal logic-based approaches have also been explored for enforcing temporal properties in neural networks~\cite{ma2020stlnet} and for predictive monitoring with calibrated uncertainty~\cite{ma2021predictive}. Recent advances also include neural network-based meta-verification strategies that predict safety properties to reduce computational load \cite{elboher2022neural}.

\vitax{} uniquely intersects these areas and is distinct from existing paradigms in several key ways. \textbf{(1) Derivation vs. Stability:} Unlike methods that improve the robustness \emph{of} an existing explanation (e.g., Wicker et al. \cite{wicker2022robust} certify that gradient-based explanations remain stable when inputs or model parameters are perturbed), \vitax{} uses a formal robustness property (Targeted $\epsilon$-Robustness) to \emph{derive} the explanation itself—the explanation is the direct result of verifying which features satisfy the robustness property. For example, Wicker et al.'s approach would compute a gradient-based explanation (e.g., feature attribution scores) for an image classified as `7', then verify the explanation remains consistent under perturbations. In contrast, \vitax{} directly identifies the minimal feature set (e.g., specific pixels) that can be perturbed by $\epsilon$ without causing a 7$\to$9 flip, with the robustness verification defining the explanation. \textbf{(2) Semifactual vs. Counterfactual Guarantees:} \vitax{} differs fundamentally from approaches focused on robust CFEs (e.g., Leofante and Lomuscio \cite{leofante2023robust}). While CFE methods guarantee the robustness of a \emph{decision flip}, \vitax{} guarantees the robustness of the \emph{non-flip} (the semifactual persistence). \textbf{(3) Scalability and Targeting:} Unlike Formal XAI methods focused on global minimality and general sufficiency (e.g., VeriX \cite{wu_verix_2023}, Ignatiev et al. \cite{ignatiev2019abduction}), \vitax{} sacrifices the intractable goal of global minimality for scalability (using $\mathcal{O}(\log_2(N))$ oracle calls) and provides targeted insights into specific class boundaries.

\section{Discussion}\label{sec:discussion}
Our empirical results demonstrate the effectiveness of \vitax{} in generating precise, formally-backed explanations. In what follows, we examine four aspects of this contribution: (1) the nature and scope of the Targeted $\epsilon$-Robustness guarantee, (2) how \vitax{} produces verified semifactual explanations, (3) its fundamental distinction from counterfactual explanation paradigms, and (4) its positioning within the broader XAI landscape.

\paragraph{The Nature of the Guarantee for Targeted Semifactual Robustness}
A key aspect of \vitax{} is the nature of the formal guarantee it provides. The Targeted $\epsilon$-Robustness (Definition~\ref{def:targeted_robustness_def}) is specifically local to the given input $x$, the identified feature subset $A$, and the chosen target class $\mathbf{t}$. This means that for the explanation $A$, \vitax{} formally verifies that perturbing these specific features by $\epsilon$ does not cause the model's classification for the original class $\mathbf{y}$ to flip to the target class $\mathbf{t}$. This provides a semifactual guarantee: ``even if features in $A$ are perturbed by $\epsilon$, the classification remains $\mathbf{y}$.''

This type of guarantee differs significantly from formal methods that aim to prove, for instance, the robustness of a model over an entire $\epsilon$-ball neighborhood around $x$ against \emph{any} adversarial perturbation towards \emph{any} incorrect class. Such global or comprehensive robustness certifications provide broad assurances about the model's stability. In contrast, \vitax{} offers a more nuanced and specific guarantee that is intrinsically tied to the explanation itself. Rather than certifying the model's general invulnerability, \vitax{} certifies a property of the \emph{explanation's features}: that they represent a point of semifactual robustness against a specific, targeted alternative.

Unlike Formal XAI methods focused on global minimality and general sufficiency (e.g., \cite{ignatiev2019abduction}), which are often NP-hard and thus computationally intractable, \vitax{} makes a deliberate trade-off. It sacrifices the intractable goal of global cardinality minimality for scalability and targeted insights. The novelty of our approach lies not in the binary search algorithm itself, but in (1) the formulation of a new formal property, Targeted $\epsilon$-Robustness, and (2) the creation of a framework that integrates a sensitivity-driven heuristic with formal verification to make these explanations computationally feasible. By using $\mathcal{O}(\log_{2}(N))$ oracle calls to a formal solver, \vitax{} provides a guarantee that is both rigorous and practical: it identifies the maximal robust prefix of a well-aligned heuristic. This offers a trustworthy and scalable alternative for understanding model behavior at specific class boundaries where exact methods would be infeasible.

\paragraph{\vitax{} as Verified Semifactual Explanations}
Methods focused on robust XAI typically aim to ensure the robustness \emph{of} explanations (such as gradient-based feature attributions) against perturbations in the input; they seek to certify that explanations do not change significantly when the input changes slightly \cite{wicker2022robust}. In contrast, \vitax{} uses a robustness property of the \emph{model's output} (Targeted $\epsilon$-Robustness) to \emph{derive} the explanation itself. The explanation ($A$) is defined by the verification of the model's robustness, rather than being a post-hoc robustness certification of a pre-computed explanation.

\vitax{} generates semifactual explanations \cite{aryal2023even, kenny2023utility} that answer ``even-if'' questions: ``Even if features $A$ are perturbed by $\epsilon$, the classification remains $\mathbf{y}$.'' The key innovation is making semifactual explanations \emph{targeted}: rather than identifying features sufficient for $\mathbf{y}$ against any alternative, we focus on features most relevant to specific class transitions. This targeting occurs in two stages:  \textbf{(1)} We use sensitivity analysis ($\nabla_x f_t(x)$) to identify features most relevant to the $\mathbf{y} \to \mathbf{t}$ boundary. \textbf{(2)} We formally verify these features satisfy ``even-if'' robustness (can be perturbed by $\epsilon$ without flipping to $\mathbf{t}$). This design provides class-specific semifactual insights that are both efficient to compute ($\mathcal{O}(\log n)$ via binary search on the sensitivity ranking) and formally guaranteed (via reachability analysis).

\paragraph{Relation to Counterfactual Explanations}
Our approach differs fundamentally from Counterfactual Explanations (CFEs) \cite{haufe2024explainable, montavon_methods_2018}. CFEs identify minimal changes that flip decisions, answering ``if-only'' questions and explaining model fragility. \vitax{} answers ``even-if'' questions and explains robustness against specific alternatives.

It is crucial to clarify that while the visualizations in Figures \ref{fig:comparison_to_table1} and \ref{fig:emnist} highlight features sensitive to a transition, these figures visualize the feature set $A$, a semifactual insight about which features can change without flipping the class, not an actualized counterfactual instance that successfully flips the class.

\vitax{} also differs fundamentally from verification-based robust CFEs \cite{leofante2023robust, jiang2024robust}. Robust CFEs verify the robustness of a \emph{flip}: they find a counterfactual $x_{cf}$ where $f(x_{cf}) = \mathbf{t}$, then verify perturbations around $x_{cf}$ maintain the target class $\mathbf{t}$. The specification is: $\Phi_{\text{RCFE}}: \forall x' \in B_\epsilon(x_{cf}), f(x') = \mathbf{t}$.
\vitax{} verifies the robustness of the \emph{non-flip}: verifying that perturbations of features $A$ at the original input $x$ maintain the original class $\mathbf{y}$. The specification is: $\Phi_{\text{\vitax{}}}: \forall x' \in X_{\epsilon,A}, l_{y,A} > u_{t,A}$. These address complementary questions:  \textit{Robust CFEs:} ``What change flips to $\mathbf{t}$, and is that flip stable?'' \textit{\vitax{}:} ``What features preserve $\mathbf{y}$ against perturbations toward $\mathbf{t}$?''

Therefore, this granular, explanation-centric guarantee is not necessarily ``weaker'' than broader robustness guarantees but is differently focused, offering specific and verifiable insights into the mechanics of potential class transitions. Such understanding can be critical in applications where the conditions under which a decision \emph{doesn't} change, despite pressures towards an alternative, are as important as understanding why it might flip.

\paragraph{Positioning in the Broader XAI Landscape}
Beyond understanding the intrinsic nature of its guarantees, situating \vitax{} relative to established XAI paradigms further clarifies its contribution. \vitax{}'s approach to generating formally verified, targeted explanations offers valuable perspectives when contrasted with other XAI paradigms, notably contrastive explanations and research into adversarial perturbations \cite{miller2019explanation}. While traditional contrastive explanations often address ``Why P rather than Q?'' by showing general distinguishing features or minimal changes to achieve Q, \vitax{} provides a more specific insight: it identifies a minimal, formally verified feature subset $A$ whose $\epsilon$-perturbation is guaranteed \emph{not} to flip the classification from $\mathbf{y}$ to $\mathbf{t}$. This offers a nuanced understanding of the boundary conditions for the specific $\mathbf{y} \rightarrow \mathbf{t}$ potential transition, explaining the semifactual robustness of $\mathbf{y}$ against that particular alternative based on the identified features in $A$. Similarly, concerning adversarial robustness, \vitax{} differs from methods seeking \emph{any} minimal perturbation to cause \emph{any} misclassification \cite{chakraborty2021survey}.

\paragraph{Limitations}
While \vitax{} demonstrates strong empirical performance and provides formal guarantees, several limitations should be noted. The computational cost of formal verification remains substantial, even with $\mathcal{O}(\log_2(N))$ oracle calls. Verification can require tens of seconds per sample for complex architectures, making \vitax{} most suitable for offline analysis or safety-critical applications where formal guarantees justify this investment; future work could investigate more efficient solver strategies or parallelized verification to reduce this overhead. The quality of explanations depends on the sensitivity heuristic; while our gradient-based approach performs well empirically, the framework does not provide automatic guidance for selecting appropriate $\epsilon$ values, requiring domain expertise. Adaptive $\epsilon$ selection strategies guided by the model's local decision geometry represent a promising direction. Our guarantee is inherently local to a specific input $x$, feature subset $A$, and target class $\mathbf{t}$, and as observed in Figure~\ref{fig:robust_tot_not_tok} in Appendix~\ref{appendix:not_robust_to_others}, may not extend to robustness against all other classes $k$ when decision boundaries are closely clustered; extending the framework to jointly certify robustness across multiple target classes is a natural next step. Finally, while semifactual explanations offer unique insights, further user studies with domain experts are needed to validate whether these targeted insights are actionable for model validation and trust calibration in practice.

Despite these limitations, \vitax{} represents a meaningful step toward trustworthy, targeted formal explanations for safety-critical AI systems, bridging the gap between scalable XAI and rigorous verification.

\section{Conclusion and Future Work}\label{sec:conclusion}
We introduced \vitax{}, a formal XAI method that generates targeted semifactual explanations for deep learning models. \vitax{} identifies minimal feature subsets $A$ and provides a Targeted $\epsilon$-Robustness guarantee, formally certifying that perturbing $A$ by $\epsilon$ does not flip the classification from $\mathbf{y}$ to a chosen target class $\mathbf{t}$. This explanation-centric guarantee offers precise, verified insights into the boundary conditions for specific class transitions---a semifactual understanding of why a classification persists against a particular alternative.

Our empirical evaluations across image classification (MNIST, GTSRB, EMNIST) and regression (TaxiNet) tasks demonstrate that \vitax{} achieves higher fidelity, greater minimality, and dramatically faster runtimes compared to existing formal and heuristic XAI methods, while maintaining formal guarantees. By bridging sensitivity-driven heuristics with reachability-based verification in $\mathcal{O}(\log_2(N))$ oracle calls, \vitax{} offers a practical and scalable approach to formally grounded, targeted explanations.

Several directions remain for future work. First, we plan to explore strategies for identifying collectively influential features that may not individually rank as most sensitive but jointly approach decision boundaries. Second, we aim to extend \vitax{} to diverse data modalities, including natural language and time-series data. Finally, in close collaboration with domain experts, we intend to apply \vitax{} to safety-critical domains such as medical image analysis, validating its practical utility for model debugging, trust calibration, and responsible AI deployment.

\begin{acks}
This work was supported in part by the National Science Foundation under grants 2443803, 2220401, and 2427711. Any opinions, findings, and conclusions or recommendations expressed in this material are those of the authors and do not necessarily reflect the views of the National Science Foundation.
\end{acks}

\bibliographystyle{ACM-Reference-Format}
\bibliography{references, references_manual, verification}


\begin{thebibliography}{75}


\ifx \showCODEN    \undefined \def \showCODEN     #1{\unskip}     \fi
\ifx \showDOI      \undefined \def \showDOI       #1{#1}\fi
\ifx \showISBNx    \undefined \def \showISBNx     #1{\unskip}     \fi
\ifx \showISBNxiii \undefined \def \showISBNxiii  #1{\unskip}     \fi
\ifx \showISSN     \undefined \def \showISSN      #1{\unskip}     \fi
\ifx \showLCCN     \undefined \def \showLCCN      #1{\unskip}     \fi
\ifx \shownote     \undefined \def \shownote      #1{#1}          \fi
\ifx \showarticletitle \undefined \def \showarticletitle #1{#1}   \fi
\ifx \showURL      \undefined \def \showURL       {\relax}        \fi
\providecommand\bibfield[2]{#2}
\providecommand\bibinfo[2]{#2}
\providecommand\natexlab[1]{#1}
\providecommand\showeprint[2][]{arXiv:#2}

\bibitem[An et~al\mbox{.}(2024)]%
        {an2024formal}
\bibfield{author}{\bibinfo{person}{Ziyan An}, \bibinfo{person}{Taylor~T Johnson}, {and} \bibinfo{person}{Meiyi Ma}.} \bibinfo{year}{2024}\natexlab{}.
\newblock \showarticletitle{Formal logic enabled personalized federated learning through property inference}. In \bibinfo{booktitle}{\emph{Proceedings of the AAAI Conference on Artificial Intelligence}}, Vol.~\bibinfo{volume}{38}. \bibinfo{pages}{10882--10890}.
\newblock


\bibitem[Anderson et~al\mbox{.}(2019)]%
        {anderson2019optimization}
\bibfield{author}{\bibinfo{person}{Greg Anderson}, \bibinfo{person}{Shankara Pailoor}, \bibinfo{person}{Isil Dillig}, {and} \bibinfo{person}{Swarat Chaudhuri}.} \bibinfo{year}{2019}\natexlab{}.
\newblock \showarticletitle{Optimization and abstraction: a synergistic approach for analyzing neural network robustness}. In \bibinfo{booktitle}{\emph{Proceedings of the 40th ACM SIGPLAN conference on programming language design and implementation}}. \bibinfo{pages}{731--744}.
\newblock


\bibitem[Aryal and Keane(2023)]%
        {aryal2023even}
\bibfield{author}{\bibinfo{person}{Saugat Aryal} {and} \bibinfo{person}{Mark~T Keane}.} \bibinfo{year}{2023}\natexlab{}.
\newblock \showarticletitle{Even if explanations: Prior work, desiderata \& benchmarks for semi-factual xai}.
\newblock \bibinfo{journal}{\emph{arXiv preprint arXiv:2301.11970}} (\bibinfo{year}{2023}).
\newblock


\bibitem[Chakraborty et~al\mbox{.}(2021)]%
        {chakraborty2021survey}
\bibfield{author}{\bibinfo{person}{Anirban Chakraborty}, \bibinfo{person}{Manaar Alam}, \bibinfo{person}{Vishal Dey}, \bibinfo{person}{Anupam Chattopadhyay}, {and} \bibinfo{person}{Debdeep Mukhopadhyay}.} \bibinfo{year}{2021}\natexlab{}.
\newblock \showarticletitle{A survey on adversarial attacks and defences}.
\newblock \bibinfo{journal}{\emph{CAAI Transactions on Intelligence Technology}} \bibinfo{volume}{6}, \bibinfo{number}{1} (\bibinfo{year}{2021}), \bibinfo{pages}{25--45}.
\newblock


\bibitem[Chowdhury et~al\mbox{.}(2025)]%
        {chowdhury2025looking}
\bibfield{author}{\bibinfo{person}{Townim~Faisal Chowdhury}, \bibinfo{person}{Vu~Minh~Hieu Phan}, \bibinfo{person}{Kewen Liao}, \bibinfo{person}{Nanyu Dong}, \bibinfo{person}{Minh-Son To}, \bibinfo{person}{Anton Hengel}, \bibinfo{person}{Johan Verjans}, {and} \bibinfo{person}{Zhibin Liao}.} \bibinfo{year}{2025}\natexlab{}.
\newblock \showarticletitle{Looking in the mirror: A faithful counterfactual explanation method for interpreting deep image classification models}.
\newblock \bibinfo{journal}{\emph{arXiv preprint arXiv:2509.16822}} (\bibinfo{year}{2025}).
\newblock


\bibitem[Cohen et~al\mbox{.}(2017)]%
        {cohen2017emnist}
\bibfield{author}{\bibinfo{person}{Gregory Cohen}, \bibinfo{person}{Saeed Afshar}, \bibinfo{person}{Jonathan Tapson}, {and} \bibinfo{person}{Andre Van~Schaik}.} \bibinfo{year}{2017}\natexlab{}.
\newblock \showarticletitle{EMNIST: Extending MNIST to handwritten letters}. In \bibinfo{booktitle}{\emph{2017 international joint conference on neural networks (IJCNN)}}. IEEE, \bibinfo{pages}{2921--2926}.
\newblock


\bibitem[Darwiche and Hirth(2020)]%
        {darwiche_reasons_2020}
\bibfield{author}{\bibinfo{person}{Adnan Darwiche} {and} \bibinfo{person}{Auguste Hirth}.} \bibinfo{year}{2020}\natexlab{}.
\newblock \bibinfo{title}{On {The} {Reasons} {Behind} {Decisions}}.
\newblock
\newblock
\urldef\tempurl%
\url{https://arxiv-org.proxy.library.vanderbilt.edu/abs/2002.09284v1}
\showURL{%
\tempurl}


\bibitem[Ehlers(2017)]%
        {ehlers2017formal}
\bibfield{author}{\bibinfo{person}{Ruediger Ehlers}.} \bibinfo{year}{2017}\natexlab{}.
\newblock \showarticletitle{Formal verification of piece-wise linear feed-forward neural networks}. In \bibinfo{booktitle}{\emph{Automated Technology for Verification and Analysis: 15th International Symposium, ATVA 2017, Pune, India, October 3--6, 2017, Proceedings 15}}. Springer, \bibinfo{pages}{269--286}.
\newblock


\bibitem[Elboher et~al\mbox{.}(2022)]%
        {elboher2022neural}
\bibfield{author}{\bibinfo{person}{Yizhak~Yisrael Elboher}, \bibinfo{person}{Elazar Cohen}, {and} \bibinfo{person}{Guy Katz}.} \bibinfo{year}{2022}\natexlab{}.
\newblock \showarticletitle{Neural network verification using residual reasoning}. In \bibinfo{booktitle}{\emph{Software Engineering and Formal Methods: 20th International Conference, SEFM 2022, Berlin, Germany, September 26--30, 2022, Proceedings}}. Springer, \bibinfo{pages}{173--189}.
\newblock


\bibitem[Fern{\'a}ndez et~al\mbox{.}(2022)]%
        {fernandez2022explanation}
\bibfield{author}{\bibinfo{person}{Rub{\'e}n~R Fern{\'a}ndez}, \bibinfo{person}{Isaac~Martin de Diego}, \bibinfo{person}{Javier~M Moguerza}, {and} \bibinfo{person}{Francisco Herrera}.} \bibinfo{year}{2022}\natexlab{}.
\newblock \showarticletitle{Explanation sets: A general framework for machine learning explainability}.
\newblock \bibinfo{journal}{\emph{Information Sciences}}  \bibinfo{volume}{617} (\bibinfo{year}{2022}), \bibinfo{pages}{464--481}.
\newblock


\bibitem[Ferrari et~al\mbox{.}(2022)]%
        {ferrari2022complete}
\bibfield{author}{\bibinfo{person}{Claudio Ferrari}, \bibinfo{person}{Mark~Niklas Muller}, \bibinfo{person}{Nikola Jovanovic}, {and} \bibinfo{person}{Martin Vechev}.} \bibinfo{year}{2022}\natexlab{}.
\newblock \showarticletitle{Complete verification via multi-neuron relaxation guided branch-and-bound}.
\newblock \bibinfo{journal}{\emph{arXiv preprint arXiv:2205.00263}} (\bibinfo{year}{2022}).
\newblock


\bibitem[{FICO Community}(2019)]%
        {fico2019challenge}
\bibfield{author}{\bibinfo{person}{{FICO Community}}.} \bibinfo{year}{2019}\natexlab{}.
\newblock \bibinfo{title}{Explainable Machine Learning Challenge}.
\newblock \bibinfo{howpublished}{\url{https://community.fico.com/s/explainable-machine-learning-challenge}}.
\newblock
\newblock
\shownote{Accessed: 2025-09-30}.


\bibitem[Gainer-Dewar and Vera-Licona(2017)]%
        {gainer2017minimal}
\bibfield{author}{\bibinfo{person}{Andrew Gainer-Dewar} {and} \bibinfo{person}{Paola Vera-Licona}.} \bibinfo{year}{2017}\natexlab{}.
\newblock \showarticletitle{The minimal hitting set generation problem: algorithms and computation}.
\newblock \bibinfo{journal}{\emph{SIAM Journal on Discrete Mathematics}} \bibinfo{volume}{31}, \bibinfo{number}{1} (\bibinfo{year}{2017}), \bibinfo{pages}{63--100}.
\newblock


\bibitem[Gehr et~al\mbox{.}(2018)]%
        {Gehr2018ai2}
\bibfield{author}{\bibinfo{person}{Timon Gehr}, \bibinfo{person}{Matthew Mirman}, \bibinfo{person}{Dana Drachsler{-}Cohen}, \bibinfo{person}{Petar Tsankov}, \bibinfo{person}{Swarat Chaudhuri}, {and} \bibinfo{person}{Martin~T. Vechev}.} \bibinfo{year}{2018}\natexlab{}.
\newblock \showarticletitle{{AI2:} Safety and Robustness Certification of Neural Networks with Abstract Interpretation}. In \bibinfo{booktitle}{\emph{{SP}}}. \bibinfo{publisher}{{IEEE} Computer Society}, \bibinfo{pages}{3--18}.
\newblock
\urldef\tempurl%
\url{https://doi.org/10.1109/SP.2018.00058}
\showDOI{\tempurl}


\bibitem[Geng et~al\mbox{.}(2023)]%
        {geng2023towards}
\bibfield{author}{\bibinfo{person}{Chuqin Geng}, \bibinfo{person}{Nham Le}, \bibinfo{person}{Xiaojie Xu}, \bibinfo{person}{Zhaoyue Wang}, \bibinfo{person}{Arie Gurfinkel}, {and} \bibinfo{person}{Xujie Si}.} \bibinfo{year}{2023}\natexlab{}.
\newblock \showarticletitle{Towards reliable neural specifications}. In \bibinfo{booktitle}{\emph{International Conference on Machine Learning}}. PMLR, \bibinfo{pages}{11196--11212}.
\newblock


\bibitem[Ghorbani et~al\mbox{.}(2019)]%
        {ghorbani2019interpretation}
\bibfield{author}{\bibinfo{person}{Amirata Ghorbani}, \bibinfo{person}{Abubakar Abid}, {and} \bibinfo{person}{James Zou}.} \bibinfo{year}{2019}\natexlab{}.
\newblock \showarticletitle{Interpretation of neural networks is fragile}. In \bibinfo{booktitle}{\emph{Proceedings of the AAAI conference on artificial intelligence}}, Vol.~\bibinfo{volume}{33}. \bibinfo{pages}{3681--3688}.
\newblock


\bibitem[Guidotti(2024)]%
        {guidotti2024counterfactual}
\bibfield{author}{\bibinfo{person}{Riccardo Guidotti}.} \bibinfo{year}{2024}\natexlab{}.
\newblock \showarticletitle{Counterfactual explanations and how to find them: literature review and benchmarking}.
\newblock \bibinfo{journal}{\emph{Data Mining and Knowledge Discovery}} \bibinfo{volume}{38}, \bibinfo{number}{5} (\bibinfo{year}{2024}), \bibinfo{pages}{2770--2824}.
\newblock


\bibitem[Hashemi et~al\mbox{.}(2025)]%
        {hashemi2025probabilistic}
\bibfield{author}{\bibinfo{person}{Navid Hashemi}, \bibinfo{person}{Samuel Sasaki}, \bibinfo{person}{Diego~Manzanas Lopez}, \bibinfo{person}{Ipek Oguz}, \bibinfo{person}{Meiyi Ma}, {and} \bibinfo{person}{Taylor~T Johnson}.} \bibinfo{year}{2025}\natexlab{}.
\newblock \showarticletitle{Probabilistic Robustness Analysis in High Dimensional Space: Application to Semantic Segmentation Network}.
\newblock \bibinfo{journal}{\emph{arXiv preprint arXiv:2509.11838}} (\bibinfo{year}{2025}).
\newblock


\bibitem[Haufe et~al\mbox{.}(2024)]%
        {haufe2024explainable}
\bibfield{author}{\bibinfo{person}{Stefan Haufe}, \bibinfo{person}{Rick Wilming}, \bibinfo{person}{Benedict Clark}, \bibinfo{person}{Rustam Zhumagambetov}, \bibinfo{person}{Danny Panknin}, {and} \bibinfo{person}{Ahc{\`e}ne Boubekki}.} \bibinfo{year}{2024}\natexlab{}.
\newblock \showarticletitle{Explainable AI needs formal notions of explanation correctness}.
\newblock \bibinfo{journal}{\emph{arXiv preprint arXiv:2409.14590}} (\bibinfo{year}{2024}).
\newblock


\bibitem[Huang et~al\mbox{.}(2020)]%
        {huang2020survey}
\bibfield{author}{\bibinfo{person}{Xiaowei Huang}, \bibinfo{person}{Daniel Kroening}, \bibinfo{person}{Wenjie Ruan}, \bibinfo{person}{James Sharp}, \bibinfo{person}{Youcheng Sun}, \bibinfo{person}{Emese Thamo}, \bibinfo{person}{Min Wu}, {and} \bibinfo{person}{Xinping Yi}.} \bibinfo{year}{2020}\natexlab{}.
\newblock \showarticletitle{A survey of safety and trustworthiness of deep neural networks: Verification, testing, adversarial attack and defence, and interpretability}.
\newblock \bibinfo{journal}{\emph{Computer Science Review}}  \bibinfo{volume}{37} (\bibinfo{year}{2020}), \bibinfo{pages}{100270}.
\newblock


\bibitem[Huang et~al\mbox{.}(2017)]%
        {huang2017safety}
\bibfield{author}{\bibinfo{person}{Xiaowei Huang}, \bibinfo{person}{Marta Kwiatkowska}, \bibinfo{person}{Sen Wang}, {and} \bibinfo{person}{Min Wu}.} \bibinfo{year}{2017}\natexlab{}.
\newblock \showarticletitle{Safety verification of deep neural networks}. In \bibinfo{booktitle}{\emph{Computer Aided Verification: 29th International Conference, CAV 2017, Heidelberg, Germany, July 24-28, 2017, Proceedings, Part I 30}}. Springer, \bibinfo{pages}{3--29}.
\newblock


\bibitem[Ignatiev and Marques-Silva(2021)]%
        {ignatiev2021sat}
\bibfield{author}{\bibinfo{person}{Alexey Ignatiev} {and} \bibinfo{person}{Joao Marques-Silva}.} \bibinfo{year}{2021}\natexlab{}.
\newblock \showarticletitle{SAT-based rigorous explanations for decision lists}. In \bibinfo{booktitle}{\emph{Theory and Applications of Satisfiability Testing--SAT 2021: 24th International Conference, Barcelona, Spain, July 5-9, 2021, Proceedings 24}}. Springer, \bibinfo{pages}{251--269}.
\newblock


\bibitem[Ignatiev et~al\mbox{.}(2019)]%
        {ignatiev2019abduction}
\bibfield{author}{\bibinfo{person}{Alexey Ignatiev}, \bibinfo{person}{Nina Narodytska}, {and} \bibinfo{person}{Joao Marques-Silva}.} \bibinfo{year}{2019}\natexlab{}.
\newblock \showarticletitle{Abduction-based explanations for machine learning models}. In \bibinfo{booktitle}{\emph{Proceedings of the AAAI Conference on Artificial Intelligence}}, Vol.~\bibinfo{volume}{33}. \bibinfo{pages}{1511--1519}.
\newblock


\bibitem[Jiang et~al\mbox{.}(2023)]%
        {jiang_formalising_2023}
\bibfield{author}{\bibinfo{person}{Junqi Jiang}, \bibinfo{person}{Francesco Leofante}, \bibinfo{person}{Antonio Rago}, {and} \bibinfo{person}{Francesca Toni}.} \bibinfo{year}{2023}\natexlab{}.
\newblock \showarticletitle{Formalising the {Robustness} of {Counterfactual} {Explanations} for {Neural} {Networks}}.
\newblock \bibinfo{journal}{\emph{Proceedings of the AAAI Conference on Artificial Intelligence}} \bibinfo{volume}{37}, \bibinfo{number}{12} (\bibinfo{date}{June} \bibinfo{year}{2023}), \bibinfo{pages}{14901--14909}.
\newblock
\showISSN{2374-3468}
\urldef\tempurl%
\url{https://doi.org/10.1609/aaai.v37i12.26740}
\showDOI{\tempurl}


\bibitem[Jiang et~al\mbox{.}(2024)]%
        {jiang2024robust}
\bibfield{author}{\bibinfo{person}{Junqi Jiang}, \bibinfo{person}{Francesco Leofante}, \bibinfo{person}{Antonio Rago}, {and} \bibinfo{person}{Francesca Toni}.} \bibinfo{year}{2024}\natexlab{}.
\newblock \showarticletitle{Robust counterfactual explanations in machine learning: A survey}.
\newblock \bibinfo{journal}{\emph{arXiv preprint arXiv:2402.01928}} (\bibinfo{year}{2024}).
\newblock


\bibitem[Julian et~al\mbox{.}(2020)]%
        {julian2020validation}
\bibfield{author}{\bibinfo{person}{Kyle~D Julian}, \bibinfo{person}{Ritchie Lee}, {and} \bibinfo{person}{Mykel~J Kochenderfer}.} \bibinfo{year}{2020}\natexlab{}.
\newblock \showarticletitle{Validation of image-based neural network controllers through adaptive stress testing}. In \bibinfo{booktitle}{\emph{2020 IEEE 23rd international conference on intelligent transportation systems (ITSC)}}. IEEE, \bibinfo{pages}{1--7}.
\newblock


\bibitem[Karimi et~al\mbox{.}(2020)]%
        {karimi_algorithmic_2020}
\bibfield{author}{\bibinfo{person}{Amir-Hossein Karimi}, \bibinfo{person}{Bernhard Schölkopf}, {and} \bibinfo{person}{Isabel Valera}.} \bibinfo{year}{2020}\natexlab{}.
\newblock \bibinfo{title}{Algorithmic {Recourse}: from {Counterfactual} {Explanations} to {Interventions}}.
\newblock
\newblock
\urldef\tempurl%
\url{http://arxiv.org/abs/2002.06278}
\showURL{%
\tempurl}
\newblock
\shownote{arXiv:2002.06278 [cs, stat]}.


\bibitem[Katz et~al\mbox{.}(2017)]%
        {katz2017reluplex}
\bibfield{author}{\bibinfo{person}{Guy Katz}, \bibinfo{person}{Clark Barrett}, \bibinfo{person}{David~L Dill}, \bibinfo{person}{Kyle Julian}, {and} \bibinfo{person}{Mykel~J Kochenderfer}.} \bibinfo{year}{2017}\natexlab{}.
\newblock \showarticletitle{Reluplex: An efficient SMT solver for verifying deep neural networks}. In \bibinfo{booktitle}{\emph{International Conference on Computer Aided Verification}}. Springer, \bibinfo{pages}{97--117}.
\newblock


\bibitem[Katz et~al\mbox{.}(2019)]%
        {katz2019marabou}
\bibfield{author}{\bibinfo{person}{Guy Katz}, \bibinfo{person}{Derek~A. Huang}, \bibinfo{person}{Duligur Ibeling}, \bibinfo{person}{Kyle Julian}, \bibinfo{person}{Christopher Lazarus}, \bibinfo{person}{Rachel Lim}, \bibinfo{person}{Parth Shah}, \bibinfo{person}{Shantanu Thakoor}, \bibinfo{person}{Haoze Wu}, \bibinfo{person}{Aleksandar Zelji{\'{c}}}, \bibinfo{person}{David~L. Dill}, \bibinfo{person}{Mykel~J. Kochenderfer}, {and} \bibinfo{person}{Clark Barrett}.} \bibinfo{year}{2019}\natexlab{}.
\newblock \showarticletitle{The Marabou Framework for Verification and Analysis of Deep Neural Networks}. In \bibinfo{booktitle}{\emph{Computer Aided Verification}}, \bibfield{editor}{\bibinfo{person}{Isil Dillig} {and} \bibinfo{person}{Serdar Tasiran}} (Eds.). \bibinfo{publisher}{Springer International Publishing}, \bibinfo{address}{Cham}, \bibinfo{pages}{443--452}.
\newblock
\showISBNx{978-3-030-25540-4}


\bibitem[Kenny and Huang(2023)]%
        {kenny2023utility}
\bibfield{author}{\bibinfo{person}{Eoin Kenny} {and} \bibinfo{person}{Weipeng Huang}.} \bibinfo{year}{2023}\natexlab{}.
\newblock \showarticletitle{The utility of “even if” semifactual explanation to optimise positive outcomes}.
\newblock \bibinfo{journal}{\emph{Advances in Neural Information Processing Systems}}  \bibinfo{volume}{36} (\bibinfo{year}{2023}), \bibinfo{pages}{52907--52935}.
\newblock


\bibitem[Kenny and Keane(2021)]%
        {kenny2021generating}
\bibfield{author}{\bibinfo{person}{Eoin~M Kenny} {and} \bibinfo{person}{Mark~T Keane}.} \bibinfo{year}{2021}\natexlab{}.
\newblock \showarticletitle{On generating plausible counterfactual and semi-factual explanations for deep learning}. In \bibinfo{booktitle}{\emph{Proceedings of the AAAI Conference on Artificial Intelligence}}, Vol.~\bibinfo{volume}{35}. \bibinfo{pages}{11575--11585}.
\newblock


\bibitem[Key(2025)]%
        {key2025sat}
\bibfield{author}{\bibinfo{person}{Hojer Key}.} \bibinfo{year}{2025}\natexlab{}.
\newblock \showarticletitle{A SAT-centered XAI method for Deep Learning based Video Understanding}.
\newblock \bibinfo{journal}{\emph{arXiv preprint arXiv:2503.23870}} (\bibinfo{year}{2025}).
\newblock


\bibitem[LeCun et~al\mbox{.}(2009)]%
        {lecun2009mnist}
\bibfield{author}{\bibinfo{person}{Yann LeCun}, \bibinfo{person}{Corinna Cortes}, {and} \bibinfo{person}{Christopher~JC Burges}.} \bibinfo{year}{2009}\natexlab{}.
\newblock \showarticletitle{The MNIST database of handwritten digits (2010)}.
\newblock \bibinfo{journal}{\emph{URL http://yann. lecun. com/exdb/mnist}} (\bibinfo{year}{2009}), \bibinfo{pages}{7}.
\newblock


\bibitem[Leofante and Lomuscio(2023)]%
        {leofante2023robust}
\bibfield{author}{\bibinfo{person}{Francesco Leofante} {and} \bibinfo{person}{Alessio Lomuscio}.} \bibinfo{year}{2023}\natexlab{}.
\newblock \showarticletitle{Robust explanations for human-neural multi-agent systems with formal verification}. In \bibinfo{booktitle}{\emph{European Conference on Multi-Agent Systems}}. Springer, \bibinfo{pages}{244--262}.
\newblock


\bibitem[Linardatos et~al\mbox{.}(2020)]%
        {linardatos2020explainable}
\bibfield{author}{\bibinfo{person}{Pantelis Linardatos}, \bibinfo{person}{Vasilis Papastefanopoulos}, {and} \bibinfo{person}{Sotiris Kotsiantis}.} \bibinfo{year}{2020}\natexlab{}.
\newblock \showarticletitle{Explainable ai: A review of machine learning interpretability methods}.
\newblock \bibinfo{journal}{\emph{Entropy}} \bibinfo{volume}{23}, \bibinfo{number}{1} (\bibinfo{year}{2020}), \bibinfo{pages}{18}.
\newblock


\bibitem[Liu et~al\mbox{.}(2021)]%
        {liu2021algorithms}
\bibfield{author}{\bibinfo{person}{Changliu Liu}, \bibinfo{person}{Tomer Arnon}, \bibinfo{person}{Christopher Lazarus}, \bibinfo{person}{Christopher Strong}, \bibinfo{person}{Clark Barrett}, \bibinfo{person}{Mykel~J Kochenderfer}, {et~al\mbox{.}}} \bibinfo{year}{2021}\natexlab{}.
\newblock \showarticletitle{Algorithms for verifying deep neural networks}.
\newblock \bibinfo{journal}{\emph{Foundations and Trends{\textregistered} in Optimization}} \bibinfo{volume}{4}, \bibinfo{number}{3-4} (\bibinfo{year}{2021}), \bibinfo{pages}{244--404}.
\newblock


\bibitem[Lopez et~al\mbox{.}(2023)]%
        {lopez2023nnv}
\bibfield{author}{\bibinfo{person}{Diego~Manzanas Lopez}, \bibinfo{person}{Sung~Woo Choi}, \bibinfo{person}{Hoang-Dung Tran}, {and} \bibinfo{person}{Taylor~T Johnson}.} \bibinfo{year}{2023}\natexlab{}.
\newblock \showarticletitle{NNV 2.0: the neural network verification tool}. In \bibinfo{booktitle}{\emph{International Conference on Computer Aided Verification}}. Springer, \bibinfo{pages}{397--412}.
\newblock


\bibitem[Lundberg and Lee(2017)]%
        {lundberg_unified_2017}
\bibfield{author}{\bibinfo{person}{Scott~M Lundberg} {and} \bibinfo{person}{Su-In Lee}.} \bibinfo{year}{2017}\natexlab{}.
\newblock \showarticletitle{A {Unified} {Approach} to {Interpreting} {Model} {Predictions}}.
\newblock \bibinfo{journal}{\emph{Advances in neural information processing systems}}  \bibinfo{volume}{30} (\bibinfo{year}{2017}).
\newblock


\bibitem[Ma et~al\mbox{.}(2020)]%
        {ma2020stlnet}
\bibfield{author}{\bibinfo{person}{Meiyi Ma}, \bibinfo{person}{Ji Gao}, \bibinfo{person}{Lu Feng}, {and} \bibinfo{person}{John Stankovic}.} \bibinfo{year}{2020}\natexlab{}.
\newblock \showarticletitle{STLnet: Signal temporal logic enforced multivariate recurrent neural networks}.
\newblock \bibinfo{journal}{\emph{Advances in Neural Information Processing Systems}}  \bibinfo{volume}{33} (\bibinfo{year}{2020}), \bibinfo{pages}{14604--14614}.
\newblock


\bibitem[Ma et~al\mbox{.}(2021)]%
        {ma2021predictive}
\bibfield{author}{\bibinfo{person}{Meiyi Ma}, \bibinfo{person}{John Stankovic}, \bibinfo{person}{Ezio Bartocci}, {and} \bibinfo{person}{Lu Feng}.} \bibinfo{year}{2021}\natexlab{}.
\newblock \showarticletitle{Predictive monitoring with logic-calibrated uncertainty for cyber-physical systems}.
\newblock \bibinfo{journal}{\emph{ACM Transactions on Embedded Computing Systems (TECS)}} \bibinfo{volume}{20}, \bibinfo{number}{5s} (\bibinfo{year}{2021}), \bibinfo{pages}{1--25}.
\newblock


\bibitem[Malfa et~al\mbox{.}(2021)]%
        {malfa_guaranteed_2021}
\bibfield{author}{\bibinfo{person}{Emanuele~La Malfa}, \bibinfo{person}{Rhiannon Michelmore}, \bibinfo{person}{Agnieszka~M. Zbrzezny}, \bibinfo{person}{Nicola Paoletti}, {and} \bibinfo{person}{Marta Kwiatkowska}.} \bibinfo{year}{2021}\natexlab{}.
\newblock \showarticletitle{On {Guaranteed} {Optimal} {Robust} {Explanations} for {NLP} {Models}}, Vol.~\bibinfo{volume}{3}. \bibinfo{pages}{2658--2665}.
\newblock
\urldef\tempurl%
\url{https://doi.org/10.24963/ijcai.2021/366}
\showDOI{\tempurl}
\newblock
\shownote{ISSN: 1045-0823}.


\bibitem[Marques-Silva and Ignatiev(2022)]%
        {marques-silva_delivering_2022}
\bibfield{author}{\bibinfo{person}{Joao Marques-Silva} {and} \bibinfo{person}{Alexey Ignatiev}.} \bibinfo{year}{2022}\natexlab{}.
\newblock \showarticletitle{Delivering {Trustworthy} {AI} through {Formal} {XAI}}.
\newblock \bibinfo{journal}{\emph{Proceedings of the AAAI Conference on Artificial Intelligence}} \bibinfo{volume}{36}, \bibinfo{number}{11} (\bibinfo{date}{June} \bibinfo{year}{2022}), \bibinfo{pages}{12342--12350}.
\newblock
\showISSN{2374-3468}
\urldef\tempurl%
\url{https://doi.org/10.1609/aaai.v36i11.21499}
\showDOI{\tempurl}
\newblock
\shownote{Number: 11}.


\bibitem[Miller(2019)]%
        {miller2019explanation}
\bibfield{author}{\bibinfo{person}{Tim Miller}.} \bibinfo{year}{2019}\natexlab{}.
\newblock \showarticletitle{Explanation in artificial intelligence: Insights from the social sciences}.
\newblock \bibinfo{journal}{\emph{Artificial intelligence}}  \bibinfo{volume}{267} (\bibinfo{year}{2019}), \bibinfo{pages}{1--38}.
\newblock


\bibitem[Montavon et~al\mbox{.}(2018)]%
        {montavon_methods_2018}
\bibfield{author}{\bibinfo{person}{Grégoire Montavon}, \bibinfo{person}{Wojciech Samek}, {and} \bibinfo{person}{Klaus-Robert Müller}.} \bibinfo{year}{2018}\natexlab{}.
\newblock \showarticletitle{Methods for {Interpreting} and {Understanding} {Deep} {Neural} {Networks}}.
\newblock \bibinfo{journal}{\emph{Digital Signal Processing}}  \bibinfo{volume}{73} (\bibinfo{date}{Feb.} \bibinfo{year}{2018}), \bibinfo{pages}{1--15}.
\newblock
\showISSN{10512004}
\urldef\tempurl%
\url{https://doi.org/10.1016/j.dsp.2017.10.011}
\showDOI{\tempurl}
\newblock
\shownote{arXiv:1706.07979 [cs, stat]}.


\bibitem[M{\"u}ller et~al\mbox{.}(2022)]%
        {muller2022prima}
\bibfield{author}{\bibinfo{person}{Mark~Niklas M{\"u}ller}, \bibinfo{person}{Gleb Makarchuk}, \bibinfo{person}{Gagandeep Singh}, \bibinfo{person}{Markus P{\"u}schel}, {and} \bibinfo{person}{Martin Vechev}.} \bibinfo{year}{2022}\natexlab{}.
\newblock \showarticletitle{PRIMA: general and precise neural network certification via scalable convex hull approximations}.
\newblock \bibinfo{journal}{\emph{Proceedings of the ACM on Programming Languages}} \bibinfo{volume}{6}, \bibinfo{number}{POPL} (\bibinfo{year}{2022}), \bibinfo{pages}{1--33}.
\newblock


\bibitem[Paul et~al\mbox{.}(2024)]%
        {paul2024formal}
\bibfield{author}{\bibinfo{person}{Sushmita Paul}, \bibinfo{person}{Jinqiang Yu}, \bibinfo{person}{Jip~J Dekker}, \bibinfo{person}{Alexey Ignatiev}, {and} \bibinfo{person}{Peter~J Stuckey}.} \bibinfo{year}{2024}\natexlab{}.
\newblock \showarticletitle{Formal Explanations for Neuro-Symbolic AI}.
\newblock \bibinfo{journal}{\emph{arXiv preprint arXiv:2410.14219}} (\bibinfo{year}{2024}).
\newblock


\bibitem[Ribeiro et~al\mbox{.}(2016a)]%
        {ribeiro2016should}
\bibfield{author}{\bibinfo{person}{Marco~Tulio Ribeiro}, \bibinfo{person}{Sameer Singh}, {and} \bibinfo{person}{Carlos Guestrin}.} \bibinfo{year}{2016}\natexlab{a}.
\newblock \showarticletitle{" Why should i trust you?" Explaining the predictions of any classifier}. In \bibinfo{booktitle}{\emph{Proceedings of the 22nd ACM SIGKDD international conference on knowledge discovery and data mining}}. \bibinfo{pages}{1135--1144}.
\newblock


\bibitem[Ribeiro et~al\mbox{.}(2016b)]%
        {ribeiro_why_2016}
\bibfield{author}{\bibinfo{person}{Marco~Tulio Ribeiro}, \bibinfo{person}{Sameer Singh}, {and} \bibinfo{person}{Carlos Guestrin}.} \bibinfo{year}{2016}\natexlab{b}.
\newblock \bibinfo{title}{"{Why} {Should} {I} {Trust} {You}?": {Explaining} the {Predictions} of {Any} {Classifier}}.
\newblock
\newblock
\urldef\tempurl%
\url{http://arxiv.org/abs/1602.04938}
\showURL{%
\tempurl}
\newblock
\shownote{arXiv:1602.04938 [cs, stat]}.


\bibitem[Ribeiro et~al\mbox{.}(2018)]%
        {ribeiro_anchors_2018}
\bibfield{author}{\bibinfo{person}{Marco~Tulio Ribeiro}, \bibinfo{person}{Sameer Singh}, {and} \bibinfo{person}{Carlos Guestrin}.} \bibinfo{year}{2018}\natexlab{}.
\newblock \showarticletitle{Anchors: {High}-{Precision} {Model}-{Agnostic} {Explanations}}.
\newblock \bibinfo{journal}{\emph{Proceedings of the AAAI Conference on Artificial Intelligence}} \bibinfo{volume}{32}, \bibinfo{number}{1} (\bibinfo{date}{April} \bibinfo{year}{2018}).
\newblock
\showISSN{2374-3468}
\urldef\tempurl%
\url{https://doi.org/10.1609/aaai.v32i1.11491}
\showDOI{\tempurl}


\bibitem[Selvaraju et~al\mbox{.}(2020)]%
        {selvaraju_grad-cam_2020}
\bibfield{author}{\bibinfo{person}{Ramprasaath~R. Selvaraju}, \bibinfo{person}{Michael Cogswell}, \bibinfo{person}{Abhishek Das}, \bibinfo{person}{Ramakrishna Vedantam}, \bibinfo{person}{Devi Parikh}, {and} \bibinfo{person}{Dhruv Batra}.} \bibinfo{year}{2020}\natexlab{}.
\newblock \showarticletitle{Grad-{CAM}: {Visual} {Explanations} from {Deep} {Networks} via {Gradient}-based {Localization}}.
\newblock \bibinfo{journal}{\emph{International Journal of Computer Vision}} \bibinfo{volume}{128}, \bibinfo{number}{2} (\bibinfo{date}{Feb.} \bibinfo{year}{2020}), \bibinfo{pages}{336--359}.
\newblock
\showISSN{0920-5691, 1573-1405}
\urldef\tempurl%
\url{https://doi.org/10.1007/s11263-019-01228-7}
\showDOI{\tempurl}
\newblock
\shownote{arXiv:1610.02391 [cs]}.


\bibitem[Shih et~al\mbox{.}(2018)]%
        {shih_symbolic_2018}
\bibfield{author}{\bibinfo{person}{Andy Shih}, \bibinfo{person}{Arthur Choi}, {and} \bibinfo{person}{Adnan Darwiche}.} \bibinfo{year}{2018}\natexlab{}.
\newblock \bibinfo{title}{A {Symbolic} {Approach} to {Explaining} {Bayesian} {Network} {Classifiers}}.
\newblock
\newblock
\urldef\tempurl%
\url{https://arxiv-org.proxy.library.vanderbilt.edu/abs/1805.03364v1}
\showURL{%
\tempurl}


\bibitem[Shrikumar et~al\mbox{.}(2019)]%
        {shrikumar_learning_2019}
\bibfield{author}{\bibinfo{person}{Avanti Shrikumar}, \bibinfo{person}{Peyton Greenside}, {and} \bibinfo{person}{Anshul Kundaje}.} \bibinfo{year}{2019}\natexlab{}.
\newblock \bibinfo{title}{Learning {Important} {Features} {Through} {Propagating} {Activation} {Differences}}.
\newblock
\newblock
\urldef\tempurl%
\url{http://arxiv.org/abs/1704.02685}
\showURL{%
\tempurl}
\newblock
\shownote{arXiv:1704.02685 [cs]}.


\bibitem[Shrikumar et~al\mbox{.}(2016)]%
        {shrikumar_not_2016}
\bibfield{author}{\bibinfo{person}{Avanti Shrikumar}, \bibinfo{person}{Peyton Greenside}, \bibinfo{person}{Anna Shcherbina}, {and} \bibinfo{person}{Anshul Kundaje}.} \bibinfo{year}{2016}\natexlab{}.
\newblock \bibinfo{title}{Not {Just} a {Black} {Box}: {Learning} {Important} {Features} {Through} {Propagating} {Activation} {Differences}}.
\newblock
\newblock
\urldef\tempurl%
\url{https://arxiv.org/abs/1605.01713v3}
\showURL{%
\tempurl}


\bibitem[Simonyan et~al\mbox{.}(2014)]%
        {simonyan_deep_2014}
\bibfield{author}{\bibinfo{person}{Karen Simonyan}, \bibinfo{person}{Andrea Vedaldi}, {and} \bibinfo{person}{Andrew Zisserman}.} \bibinfo{year}{2014}\natexlab{}.
\newblock \bibinfo{title}{Deep {Inside} {Convolutional} {Networks}: {Visualising} {Image} {Classification} {Models} and {Saliency} {Maps}}.
\newblock
\newblock
\urldef\tempurl%
\url{http://arxiv.org/abs/1312.6034}
\showURL{%
\tempurl}
\newblock
\shownote{arXiv:1312.6034 [cs]}.


\bibitem[Singh et~al\mbox{.}(2019)]%
        {singh2019abstract}
\bibfield{author}{\bibinfo{person}{Gagandeep Singh}, \bibinfo{person}{Timon Gehr}, \bibinfo{person}{Markus P{\"u}schel}, {and} \bibinfo{person}{Martin Vechev}.} \bibinfo{year}{2019}\natexlab{}.
\newblock \showarticletitle{An abstract domain for certifying neural networks}.
\newblock \bibinfo{journal}{\emph{Proceedings of the ACM on Programming Languages}} \bibinfo{volume}{3}, \bibinfo{number}{POPL} (\bibinfo{year}{2019}), \bibinfo{pages}{1--30}.
\newblock


\bibitem[Springenberg et~al\mbox{.}(2015)]%
        {springenberg_striving_2015}
\bibfield{author}{\bibinfo{person}{Jost~Tobias Springenberg}, \bibinfo{person}{Alexey Dosovitskiy}, \bibinfo{person}{Thomas Brox}, {and} \bibinfo{person}{Martin Riedmiller}.} \bibinfo{year}{2015}\natexlab{}.
\newblock \bibinfo{title}{Striving for {Simplicity}: {The} {All} {Convolutional} {Net}}.
\newblock
\newblock
\urldef\tempurl%
\url{https://doi.org/10.48550/arXiv.1412.6806}
\showDOI{\tempurl}
\newblock
\shownote{arXiv:1412.6806 [cs]}.


\bibitem[Stallkamp et~al\mbox{.}(2012)]%
        {stallkamp2012man}
\bibfield{author}{\bibinfo{person}{Johannes Stallkamp}, \bibinfo{person}{Marc Schlipsing}, \bibinfo{person}{Jan Salmen}, {and} \bibinfo{person}{Christian Igel}.} \bibinfo{year}{2012}\natexlab{}.
\newblock \showarticletitle{Man vs. computer: Benchmarking machine learning algorithms for traffic sign recognition}.
\newblock \bibinfo{journal}{\emph{Neural networks}}  \bibinfo{volume}{32} (\bibinfo{year}{2012}), \bibinfo{pages}{323--332}.
\newblock


\bibitem[Sundararajan et~al\mbox{.}(2017)]%
        {sundararajan_axiomatic_2017}
\bibfield{author}{\bibinfo{person}{Mukund Sundararajan}, \bibinfo{person}{Ankur Taly}, {and} \bibinfo{person}{Qiqi Yan}.} \bibinfo{year}{2017}\natexlab{}.
\newblock \showarticletitle{Axiomatic {Attribution} for {Deep} {Networks}}. In \bibinfo{booktitle}{\emph{Proceedings of the 34th {International} {Conference} on {Machine} {Learning}}}. \bibinfo{publisher}{PMLR}, \bibinfo{pages}{3319--3328}.
\newblock
\urldef\tempurl%
\url{https://proceedings.mlr.press/v70/sundararajan17a.html}
\showURL{%
\tempurl}
\newblock
\shownote{ISSN: 2640-3498}.


\bibitem[Tjeng et~al\mbox{.}(2017)]%
        {tjeng2017mipverify}
\bibfield{author}{\bibinfo{person}{Vincent Tjeng}, \bibinfo{person}{Kai Xiao}, {and} \bibinfo{person}{Russ Tedrake}.} \bibinfo{year}{2017}\natexlab{}.
\newblock \showarticletitle{Evaluating Robustness of Neural Networks with Mixed Integer Programming}.
\newblock \bibinfo{journal}{\emph{arXiv preprint arXiv:1711.07356}} (\bibinfo{year}{2017}).
\newblock


\bibitem[Tran et~al\mbox{.}(2020)]%
        {tran2020verification}
\bibfield{author}{\bibinfo{person}{Hoang-Dung Tran}, \bibinfo{person}{Stanley Bak}, \bibinfo{person}{Weiming Xiang}, {and} \bibinfo{person}{Taylor~T Johnson}.} \bibinfo{year}{2020}\natexlab{}.
\newblock \showarticletitle{Verification of deep convolutional neural networks using imagestars}. In \bibinfo{booktitle}{\emph{International conference on computer aided verification}}. Springer, \bibinfo{pages}{18--42}.
\newblock


\bibitem[Tran et~al\mbox{.}(2019a)]%
        {tran2019star}
\bibfield{author}{\bibinfo{person}{Hoang-Dung Tran}, \bibinfo{person}{Diago Manzanas~Lopez}, \bibinfo{person}{Patrick Musau}, \bibinfo{person}{Xiaodong Yang}, \bibinfo{person}{Luan~Viet Nguyen}, \bibinfo{person}{Weiming Xiang}, {and} \bibinfo{person}{Taylor~T Johnson}.} \bibinfo{year}{2019}\natexlab{a}.
\newblock \showarticletitle{Star-based reachability analysis of deep neural networks}. In \bibinfo{booktitle}{\emph{Formal Methods--The Next 30 Years: Third World Congress, FM 2019, Porto, Portugal, October 7--11, 2019, Proceedings 3}}. Springer, \bibinfo{pages}{670--686}.
\newblock


\bibitem[Tran et~al\mbox{.}(2019b)]%
        {tran2019fm}
\bibfield{author}{\bibinfo{person}{Hoang-Dung Tran}, \bibinfo{person}{Patrick Musau}, \bibinfo{person}{Diego~Manzanas Lopez}, \bibinfo{person}{Xiaodong Yang}, \bibinfo{person}{Luan~Viet Nguyen}, \bibinfo{person}{Weiming Xiang}, {and} \bibinfo{person}{Taylor~T. Johnson}.} \bibinfo{year}{2019}\natexlab{b}.
\newblock \showarticletitle{Star-Based Reachability Analysis for Deep Neural Networks}. In \bibinfo{booktitle}{\emph{23rd International Symposium on Formal Methods (FM'19)}}. \bibinfo{publisher}{Springer International Publishing}.
\newblock


\bibitem[Tran et~al\mbox{.}(2021)]%
        {tran2021prefilter}
\bibfield{author}{\bibinfo{person}{Hoang-Dung Tran}, \bibinfo{person}{Neelanjana Pal}, \bibinfo{person}{Diego~Manzanas Lopez}, \bibinfo{person}{Patrick Musau}, \bibinfo{person}{Xiaodong Yang}, \bibinfo{person}{Luan~Viet Nguyen}, \bibinfo{person}{Weiming Xiang}, \bibinfo{person}{Stanley Bak}, {and} \bibinfo{person}{Taylor~T. Johnson}.} \bibinfo{year}{2021}\natexlab{}.
\newblock \showarticletitle{Verification of Piecewise Deep Neural Networks: A Star Set Approach with Zonotope Pre-Filter}.
\newblock \bibinfo{journal}{\emph{Form. Asp. Comput.}} \bibinfo{volume}{33}, \bibinfo{number}{4–5} (\bibinfo{date}{aug} \bibinfo{year}{2021}), \bibinfo{pages}{519–545}.
\newblock
\showISSN{0934-5043}
\urldef\tempurl%
\url{https://doi.org/10.1007/s00165-021-00553-4}
\showDOI{\tempurl}


\bibitem[Tran et~al\mbox{.}(2022)]%
        {tran_unsupervised_2022}
\bibfield{author}{\bibinfo{person}{Thien~Q Tran}, \bibinfo{person}{Kazuto Fukuchi}, \bibinfo{person}{Youhei Akimoto}, {and} \bibinfo{person}{Jun Sakuma}.} \bibinfo{year}{2022}\natexlab{}.
\newblock \showarticletitle{Unsupervised {Causal} {Binary} {Concepts} {Discovery} with {VAE} for {Black}-{Box} {Model} {Explanation}}.
\newblock \bibinfo{journal}{\emph{Proceedings of the AAAI Conference on Artificial Intelligence}} \bibinfo{volume}{36}, \bibinfo{number}{9} (\bibinfo{date}{June} \bibinfo{year}{2022}), \bibinfo{pages}{9614--9622}.
\newblock
\showISSN{2374-3468, 2159-5399}
\urldef\tempurl%
\url{https://doi.org/10.1609/aaai.v36i9.21195}
\showDOI{\tempurl}


\bibitem[Wang et~al\mbox{.}(2024)]%
        {wang2024microxercise}
\bibfield{author}{\bibinfo{person}{Hanchen~David Wang}, \bibinfo{person}{Nibraas Khan}, \bibinfo{person}{Anna Chen}, \bibinfo{person}{Nilanjan Sarkar}, \bibinfo{person}{Pamela Wisniewski}, {and} \bibinfo{person}{Meiyi Ma}.} \bibinfo{year}{2024}\natexlab{}.
\newblock \showarticletitle{MicroXercise: A Micro-Level Comparative and Explainable System for Remote Physical Therapy}. In \bibinfo{booktitle}{\emph{2024 IEEE/ACM Conference on Connected Health: Applications, Systems and Engineering Technologies (CHASE)}}. IEEE, \bibinfo{pages}{73--84}.
\newblock


\bibitem[Wang et~al\mbox{.}(2018a)]%
        {wang2018efficient}
\bibfield{author}{\bibinfo{person}{Shiqi Wang}, \bibinfo{person}{Kexin Pei}, \bibinfo{person}{Justin Whitehouse}, \bibinfo{person}{Junfeng Yang}, {and} \bibinfo{person}{Suman Jana}.} \bibinfo{year}{2018}\natexlab{a}.
\newblock \showarticletitle{Efficient formal safety analysis of neural networks}.
\newblock \bibinfo{journal}{\emph{Advances in neural information processing systems}}  \bibinfo{volume}{31} (\bibinfo{year}{2018}).
\newblock


\bibitem[Wang et~al\mbox{.}(2018b)]%
        {wang2018formal}
\bibfield{author}{\bibinfo{person}{Shiqi Wang}, \bibinfo{person}{Kexin Pei}, \bibinfo{person}{Justin Whitehouse}, \bibinfo{person}{Junfeng Yang}, {and} \bibinfo{person}{Suman Jana}.} \bibinfo{year}{2018}\natexlab{b}.
\newblock \showarticletitle{Formal security analysis of neural networks using symbolic intervals}. In \bibinfo{booktitle}{\emph{27th USENIX Security Symposium (USENIX Security 18)}}. \bibinfo{pages}{1599--1614}.
\newblock


\bibitem[Wei et~al\mbox{.}(2023)]%
        {wei2023convex}
\bibfield{author}{\bibinfo{person}{Dennis Wei}, \bibinfo{person}{Haoze Wu}, \bibinfo{person}{Min Wu}, \bibinfo{person}{Pin-Yu Chen}, \bibinfo{person}{Clark Barrett}, {and} \bibinfo{person}{Eitan Farchi}.} \bibinfo{year}{2023}\natexlab{}.
\newblock \showarticletitle{Convex bounds on the softmax function with applications to robustness verification}. In \bibinfo{booktitle}{\emph{International Conference on Artificial Intelligence and Statistics}}. PMLR, \bibinfo{pages}{6853--6878}.
\newblock


\bibitem[Wicker et~al\mbox{.}(2022)]%
        {wicker2022robust}
\bibfield{author}{\bibinfo{person}{Matthew Wicker}, \bibinfo{person}{Juyeon Heo}, \bibinfo{person}{Luca Costabello}, {and} \bibinfo{person}{Adrian Weller}.} \bibinfo{year}{2022}\natexlab{}.
\newblock \showarticletitle{Robust explanation constraints for neural networks}.
\newblock \bibinfo{journal}{\emph{arXiv preprint arXiv:2212.08507}} (\bibinfo{year}{2022}).
\newblock


\bibitem[Wu et~al\mbox{.}(2022)]%
        {wu2022scalable}
\bibfield{author}{\bibinfo{person}{Haoze Wu}, \bibinfo{person}{Clark Barrett}, \bibinfo{person}{Mahmood Sharif}, \bibinfo{person}{Nina Narodytska}, {and} \bibinfo{person}{Gagandeep Singh}.} \bibinfo{year}{2022}\natexlab{}.
\newblock \showarticletitle{Scalable verification of GNN-based job schedulers}.
\newblock \bibinfo{journal}{\emph{Proceedings of the ACM on Programming Languages}} \bibinfo{volume}{6}, \bibinfo{number}{OOPSLA2} (\bibinfo{year}{2022}), \bibinfo{pages}{1036--1065}.
\newblock


\bibitem[Wu et~al\mbox{.}(2023)]%
        {wu_verix_2023}
\bibfield{author}{\bibinfo{person}{Min Wu}, \bibinfo{person}{Haoze Wu}, {and} \bibinfo{person}{Clark Barrett}.} \bibinfo{year}{2023}\natexlab{}.
\newblock \showarticletitle{{VeriX}: {Towards} {Verified} {Explainability} of {Deep} {Neural} {Networks}}.
\newblock \bibinfo{journal}{\emph{Advances in Neural Information Processing Systems}}  \bibinfo{volume}{36} (\bibinfo{date}{Dec.} \bibinfo{year}{2023}), \bibinfo{pages}{22247--22268}.
\newblock
\urldef\tempurl%
\url{https://proceedings.neurips.cc/paper_files/paper/2023/hash/46907c2ff9fafd618095161d76461842-Abstract-Conference.html}
\showURL{%
\tempurl}


\bibitem[Zeiler and Fergus(2013)]%
        {zeiler_visualizing_2013}
\bibfield{author}{\bibinfo{person}{Matthew~D. Zeiler} {and} \bibinfo{person}{Rob Fergus}.} \bibinfo{year}{2013}\natexlab{}.
\newblock \bibinfo{title}{Visualizing and {Understanding} {Convolutional} {Networks}}.
\newblock
\newblock
\urldef\tempurl%
\url{https://doi.org/10.48550/arXiv.1311.2901}
\showDOI{\tempurl}
\newblock
\shownote{arXiv:1311.2901 [cs]}.


\bibitem[Zelazny et~al\mbox{.}(2022)]%
        {zelazny2022optimizing}
\bibfield{author}{\bibinfo{person}{Tom Zelazny}, \bibinfo{person}{Haoze Wu}, \bibinfo{person}{Clark Barrett}, {and} \bibinfo{person}{Guy Katz}.} \bibinfo{year}{2022}\natexlab{}.
\newblock \showarticletitle{On optimizing back-substitution methods for neural network verification}. In \bibinfo{booktitle}{\emph{2022 Formal Methods in Computer-Aided Design (FMCAD)}}. IEEE, \bibinfo{pages}{17--26}.
\newblock


\bibitem[Zhang et~al\mbox{.}(2018)]%
        {zhang2018efficient}
\bibfield{author}{\bibinfo{person}{Huan Zhang}, \bibinfo{person}{Tsui-Wei Weng}, \bibinfo{person}{Pin-Yu Chen}, \bibinfo{person}{Cho-Jui Hsieh}, {and} \bibinfo{person}{Luca Daniel}.} \bibinfo{year}{2018}\natexlab{}.
\newblock \showarticletitle{Efficient neural network robustness certification with general activation functions}.
\newblock \bibinfo{journal}{\emph{Advances in neural information processing systems}}  \bibinfo{volume}{31} (\bibinfo{year}{2018}).
\newblock


\bibitem[Zhang et~al\mbox{.}(2024)]%
        {zhang2024optimizing}
\bibfield{author}{\bibinfo{person}{Shiqiang Zhang}, \bibinfo{person}{Juan Campos}, \bibinfo{person}{Christian Feldmann}, \bibinfo{person}{David Walz}, \bibinfo{person}{Frederik Sandfort}, \bibinfo{person}{Miriam Mathea}, \bibinfo{person}{Calvin Tsay}, {and} \bibinfo{person}{Ruth Misener}.} \bibinfo{year}{2024}\natexlab{}.
\newblock \showarticletitle{Optimizing over trained GNNs via symmetry breaking}.
\newblock \bibinfo{journal}{\emph{Advances in Neural Information Processing Systems}}  \bibinfo{volume}{36} (\bibinfo{year}{2024}).
\newblock


\end{thebibliography}

\newpage
\clearpage
\section*{Appendix}
\appendix

\section{Overapproximation Methods for Reach Sets}\label{appendix:overapproximation}
\subsection{Definition of Reach Set and Overapproximation}

Given a function \( f: \mathbb{R}^n \rightarrow \mathbb{R}^m \), the reach set \( f(X) \) from a set of initial states \( X \subseteq \mathbb{R}^n \) represents all possible states in \( \mathbb{R}^m \) that the system can reach. An overapproximation \( \hat{f}(X) \) of this reach set is then defined as:
\[
\hat{f}(X) \supseteq f(X)
\]
This definition ensures that \( \hat{f}(X) \) contains all the values of \( f(X) \), along with potentially additional values that provide a conservative estimate. Verifying Deep Neural Networks (DNNs) is an NP-Complete problem~\cite{katz2017reluplex}, so computing an overapproximation of the output set simplifies the problem, making it more feasible to compute for larger and more complex problems. Our method can compute an (output) reachable set $Y$ of a DNN $f$,  i.e., $Y= f(X)$, 
where $Y$ can be computed exactly (sound and complete) or overapproximated (sound and incomplete).

\begin{align*} \label{eq:soundCom}
    complete: & ~~ \forall y \in Y, ~\exists x \in X~\mid~y = f(x) \\
    sound: & ~~ \forall x \in X, ~\exists y \in Y~\mid~y = f(x)
\end{align*}

\subsection{Methods for Computing Reach Sets}

Several methods exist for the overapproximation of reach sets, each with its advantages, computational implications, and limitations. We are using NNV, making use of star sets, an abstraction-based reachability tool with the following available methods:

\subsubsection*{Exact Star (Sound and Complete)} 
The Exact Star method computes the exact reach set \( f(X) \). However, it can be computationally expensive and is often infeasible for high-dimensional systems or very large NNs~\cite{tran2019fm}. This method is able to compute the exact reach set for linear and piece-wise linear layers (e.g., ReLU, Convolution, BatchNorm, etc.).

\subsubsection*{Approximate (Approx) Star (Sound)}
The Approximate Star method uses star sets to provide a close but approximate representation of the reach set. It simplifies the computation by approximating some of the constraints that define the reach set on piece-wise linear layers. This method strikes a balance between computational efficiency and accuracy~\cite{tran2021prefilter}.

\subsubsection*{Relaxed Star (Sound)}
The Relaxed Star method involves increasing the approximation by introducing a relaxation parameter \( \beta \), which defines how much the approximation can deviate from the actual reach set:
\[
\text{Relaxed Star}(\beta) \supseteq \text{Approximate Star} \supseteq \text{Exact Star}
\]
This is applicable to piece-wise linear layers, where the $\beta$ parameter determined the number of constraints to solve using linear programming (LP), and how many to overapproximate. Higher values of \( \beta \) increase the size of the overapproximation (larger number of overapproximated LP problems), thereby reducing the computational load at the cost of accuracy. A relax-star method with $\beta$ = 0 is equivalent to approx-star.

\section{Proofs for Section Soundness of Explanations 
\ref{sec:Soundness}} \label{appendix:proofs}


Additional to the proofs, we illustrate the inclusiveness of Remark \ref{theorem:subset_feature_inclusive_property} with the output space in logits as shown in Figure \ref{fig:inclusiveness} and a toy example where NN has 3 inputs and 2 outputs, and we perturbed all 3 inputs (red), 2 (yellow) or 1 (green) as shown in Figure \ref{fig:inclusives}
\begin{figure}[ht]
    \centering

    \begin{subfigure}[t]{0.23\textwidth}
        \centering
        \includegraphics[width=\textwidth]{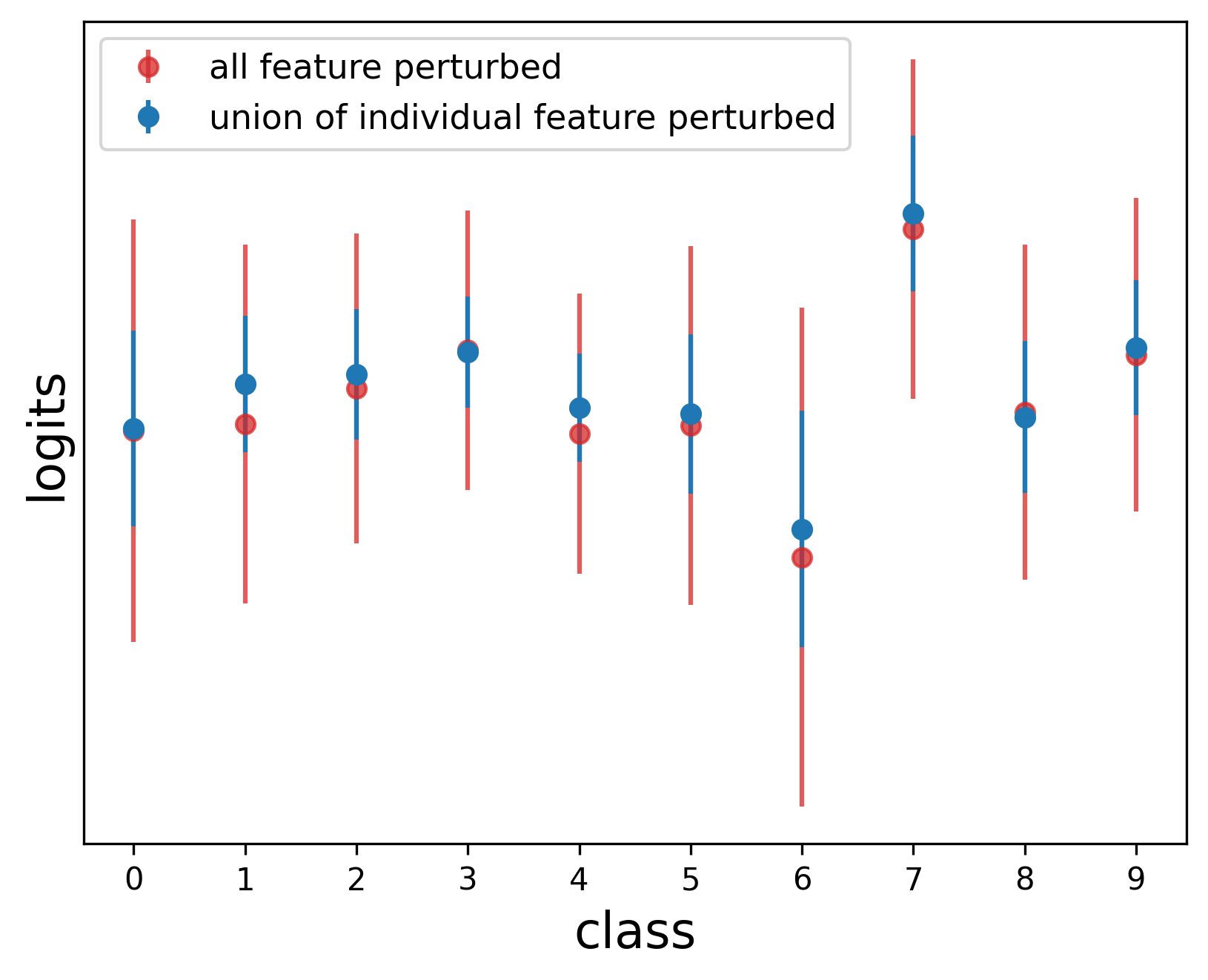}
        \caption{The range of union of individual feature perturbed (blue) versus all feature perturbed (red), based on Lemma \ref{lemma:feature_inclusive_property}. \\}
        \label{fig:inclusiveness:individual_feature}
    \end{subfigure}
    \begin{subfigure}[t]{0.23\textwidth}
        \centering
        \includegraphics[width=\textwidth]{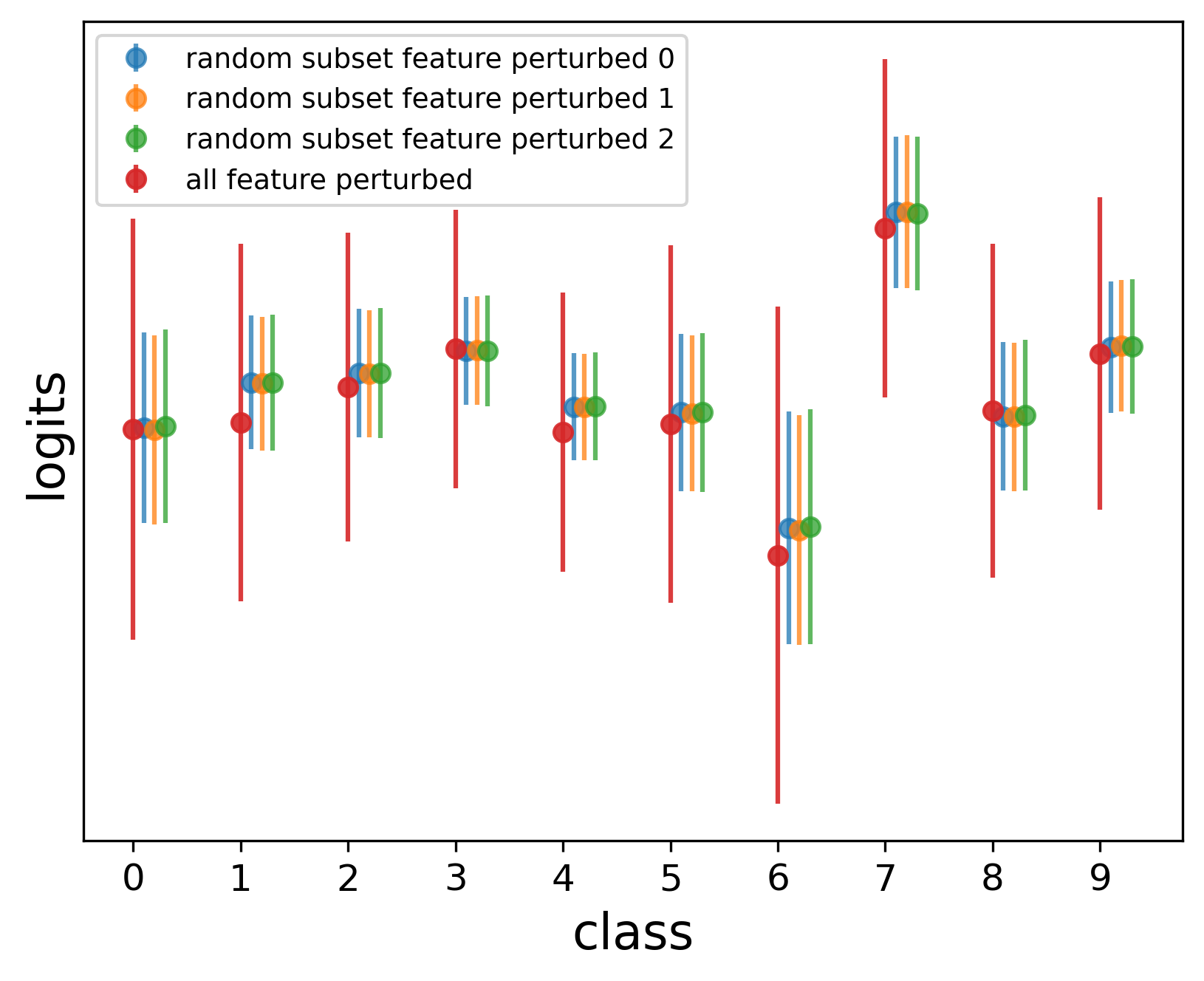} 
        \caption{The range of subset of features (that are subset of the whole set) perturbed (blue, orange, and green) versus all feature perturbed (red), based on Remark \ref{theorem:subset_feature_inclusive_property}.}
        \label{fig:inclusiveness:subset}
    \end{subfigure}
    \caption{Empirical evaluation on input space projected into one dimensional space (in output space). As shown in both of the subfigures, it shows the empirical evaluation of Lemma \ref{lemma:feature_inclusive_property} and Remark \ref{theorem:subset_feature_inclusive_property}, which satisfies what we claim. We evaluated on both approx star and exact star reachability method.}
    \label{fig:inclusiveness}
\end{figure}

\begin{figure}[htbp]
\centering
\begin{subfigure}[t]{0.15\textwidth}
    \includegraphics[width=\textwidth, height=55.5pt]{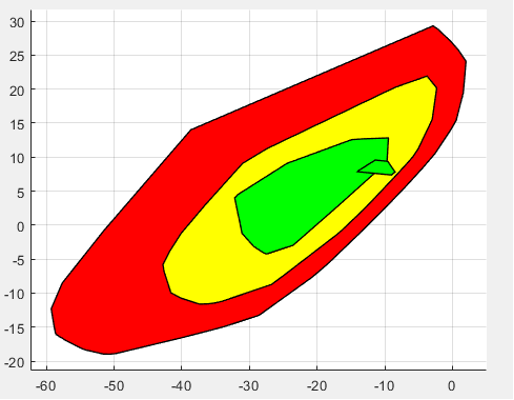}
    \caption{Feature of Red: [1, 1, 1]
Yellow: [1, 1, 0]
Green: [1, 0, 0], [0, 1, 0]}
    \label{fig:inclusive1}
\end{subfigure}
\begin{subfigure}[t]{0.15\textwidth}
    \includegraphics[width=\textwidth]{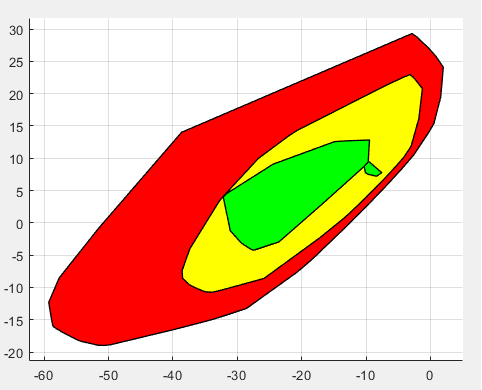}
    \caption{Feature of Red: [1, 1, 1]
Yellow: [1, 0, 1]
Green: [1, 0, 0], [0, 0, 1]
}
            \label{fig:inclusive2}
\end{subfigure}
\begin{subfigure}[t]{0.15\textwidth}
    \includegraphics[width=\textwidth]{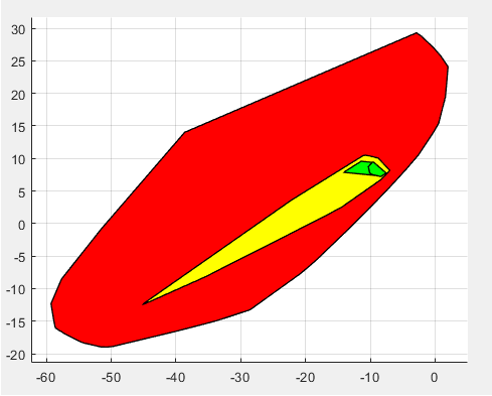}
    \caption{Feature of Red: [1, 1, 1]
Yellow: [0, 1, 1]
Green: [0, 1, 0], [0, 0, 1]
}
    \label{fig:inclusive3}
\end{subfigure}
\caption{Soundness and inclusiveness visualization on a toy example. We evaluate the feature space projected into 2 dimension output space. The individual subcaptions show the vector of features perturbed.}
\label{fig:inclusives}
\end{figure}

\subsection{Proof for Lemma \ref{lemma:feature_inclusive_property}} \label{appendix:lemma:feature_inclusive_property}

Given a model $f: \mathbb{R}^n \rightarrow \mathbb{R}^m$, for a point $x \in \mathbb{R}^n$, define the set $X_{\epsilon, i}$ for $i \in \{1,2, \ldots, n\}$ as:
$$
    X_{\epsilon, i} = \{x' \in \mathbb{R}^n : \|x' - x\|^i \leq \epsilon\} \text{ (perturbation on } i^{th} \text{ feature only)}
$$
Let $X_\epsilon$ be defined as:
$$ X_\epsilon = \{x' \in \mathbb{R}^n : \|x' - x\|_{\infty} \leq \epsilon\} $$

It holds that $Y_\epsilon = f(X_\epsilon) \supseteq \bigcup_{i=1}^n f(X_{\epsilon, i})$.

\begin{proof}
To prove that $f(X_\epsilon) \supseteq \bigcup_{i=1}^n f(X_{\epsilon, i})$, we must show that any element belonging to the set $\bigcup_{i=1}^n f(X_{\epsilon, i})$ also belongs to the set $f(X_\epsilon)$.

Let $y_{out}$ be an arbitrary element in $\bigcup_{i=1}^n f(X_{\epsilon, i})$.
By the definition of the union of sets, if $y_{out} \in \bigcup_{i=1}^n f(X_{\epsilon, i})$, then there exists at least one index $k \in \{1, 2, \ldots, n\}$ such that $y_{out} \in f(X_{\epsilon, k})$.

Since $y_{out} \in f(X_{\epsilon, k})$, there exists an input vector $x^* \in X_{\epsilon, k}$ such that $y_{out} = f(x^*)$.

According to our definition of $X_{\epsilon, k}$:
\begin{enumerate}
    \item For all indices $j \neq k$, the $j$-th component of $x^*$ is $x^*_j = x_j$.
    \item For the $k$-th index, the $k$-th component of $x^*$ is $x^*_k$, and it satisfies $|x^*_k - x_k| \leq \epsilon$.
\end{enumerate}

Now, we need to show that this $x^*$ is also an element of $X_\epsilon$.
A vector $x'$ is in $X_\epsilon$ if $\|x' - x\|_{\infty} \leq \epsilon$. This means that for all component indices $j \in \{1, \ldots, n\}$, the condition $|x'_j - x_j| \leq \epsilon$ must be satisfied.

Let's examine the absolute differences of the components of $x^*$ and $x$:
\begin{itemize}
    \item For any index $j \neq k$:
    $|x^*_j - x_j| = |x_j - x_j| = 0$.
    \item For the index $j = k$:
    $|x^*_k - x_k| \leq \epsilon$.
\end{itemize}

Since $\epsilon \geq 0$ (as it represents a magnitude of perturbation), the condition $0 \leq \epsilon$ is true.
Thus, for every index $j \in \{1, \ldots, n\}$, we have $|x^*_j - x_j| \leq \epsilon$.
This implies that the $L_\infty$ norm of the difference $x^* - x$ satisfies:
$$ \|x^* - x\|_{\infty} = \max_{j \in \{1, \ldots, n\}} |x^*_j - x_j| \leq \epsilon $$

Therefore, $x^*$ is an element of $X_\epsilon$.
Since $x^* \in X_\epsilon$ and $y_{out} = f(x^*)$, it follows by the definition of $f(X_\epsilon)$ that $y_{out} \in f(X_\epsilon)$.

As $y_{out}$ was an arbitrary element of $\bigcup_{i=1}^n f(X_{\epsilon, i})$, we have shown that every element of $\bigcup_{i=1}^n f(X_{\epsilon, i})$ is also an element of $f(X_\epsilon)$.
Therefore, $\bigcup_{i=1}^n f(X_{\epsilon, i}) \subseteq f(X_\epsilon)$, which is equivalent to the statement $f(X_\epsilon) \supseteq \bigcup_{i=1}^n f(X_{\epsilon, i})$.

This completes the direct proof. \hfill 

\end{proof}

\subsection{Proof for Remark \ref{theorem:subset_feature_inclusive_property}} \label{appendix:remark:subset_feature_inclusive_property}

\begin{proof}
To prove that $Y_{\epsilon, d} \subseteq Y_{\epsilon, l}$, it is sufficient to show that $X_{\epsilon, d} \subseteq X_{\epsilon, l}$. If this holds, then for any $y \in Y_{\epsilon, d}$, there exists an $x^* \in X_{\epsilon, d}$ such that $y = f(x^*)$. Since $x^* \in X_{\epsilon, d}$ and $X_{\epsilon, d} \subseteq X_{\epsilon, l}$, $x^*$ must also be in $X_{\epsilon, l}$. Therefore, $y = f(x^*) \in f(X_{\epsilon, l}) = Y_{\epsilon, l}$, which proves $Y_{\epsilon, d} \subseteq Y_{\epsilon, l}$.

We will prove $X_{\epsilon, d} \subseteq X_{\epsilon, l}$ by induction on $m = |l \setminus d|$, which is the number of elements in $l$ that are not in $d$. Since $d \subset l$ and it's implied by the setup ($|d|<|l|$) that $l$ is strictly larger than $d$, we have $m \geq 1$.

\noindent\textbf{Base Case:} $m=1$.
This means $l$ contains exactly one element more than $d$, and $d \subset l$. So, $l$ can be written as $l = d \cup \{j_0\}$ for some feature index $j_0 \notin d$.
Let $x^* \in X_{\epsilon, d}$. By definition of $X_{\epsilon, d}$:
\begin{enumerate}
    \item $(x^*)_k = x_k$ for all $k \notin d$.
    \item $|(x^*)_k - x_k| \leq \epsilon$ for all $k \in d$.
\end{enumerate}
We want to show that $x^* \in X_{\epsilon, l} = X_{\epsilon, d \cup \{j_0\}}$. For this, $x^*$ must satisfy:
\begin{enumerate}
    \item[(a)] $(x^*)_k = x_k$ for all $k \notin (d \cup \{j_0\})$.
    \item[(b)] $|(x^*)_k - x_k| \leq \epsilon$ for all $k \in (d \cup \{j_0\})$.
\end{enumerate}
Verify these conditions: \\

For condition (a): If $k \notin (d \cup \{j_0\})$, then $k \notin d$ and $k \neq j_0$. Since $k \notin d$, by property (1) of $x^*$, we have $(x^*)_k = x_k$. So, condition (a) is satisfied.

For condition (b): Consider $k \in (d \cup \{j_0\})$.
    If $k \in d$: By property (2) of $x^*$, $|(x^*)_k - x_k| \leq \epsilon$. This part of condition (b) is satisfied.
    If $k = j_0$: Since $j_0 \notin d$, by property (1) of $x^*$, we have $(x^*)_{j_0} = x_{j_0}$. Therefore, $|(x^*)_{j_0} - x_{j_0}| = |x_{j_0} - x_{j_0}| = 0$. Since $\epsilon \geq 0$, $0 \leq \epsilon$. This part of condition (b) is also satisfied.
Thus, $x^* \in X_{\epsilon, l}$. Since $x^*$ was an arbitrary element of $X_{\epsilon, d}$, we have $X_{\epsilon, d} \subseteq X_{\epsilon, l}$ for $m=1$.
(The specific example where $|d|=1$ (e.g., $d=\{i\}$) and $|l|=2$ (e.g., $l=\{i,j\}$ with $i \neq j$) is an instance of this base case, as $m = |l \setminus d| = 1$.)

\noindent\textbf{Inductive Hypothesis (IH):}
Assume that for some integer $k \geq 1$, the statement holds for $m=k$. That is, for any two sets of feature indices $D'$ and $L'$ such that $D' \subset L'$ and $|L' \setminus D'| = k$, we have $X_{\epsilon, D'} \subseteq X_{\epsilon, L'}$.

\noindent\textbf{Inductive Step:}
We need to prove the statement for $m=k+1$. Let $d$ and $l$ be sets of feature indices such that $d \subset l$ and $|l \setminus d| = k+1$.
Since $|l \setminus d| = k+1 \geq 1$ (as $k \geq 1$, $k+1 \geq 2$; if $k=0$ was allowed in IH, $k+1 \ge 1$), we can pick an element $j^* \in l \setminus d$.
Define an intermediate set $l_{int} = l \setminus \{j^*\}$.

Now consider the relationship between $d$ and $l_{int}$:
Since $d \subset l$ and $j^* \in (l \setminus d)$, $d$ does not contain $j^*$. Thus, $d \subseteq l \setminus \{j^*\}$, so $d \subseteq l_{int}$.
The elements in $l_{int}$ that are not in $d$ are $(l \setminus \{j^*\}) \setminus d = (l \setminus d) \setminus \{j^*\}$.
So, $|l_{int} \setminus d| = |(l \setminus d) \setminus \{j^*\}| = (k+1) - 1 = k$.

We have $d \subseteq l_{int}$ and $|l_{int} \setminus d|=k$.
\begin{itemize}
    \item If $k=0$: This would mean $l_{int} = d$. Then $|l \setminus d| = 1$. This situation is covered by the Base Case directly, where $X_{\epsilon, d} \subseteq X_{\epsilon, l}$ was proven. (Note: if IH assumes $k \geq 1$, then this $k=0$ sub-case for $l_{int} \setminus d$ won't occur here as we're proving for $k+1 \ge 2$).
    \item If $k \geq 1$: Then $d \subset l_{int}$ (if $d \neq l_{int}$) or $d=l_{int}$ (if $k=0$, but here we assume $k \ge 1$ for IH). If $d \subset l_{int}$ and $|l_{int} \setminus d| = k$, by the Inductive Hypothesis, we have $X_{\epsilon, d} \subseteq X_{\epsilon, l_{int}}$. If $d=l_{int}$ (which means $k=0$), then $X_{\epsilon,d} = X_{\epsilon,l_{int}}$ is trivially true. For our IH defined with $k \ge 1$, we must have $d \subset l_{int}$ (unless $d$ was empty and $k=0$ for $|l_{int}\setminus d|$), and $X_{\epsilon, d} \subseteq X_{\epsilon, l_{int}}$ by IH.
\end{itemize}
To simplify, since the IH is stated for $k \ge 1$: For $|l_{int} \setminus d| = k \ge 1$, we have $D'=d$ and $L'=l_{int}$, so $X_{\epsilon, d} \subseteq X_{\epsilon, l_{int}}$ by IH. If $k=0$ (meaning $|l \setminus d|=1$), this was the base case. The structure of induction ensures $k$ from IH is $\ge 1$.

So, $X_{\epsilon, d} \subseteq X_{\epsilon, l_{int}}$ (either because $d=l_{int}$ if $k=0$ which corresponds to $|l \setminus d|=1$ covered by base case, or by IH if $k \ge 1$).

Now consider $l_{int}$ and $l$. We have $l_{int} \subset l$ (since $j^* \in l$ and $j^* \notin l_{int}$) and $l = l_{int} \cup \{j^*\}$, with $j^* \notin l_{int}$.
Thus, $|l \setminus l_{int}| = 1$. By our Base Case ($m=1$), we have $X_{\epsilon, l_{int}} \subseteq X_{\epsilon, l}$.

Combining these results: $X_{\epsilon, d} \subseteq X_{\epsilon, l_{int}}$ and $X_{\epsilon, l_{int}} \subseteq X_{\epsilon, l}$.
By the transitivity of set inclusion, $X_{\epsilon, d} \subseteq X_{\epsilon, l}$.
This completes the inductive step for $m=k+1$.

Therefore, by the principle of mathematical induction, for all sets of feature indices $d, l$ such that $d \subset l$ (which implies $m = |l \setminus d| \geq 1$), it holds that $X_{\epsilon, d} \subseteq X_{\epsilon, l}$.

As established at the beginning of the proof, if $X_{\epsilon, d} \subseteq X_{\epsilon, l}$, then $Y_{\epsilon, d} = f(X_{\epsilon, d}) \subseteq f(X_{\epsilon, l}) = Y_{\epsilon, l}$.
This concludes the proof. 
\end{proof}

\subsection{Proof for Theorem \ref{theorem:soundness} (Soundness of \vitax{} Explanation)} \label{appendix:theorem:soundness}


\begin{proof}
Let $A_{found}$ be the feature subset (specifically, the prefix $\pi[0:u]$ of the heuristically ranked features $\pi$) returned by Algorithm \ref{alg:method} (ViTaX). We need to show two things:
\begin{enumerate}
    \item $A_{found}$ satisfies Targeted $\epsilon$-Robustness as defined in Definition \ref{def:targeted_robustness_def} (which involves $l_{\mathbf{y}, A_{found}} > u_{\mathbf{t}, A_{found}}$ and the conditions for other classes $\mathbf{k}$).
    \item $A_{found}$ is the largest prefix of $\pi$ for which this property holds.
\end{enumerate}

The formal property $\Phi$ that Algorithm 1 verifies for a candidate prefix $d = \pi[0:u]$ using the solver $\mathcal{V}$ is:
\[
\Phi(d) = \left( l_{\mathbf{y}, d} > u_{\mathbf{t}, d} \right) \wedge \left( \forall \mathbf{k} \in \{1, \dots, m\}, \mathbf{k} \neq \mathbf{y} \land \mathbf{k} \neq \mathbf{t} \implies u_{\mathbf{t}, d} > u_{\mathbf{k}, d} \right)
\]
This $\Phi(d)$ directly corresponds to the conditions required for $d$ to be a Targeted $\epsilon$-Robust Explanation as per Definition 2.2, along with ensuring the target class $\mathbf{t}$ is more prominent than other non-original classes $\mathbf{k}$ within the context of the perturbed subset $d$.

The reachability solver $\mathcal{V}$ is assumed to be:
\begin{itemize}
    \item \textbf{Sound (for $\mathcal{V}$):} If $\mathcal{V}$ returns `FLAG = True' for $\Phi(d)$, then $\Phi(d)$ indeed holds for all $x' \in X_{\epsilon, d}$.
    \item \textbf{Complete (for $\mathcal{V}$):} If $\Phi(d)$ holds for all $x' \in X_{\epsilon, d}$, then $\mathcal{V}$ returns `FLAG = True' for $\Phi(d)$.
\end{itemize}

1. Proving $A_{found}$ satisfies Targeted $\epsilon$-Robustness:

Algorithm \ref{alg:method} employs a binary search strategy to find a candidate set. The algorithm stores a candidate $A_{candidate} \leftarrow d$ when `FLAG = True' (i.e., $\Phi(d)$ is satisfied) and then attempts to find a larger robust subset by updating $I \leftarrow u+1$. The final set $A_{found}$ returned by the algorithm is the largest prefix $d = \pi[0:u]$ for which $\mathcal{V}$ returned `FLAG = True'.

Since $\mathcal{V}$ returned `FLAG = True` for $A_{found}$ (otherwise it would not have been selected as the final $A_{candidate}$ that becomes $A_{found}$), and $\mathcal{V}$ is sound, it directly implies that the property $\Phi(A_{found})$ holds. As $\Phi(A_{found})$ encompasses the conditions of Definition \ref{def:targeted_robustness_def}, $A_{found}$ is an explanation that satisfies Targeted $\epsilon$-Robustness.

2. Proving $A_{found}$ is the largest such prefix of $\pi$:

The binary search systematically explores prefixes of $\pi$. Let $A_{found} = \pi[0:u_{final}]$.
\begin{itemize}
    \item By construction, $\Phi(A_{found})$ holds.
    \item Consider any prefix $\pi[0:u']$ where $u' > u_{final}$. If such a prefix also satisfied $\Phi$, the binary search logic (specifically, the update $I \leftarrow u+1$ when a robust subset is found, aiming to extend it) would have led to $A_{candidate}$ being updated to this larger prefix, or the search continuing in a range that includes $u'$. The algorithm terminates when $I > J$, and $A_{found}$ is the $A_{candidate}$ corresponding to the largest $u$ for which $\Phi$ was found to hold.
    \item If we consider the specific feature $\pi[u_{final}+1]$ (the feature immediately following the last feature in $A_{found}$ according to the heuristic ranking), including it to form $A' = \pi[0:u_{final}+1]$ must have resulted in $\mathcal{V}$ returning `FLAG = False' for $\Phi(A')$. If $\mathcal{V}$ is complete, its returning `FLAG = False' means $\Phi(A')$ indeed does not hold.
\end{itemize}
Therefore, the binary search procedure, relying on the soundness and completeness of the solver $\mathcal{V}$ to correctly evaluate $\Phi$ for each tested prefix, ensures that $A_{found}$ is the largest prefix of $\pi$ for which Targeted $\epsilon$-Robustness (as embodied by $\Phi$) holds.

Thus, the explanation $A_{found}$ generated by ViTaX satisfies Targeted $\epsilon$-Robustness and is the maximal such explanation according to the heuristic feature ordering $\pi$. This completes the proof.
\end{proof}

\section{Baselines and Time Complexity }
\noindent\textbf{Brute Force Baseline.}
In principle, if we were to exhaustively test \emph{every} possible combination of $N$ features (i.e., every subset), the time complexity would be $\mathcal{O}\bigl(2^N \cdot \mathcal{V}(N)\bigr)$. However, due to computational constraints, we approximate this brute-force approach by sampling a random ordering of the $N$ features multiple times (e.g., $100$ runs). For each run, we binarily select features \emph{in the random order} until the verifier considers its robustness (one additional feature will be robust). We record fidelity for each run and select the best result over these $100$ trials. While still time-consuming, this procedure is vastly simpler than enumerating $2^N$ subsets and serves as a straightforward benchmark for comparison with more advanced explanation methods.

\medskip
\noindent\textbf{Complexities.}
\begin{itemize}
    \item \textbf{\vitax{}}: $\displaystyle \mathcal{O}\bigl(\log_2(N) \cdot \mathcal{V}(N)\bigr)$.
    \item \textbf{VeriX}: $\displaystyle \mathcal{O}\bigl(N \cdot \mathcal{V}(N)\bigr)$.
    \item \textbf{Brute Force (theoretical)}: $\displaystyle \mathcal{O}\bigl(2^N \cdot \mathcal{V}(N)\bigr)$.
\end{itemize}

\section{Baseline Adaptation Methodology}\label{appendix:baseline_adaptation}

To facilitate a fair comparison with the semifactual explanations generated by \vitax{}, the baseline methods LIME~\cite{ribeiro_why_2016} and Anchors~\cite{ribeiro_anchors_2018} were adapted from their standard implementations. The core objective was to shift these methods from generating factual explanations (explaining ``why this prediction occurred'') to generating probabilistic semifactual explanations (explaining ``which minimal features are sufficient to maintain this prediction'').

\subsection{Conceptual Framework}
The adaptation aims to identify a minimal subset of features ($A_{prob}$) such that, when these features are fixed, the model's prediction remains the original class ($y$) with high probability ($P \geq \tau$, where $\tau$ is a precision threshold), even when the remaining features ($\neg A_{prob}$) are randomized or perturbed.

\subsection{Adapting Anchors}
The original Anchors algorithm is inherently aligned with semifactual reasoning, as it seeks a minimal set of conditions (an ``anchor'') sufficient to support the current prediction. We characterize this as constructive sufficiency, where the explanation is built up greedily to find the minimal conditions required to maintain the prediction.

\paragraph{Methodology}
We formalized the Anchors implementation to explicitly optimize for semifactual precision:
\begin{enumerate}
    \item \textbf{Greedy Optimization:} The algorithm iteratively adds features to the candidate anchor set.
    \item \textbf{Precision Thresholding:} At each step, the precision is evaluated by sampling perturbations in the non-anchor features (see Algorithmic Details below). The search continues until the predefined high precision threshold $\tau$ (e.g., 0.95) is met.
    \item \textbf{Output:} The minimal feature set that satisfies the precision threshold.
\end{enumerate}
This formalization ensures that the output of Anchors is explicitly a statement of probabilistic sufficiency: ``If these features are present, the prediction persists with probability P.''

\subsection{Adapting LIME}
The original LIME algorithm learns a local linear model to approximate the decision boundary, identifying features that most contribute to the prediction. This is fundamentally an attribution method. To generate semifactual insights, we extended LIME with a secondary analysis phase

\paragraph{Methodology:}
We implemented a two-stage process:
\begin{enumerate}
    \item \textbf{Stage 1 (Feature Importance Learning):} Standard LIME is executed to learn the local linear model and determine the importance ranking of the features.
    \item \textbf{Stage 2 (Minimal Sufficiency Search):} A secondary greedy search is performed using the learned importance scores. This search asks: "Which combination of the most important features is minimally sufficient to preserve the prediction?"
    \item \textbf{Stability Validation:} Similar to the Anchors adaptation, the sufficiency of the subset is validated by sampling perturbations in the remaining features until the precision threshold $\tau$ is met.
\end{enumerate}
This adaptation shifts LIME from merely identifying contributors to identifying the minimal necessary components for prediction stability, guided by the local approximation.

\subsection{Algorithmic Details}
The core of both adaptations is the sufficiency search (utilized in Anchors optimization and LIME Stage 2). This involves a greedy selection process and a probabilistic estimation of precision.

\paragraph{Precision Estimation (Stability Validation):}
To estimate the precision of a candidate feature set $A_{cand}$, we use Monte Carlo sampling:
\begin{enumerate}
    \item Generate $N$ samples (e.g., $N=500$).
    \item For each sample, the features in $A_{cand}$ are fixed to their original values.
    \item The remaining features ($\neg A_{cand}$) are randomized (e.g., using Bernoulli sampling or replacing with baseline values).
    \item The model prediction is evaluated on the perturbed sample.
    \item Precision is calculated as the fraction of the $N$ samples where the prediction remains the original class $y$.
\end{enumerate}

\subsection{Nature of the Guarantee}
It is crucial to note that the guarantees provided by these adapted methods are probabilistic and empirical. They rely on sampling to estimate the stability of the prediction. Unlike \vitax{}, which uses formal reachability analysis to verify the entire continuous perturbation space, these methods cannot guarantee behavior between the samples and may miss adversarial counterexamples within the perturbation space.

\section{Evaluation Cont.}\label{sec:otherdatasets}

To validate the generalizability and robustness of our methodologies, we extended our evaluations to include diverse datasets: GTSRB, the EMNIST Letters, and the TaxiNet dataset. These datasets were selected to represent a broad range of challenges in image recognition, from traffic sign identification to character recognition and autonomous aircraft taxiing scenarios. For each dataset, we applied the same experimental protocols to ensure consistency in comparison. The GTSRB dataset helped us assess the effectiveness of our approach in recognizing and interpreting traffic signs under various lighting and occlusion conditions. In EMNIST Letters, our focus was on evaluating the capability to discern and classify distinct alphabetical characters, which exhibit subtle variations. Meanwhile, TaxiNet provided a unique challenge in navigating complex taxiway environments, testing the adaptability of our algorithms to spatial and contextual variability.

\vitax{} evaluates the GTSRB classification dataset with CNN model (accuracy 85.35\% and structure in Table \ref{tab:structure:gtsrb_cnn}) on all 43 classes. The input image size is $28 \times 28 \times 3$. We use a perturbation magnitude of \( \epsilon = \frac{25}{255} \) to generate counterfactual examples across all classes from single sample images. This approach helps explore the targeted explanatory power of perturbations in identifying and delineating class-specific responses.

\begin{figure*}[htbp]
    \centering
    \begin{subfigure}[t]{0.15\textwidth}
        \centering
        \includegraphics[width=\textwidth]{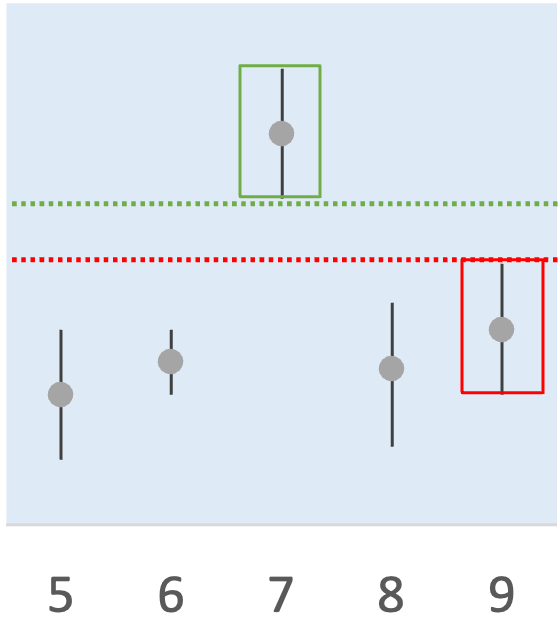}
        \caption{robust (not at the boundary)}
    \end{subfigure}
    \hspace
    {10pt}
    \begin{subfigure}[t]{0.15\textwidth}
        \centering
        \includegraphics[width=\textwidth]{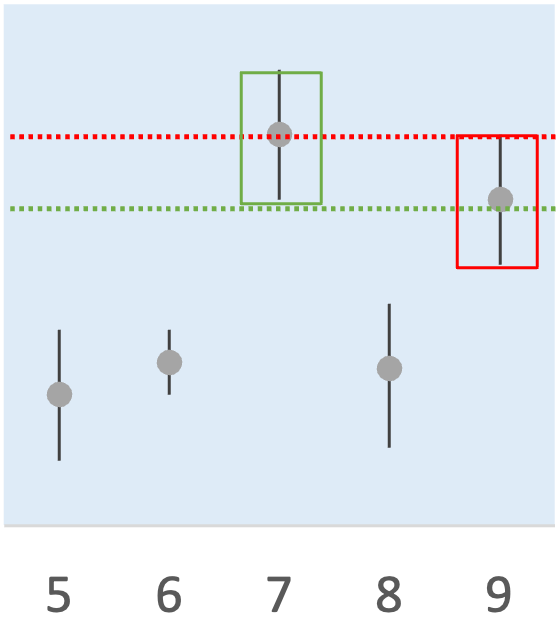}
        \caption{unrobust to $\mathbf{t}$  \newline}
    \end{subfigure}
    \hspace{10pt}
    \begin{subfigure}[t]{0.15\textwidth}
        \centering
        \includegraphics[width=\textwidth]{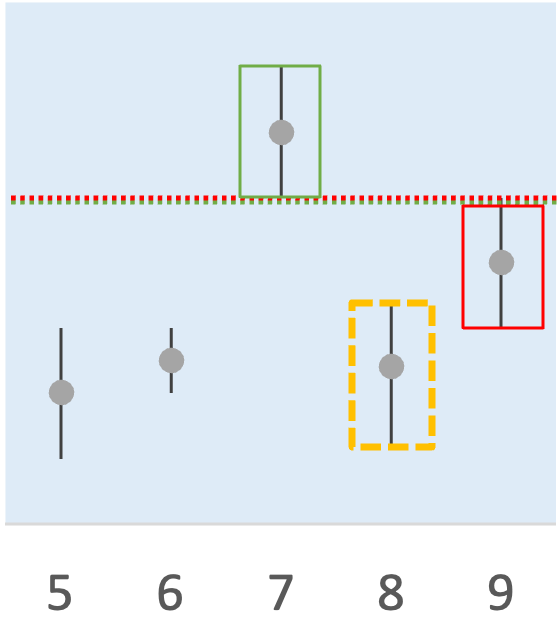}
        \caption{robust to $\mathbf{t}$ and to $\mathbf{k}$ \\}
        \label{fig:robust_tok_tot}
    \end{subfigure}
    \hspace{10pt}
    \begin{subfigure}[t]{0.15\textwidth}
        \centering
        \includegraphics[width=\textwidth]{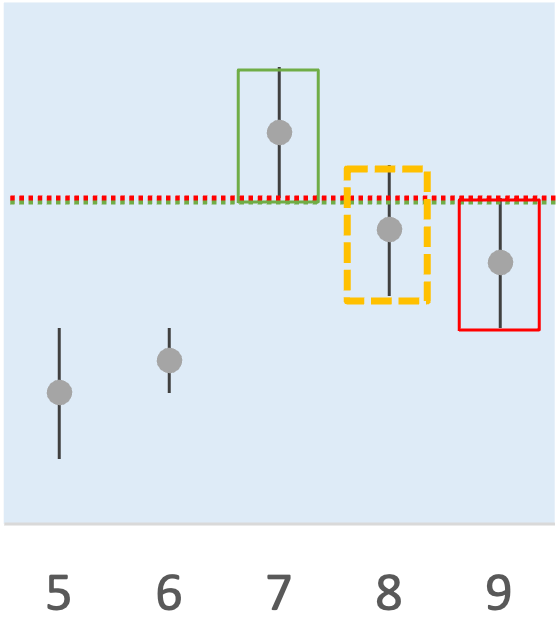}
        \caption{robust to $\mathbf{t}$, not to $\mathbf{k}$}
        \label{fig:robust_tot_not_tok}
    \end{subfigure}
    \caption{An example of potentially  not robust to other classes $k$ in projected range of logits. Green box is label $\mathbf{y}$, red box is target $\mathbf{t}$, and yellow box is other classes $\mathbf{k}$. In this case, we are arguing, given our Equation \ref{eq:robustness} and \ref{eq:targeted_robustness}, there are potentially cases in (c) and (d). The yellow box $\mathbf{k}$ could cross the boundary while the subset of feature is robust to $\mathbf{t}$.}
\end{figure*}



\subsection{Not Robust to Other Classes $\mathbf{k}$}\label{appendix:not_robust_to_others}

\begin{figure}[htbp]
    \centering
    \begin{subfigure}[t]{0.2\textwidth}
        \centering
        \includegraphics[width=\textwidth]{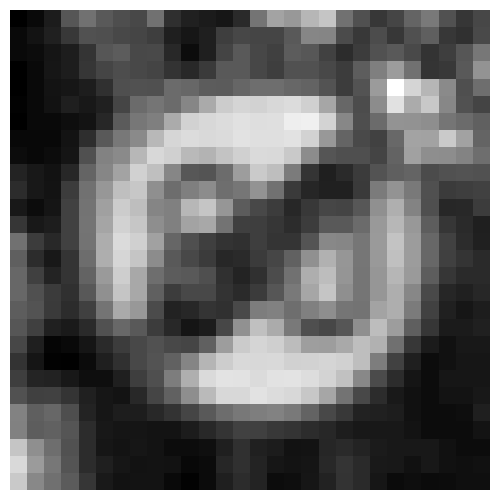}
        \caption{Speed Limit 80 End: the target we try to perturb to}
    \end{subfigure}
    \hspace{10pt}
    \begin{subfigure}[t]{0.2\textwidth}
        \centering
        \includegraphics[width=\textwidth]{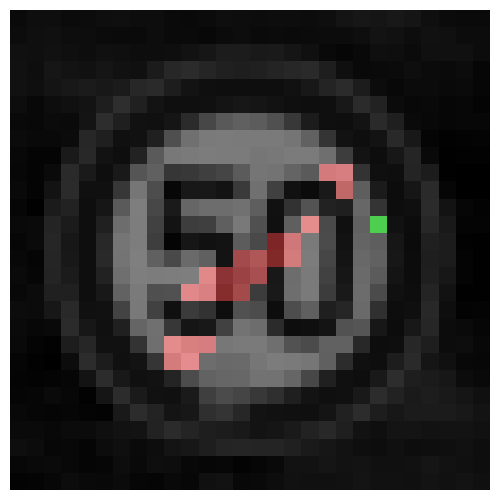}
        \caption{The perturbed of 50 to Speed Limit 80 End}
    \end{subfigure}
    \caption{GTSRB Explanation on 50 to Speed Limit 80 End}
    \label{fig:not_robust_to_others_but_useful}
\end{figure}

\paragraph{Robustness to Some Classes} We train on different combination of number of classes in order to find the optimal way of robustness to $\mathbf{t}$ and to $\mathbf{k}$, as shown in Figure \ref{fig:robust_tok_tot}. One thing that we notice is that though there are cases where we are not able to find $\mathbf{t}$-targeted explanation that are also robust to all $\mathbf{k}$. For instance, labeled-7 sample from MNIST goes to 9 as a targeted explanation. But upper bound of other classes, such as 8, overlaps with 9, as shown in \ref{fig:robust_tot_not_tok}. So this subset of features would be not robust for other classes but only for class 7. We argue though this might not be most robust or explainable features, we believe there is some usefulness in it. 

As shown in Figure \ref{fig:not_robust_to_others_but_useful}, the image we are perturbing is `50' and give us very interesting result. Though in the robustness solver, it also returns us that the targeted explanation it generated not robust to image of speed limit `60', but the explanation shows the red line to attempt to find the targeted explanation for speed limit 80 end. 
\begin{figure}[t]
    \centering
    \begin{subfigure}[b]{0.23\textwidth}
        \centering
        \includegraphics[width=\textwidth]{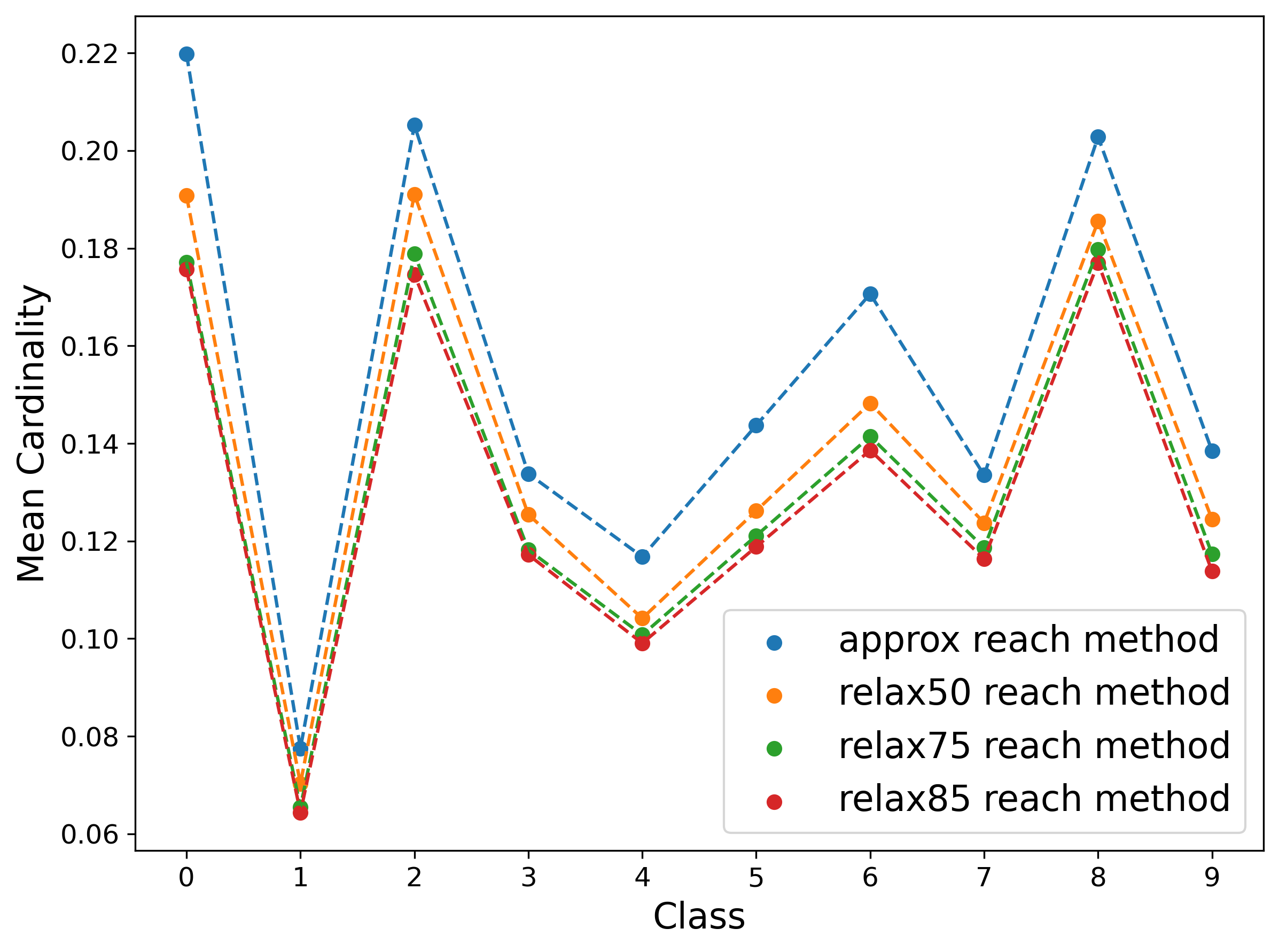}
        \caption{Cardinality for Each Class}
        \label{fig:approx_method:cardinality}
    \end{subfigure}
    \begin{subfigure}[b]{0.23\textwidth}
        \centering
        \includegraphics[width=\textwidth]{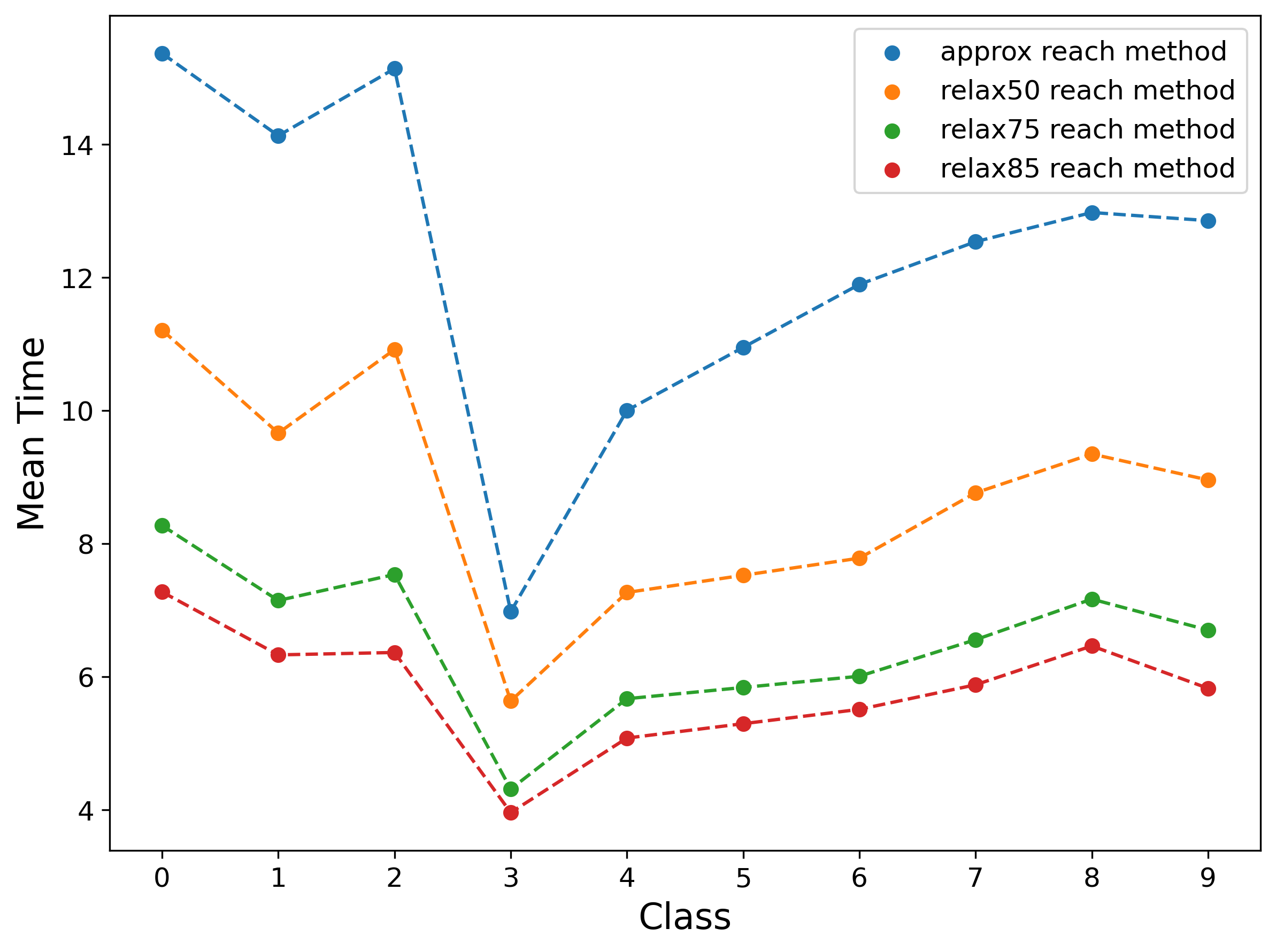}
        \caption{Time (sec) for Each Class}
        \label{fig:approx_method:time}
    \end{subfigure}

    \caption{X-axis represents the class from 0 to 9 and y-axis represents the mean cardinality value as percentage as shown in \ref{fig:approx_method:cardinality} and mean time in \ref{fig:approx_method:time}.}
    \label{fig:approx_method}
\end{figure}

\begin{figure}[t]
    \centering
    \begin{minipage}[t]{0.19\textwidth}
        \begin{subfigure}[t]{0.48\textwidth}
            \centering
            \includegraphics[width=\textwidth]{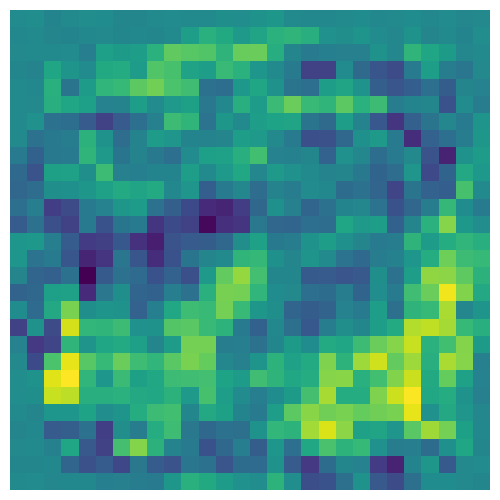}
            \label{fig:subfigure1a}
        \end{subfigure}
        \begin{subfigure}[t]{0.48\textwidth}
            \centering
            \includegraphics[width=\textwidth]{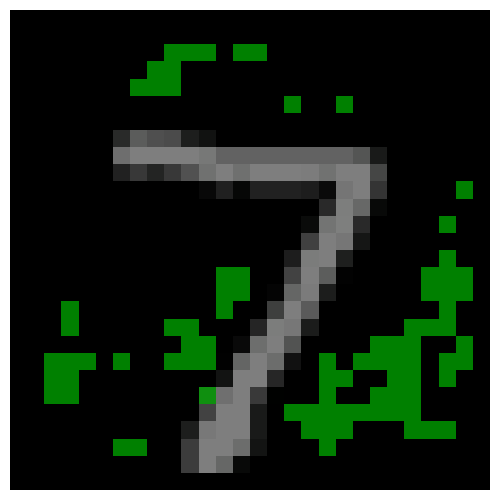}
            \label{fig:subfigure1b}
        \end{subfigure}
        \subcaption{Saliency (Proposed)}
        \label{fig:heuristic_compare:sensitivity}
    \end{minipage}
   \begin{minipage}[t]{0.19\textwidth}
        \begin{subfigure}[t]{0.48\textwidth}
            \centering
            \includegraphics[width=\textwidth]{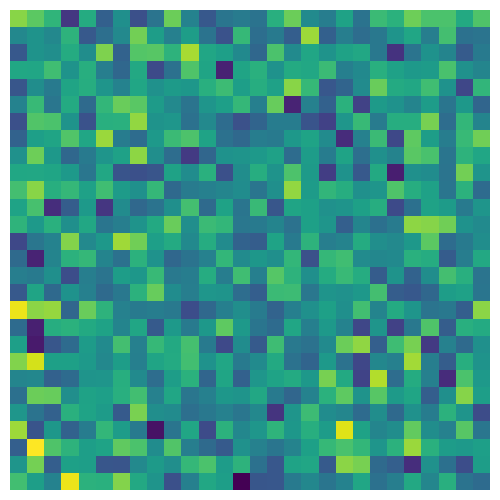}
            \label{fig:subfigure2a}
        \end{subfigure}
        \begin{subfigure}[t]{0.48\textwidth}
            \centering
            \includegraphics[width=\textwidth]{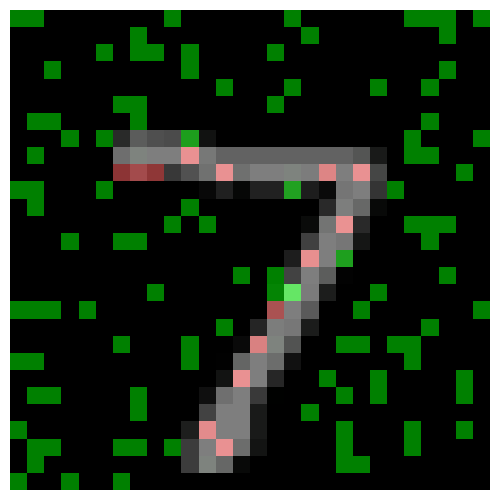}
            \label{fig:subfigure2b}
        \end{subfigure}
        \subcaption{Random}
    \end{minipage}
    \hspace{.30pt}
    \begin{minipage}[t]{0.19\textwidth}
        \begin{subfigure}[t]{0.48\textwidth}
            \centering
            \includegraphics[width=\textwidth]{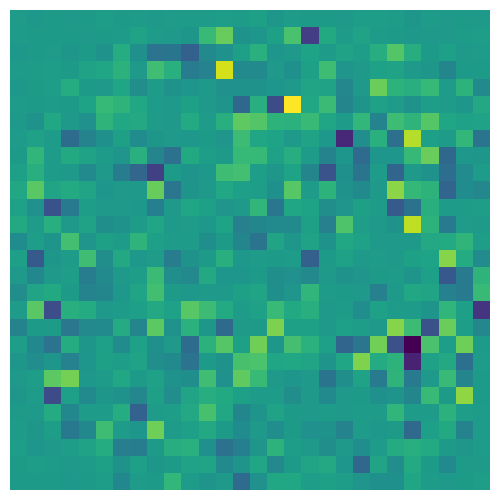}
            \label{fig:subfigure3a}
        \end{subfigure}
        \begin{subfigure}[t]{0.48\textwidth}
            \centering
            \includegraphics[width=\textwidth]{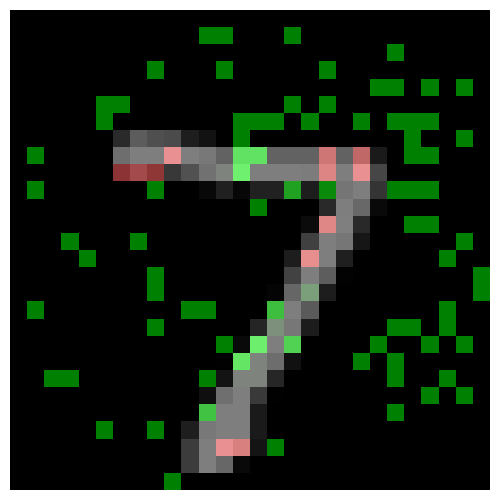}
            \label{fig:subfigure3b}
        \end{subfigure}
        \subcaption{IG}
    \end{minipage}
    \caption{\vitax{} with different heuristic functions. For each pair, the left side shows the output of the heuristic function, and the right side shows the corresponding \vitax{} explanations generated using the heuristic function's output.}
    \label{fig:heuristic_compare}
\end{figure}

\section{Impact of Approximation Methods on Reachability}\label{sec:Reachability Approximation Method}
    
\begin{table*}
\small
  \caption{The table presents the average percentage of cardinality, illustrating the proportion of features that significantly contribute to class transitions under $\epsilon$-perturbation. The columns represent the original sample classes $\mathbf{y}$, while the rows indicate the target explanation classes $\mathbf{t}$. Notably, the '-' symbol denotes cases where the target class is the same as the original class, which are purposefully excluded as they do not necessitate an explanation.}
  \label{tab:cardinality}
  \centering
  \begin{tabular}{ccccccccccc}
    \toprule
    & \multicolumn{10}{c}{Target Class $\mathbf{t}$} \\
    \cmidrule{2-11}
    Class $\mathbf{y}$ & 0 & 1 & 2 & 3 & 4 & 5 & 6 & 7 & 8 & 9 \\
    \midrule
    0 & - & 26.4\% & 21.3\% & 22.1\% & 26.8\% & 24.9\% & 21.4\% & 21.8\% & 24.5\% & 22.6\% \\
    1 & 11.0\% & - & 9.6\% & 7.6\% & 8.6\% & 8.7\% & 7.4\% & 5.8\% & 8.3\% & 8.4\% \\
    2 & 16.0\% & 13.3\% & - & 12.7\% & 18.8\% & 18.9\% & 18.0\% & 11.8\% & 17.1\% & 18.8\% \\
    3 & 17.1\% & 20.6\% & 14.6\% & - & 20.9\% & 13.3\% & 20.0\% & 19.7\% & 14.2\% & 14.5\% \\
    4 & 16.2\% & 17.7\% & 16.2\% & 17.5\% & - & 16.2\% & 14.9\% & 15.5\% & 17.0\% & 9.6\% \\
    5 & 10.7\% & 12.5\% & 11.4\% & 6.3\% & 13.8\% & - & 9.8\% & 12.4\% & 9.2\% & 10.2\% \\
    6 & 11.6\% & 18.4\% & 15.9\% & 17.4\% & 16.2\% & 13.0\% & - & 17.8\% & 16.0\% & 17.1\% \\
    7 & 18.6\% & 16.7\% & 14.6\% & 12.3\% & 17.6\% & 21.3\% & 24.4\% & - & 18.9\% & 11.5\% \\
    8 & 15.6\% & 18.2\% & 10.9\% & 10.8\% & 18.9\% & 15.4\% & 16.5\% & 15.0\% & - & 13.5\% \\
    9 & 16.3\% & 17.5\% & 17.1\% & 17.0\% & 10.0\% & 16.7\% & 18.6\% & 10.0\% & 17.9\% & - \\
    \bottomrule
  \end{tabular}
\end{table*}

We further analyze the role of different approximation methods in solvers for reachability analysis. Interestingly, we classify an unknown subset of features as unrobust (meaning shrinking down the subset), which differs from \cite{wu_verix_2023}, where important (\(\epsilon\)-robust) features are considered irrelevant, and unknown and unrobust features are seen as important. We use several sound and incomplete reach methods, including approx star, relax star (50\% relaxed), relax star (75\%), and relax star (85\%). Figure~\ref{fig:approx_method} shows that as the approximation becomes more relaxed (up to relax 85\%), the cardinality percentage decreases. \textit{Intuitively, this decrease indicates that some subsets are important and sensitive to perturbation, influencing the output significantly, while other subsets of features do not contribute to robustness}.  



\section{Cardinality Benchmarking} As show in Table \ref{tab:cardinality}, by using our method, we show that \vitax{} generates useful cardinality for evaluating and benchmarking other explanation methods, and it helps quantify feature importance variability across class transitions, indicating the proportion of features significantly contributing to explanations under $\epsilon$-perturbation. Table \ref{tab:cardinality} shows original sample classes on the vertical axis and target classes $\mathbf{t}$ on the horizontal axis, with a dashed line denoting the absence of explanation of itself. We randomly selected 100 correctly predicted test samples and calculated the average percentage of cardinality in MNIST using the same model, quantifying feature importance variability across class transitions. For example, explaining from class `0' to target classes `2', `3', `6', `7', and `9' shows consistent cardinality around 22\%, while transitioning from class `1' to other target classes shows a reduction from 22\% to around 8\%. The use of cardinality as a benchmark promotes a standardized approach for researchers, aiding in the selection and evaluation of features influenced by $\epsilon$-perturbation. \textit{This reduction suggests that cardinality proportion is vital for understanding sample robustness across classes, making it an potentially effective metric for evaluating and benchmarking explanation methods.} 


\clearpage
\section{Model Specifications} \label{appendix:models}
\tabcolsep=0.11cm
\begin{figure}[ht]
  \begin{minipage}{0.45\textwidth}
    \centering
    \captionof{table}{Structure for the MNIST MLP}
    \label{tab:structure:mnist_mlp}
    \begin{tabular}{lll}
      \toprule
      Type           & Parameters     & Activation \\
      \midrule
      Input          & \(28 \times 28 \times 1\) & -          \\
      Flatten        & -              & -          \\
      Fully Connected & 128            & ReLU       \\
      Fully Connected & 64             & ReLU       \\
      Fully Connected & 10             & softmax          \\
      \bottomrule
    \end{tabular}
  \end{minipage}%
  \begin{minipage}{0.45\textwidth}
    \centering
    \captionof{table}{Structure for CNN}
    \label{tab:structure:gtsrb_cnn}
    \begin{tabular}{lll}
      \toprule
      Type             & Parameters       & Activation \\
      \midrule
      Input            & \(28 \times 28 \times 3(1)\) & -          \\
      Convolution      & \(3 \times 3 \times 8\)   & ReLU       \\
      AVG Pool & \(2 \times 2\)            & -          \\
      Flatten          & -                         & -          \\
      F.C.  & \(43(9)(26)\)                    & softmax          \\
      \bottomrule
    \end{tabular}
  \end{minipage}
\end{figure}

\begin{figure}[ht]
  \begin{minipage}{0.45\textwidth}
    \centering
    \captionof{table}{Structure of the TaxiNet MLP}
    \label{tab:structure:taxinet_mlp}
    \begin{tabular}{lll}
      \toprule
      Type             & Parameters               & Activation \\
      \midrule
      Input            & \(27 \times 54 \times 1\) & -          \\
      Flatten          & -                        & -          \\
      Fully Connected  & 20                       & ReLU       \\
      Fully Connected  & 1                        & -          \\
      \bottomrule
    \end{tabular}
  \end{minipage}%
  \begin{minipage}{0.45\textwidth}
    \centering
    \captionof{table}{Structure of the small MLP}
    \label{tab:structure:small_mlp}  
    \begin{tabular}{lll}
      \toprule
      Type             & Parameters               & Activation \\
      \midrule
      Input            & \(28 \times 28 \times 1\) & -          \\
      Flatten          & -                        & -          \\
      Fully Connected  & 15                       & ReLU       \\
      Fully Connected  & 10                       & softmax          \\
      \bottomrule
    \end{tabular}
  \end{minipage}
\end{figure}

\begin{figure}[ht]
  \begin{minipage}{0.45\textwidth}
    \centering
    \captionof{table}{Structure of the MLP Dense}
    \label{tab:structure:mlp_dense}
    \begin{tabular}{lll}
      \toprule
      Type             & Parameters               & Activation \\
      \midrule
      Input            & \(h \times w \times c\) & -          \\
      Flatten          & -                        & -          \\
      Fully Connected  & 10                       & ReLU       \\
      Fully Connected  & 10                        & ReLU          \\
      Fully Connected  & \# classes         & softmax          \\
      \bottomrule
    \end{tabular}
  \end{minipage}%
  \begin{minipage}{0.45\textwidth}
    \centering
    \captionof{table}{Structure of the MLP Dense Large}
    \label{tab:structure:mlp_dense_large}  
    \begin{tabular}{lll}
      \toprule
      Type             & Parameters               & Activation \\
      \midrule
      Input            & \(h \times w \times c\) & -          \\
      Flatten          & -                        & -          \\
      Fully Connected  & 30                       & ReLU       \\
      Fully Connected  & 30                        & ReLU          \\
      Fully Connected  & \# classes         & softmax          \\
      \bottomrule
    \end{tabular}
  \end{minipage}
\end{figure}
\begin{figure}[h!]
  \begin{minipage}{0.45\textwidth}
    \centering
    \captionof{table}{Structure of the CNN Dense}
    \label{tab:structure:cnn_dense}
    \begin{tabular}{lll}
      \toprule
      Type             & Parameters               & Activation \\
      \midrule
      Input            & \(h \times w \times c\)  & -          \\
      Convolution  & \(3 \times 3 \times 4\)      & ReLU       \\
      Convolution  & \(3 \times 3 \times 4\)      & ReLU          \\
      Flatten          & -                        & -          \\
      Fully Connected  &  20                      & ReLU          \\
      Fully Connected  & \# classes               & softmax          \\
      \bottomrule
    \end{tabular}
  \end{minipage}%
  
\end{figure}

\end{document}